\documentclass{article}


\usepackage[final]{neurips_2021}




\usepackage[utf8]{inputenc} 
\usepackage[T1]{fontenc}    
\usepackage{hyperref}       
\hypersetup{citecolor=MidnightBlue}
\usepackage{url}            
\usepackage{booktabs}       
\usepackage{amsfonts}       
\usepackage{nicefrac}       
\usepackage{microtype}      
\usepackage{xcolor}         

\usepackage{amsmath}
\usepackage{amssymb}
\usepackage{amsthm}
\usepackage{enumitem}
\usepackage{graphicx}
\usepackage{subfig}
\usepackage{floatrow}
\usepackage{wrapfig}

\newcommand{\R}{\mathbb{R}}
\newcommand{\E}{\mathbb{E}}
\newcommand{\X}{\mathbb{X}}
\newcommand{\D}{\mathcal{D}}
\newcommand{\BMSEI}{\textsc{B-MS-EI}}
\newcommand{\indicate}[1]{\mathbf{1}_{\{#1\}}}
\allowdisplaybreaks


\DeclareMathOperator*{\argmax}{argmax}


\newtheorem{theorem}{Theorem}

\newtheorem{proposition}{Proposition}

\title{Multi-Step Budgeted Bayesian Optimization\\ with Unknown Evaluation Costs}

%

\author{%
  Raul Astudillo\\
  Cornell University\\
  \texttt{ra598@cornell.edu}\\
  \And
  Daniel R. Jiang\\
  Facebook\\
  \texttt{drjiang@fb.com}\\
  
  \And
  Maximilian Balandat\\
  Facebook\\
  \texttt{balandat@fb.com}\\
  
  \AND
  Eytan Bakshy\\
  Facebook\\
  \texttt{ebakshy@fb.com}

  \And
  Peter I. Frazier\\
  Cornell University\\
  \texttt{pf98@cornell.edu}\\

}

\begin{document}

\maketitle

\begin{abstract}
Bayesian optimization (BO) is a sample-efficient approach to optimizing costly-to-evaluate black-box functions. Most BO methods ignore how evaluation costs may vary over the optimization domain. However, these costs can be highly heterogeneous and are often unknown in advance. This occurs in many practical settings, such as hyperparameter tuning of machine learning algorithms or physics-based simulation optimization. Moreover, those few existing methods that acknowledge cost heterogeneity do not naturally accommodate a budget constraint on the total evaluation cost. This combination of unknown costs and a budget constraint introduces a new dimension to the exploration-exploitation trade-off, where learning about the cost incurs the cost itself. Existing methods do not reason about the various trade-offs of this problem in a principled way, leading often to poor performance. We formalize this claim by proving that the expected improvement and the expected improvement per unit of cost, arguably the two most widely used acquisition functions in practice, can be arbitrarily inferior with respect to the optimal non-myopic policy. To overcome the shortcomings of existing approaches,  we propose the \emph{budgeted multi-step expected improvement}, a non-myopic acquisition function that  generalizes classical expected improvement to the setting of heterogeneous and unknown evaluation costs. Finally, we show that our acquisition function outperforms existing methods in a variety of synthetic and real problems. 
\end{abstract}

\section{Introduction}

\looseness-1 Bayesian optimization (BO) \citep{shahriari2015survey,frazier2018tutorial} is a family of algorithms for optimizing black-box functions that performs well when the number of evaluations is limited \citep{Snoek2012ML,calandra2016bayesian,griffiths2020constrained}. However, most BO algorithms ignore the fact that the cost of evaluating the black-box objective function may vary substantially across the optimization domain and is often unknown. Problems with this feature arise commonly in practice. For instance, in the context of hyperparameter optimization of machine learning algorithms \citep{swersky2013multi, wu2020practical}, certain values of hyperparameters such as the learning rate may yield longer training times. Similarly, in materials design and robotics, simulation experiments can take longer for certain parameter configurations \citep{field1999practical}. 
 Figure~\ref{fig:lda_costs_hist} illustrates heterogeneity in evaluation costs from benchmark problems used in this paper, which can vary by an order of magnitude. Failing to account for these heterogeneous evaluation costs can lead to evaluating an expensive point when another less expensive one would provide equal benefit towards finding the optimum.

\begin{figure}
  \centering
 \includegraphics[width=0.32\textwidth]{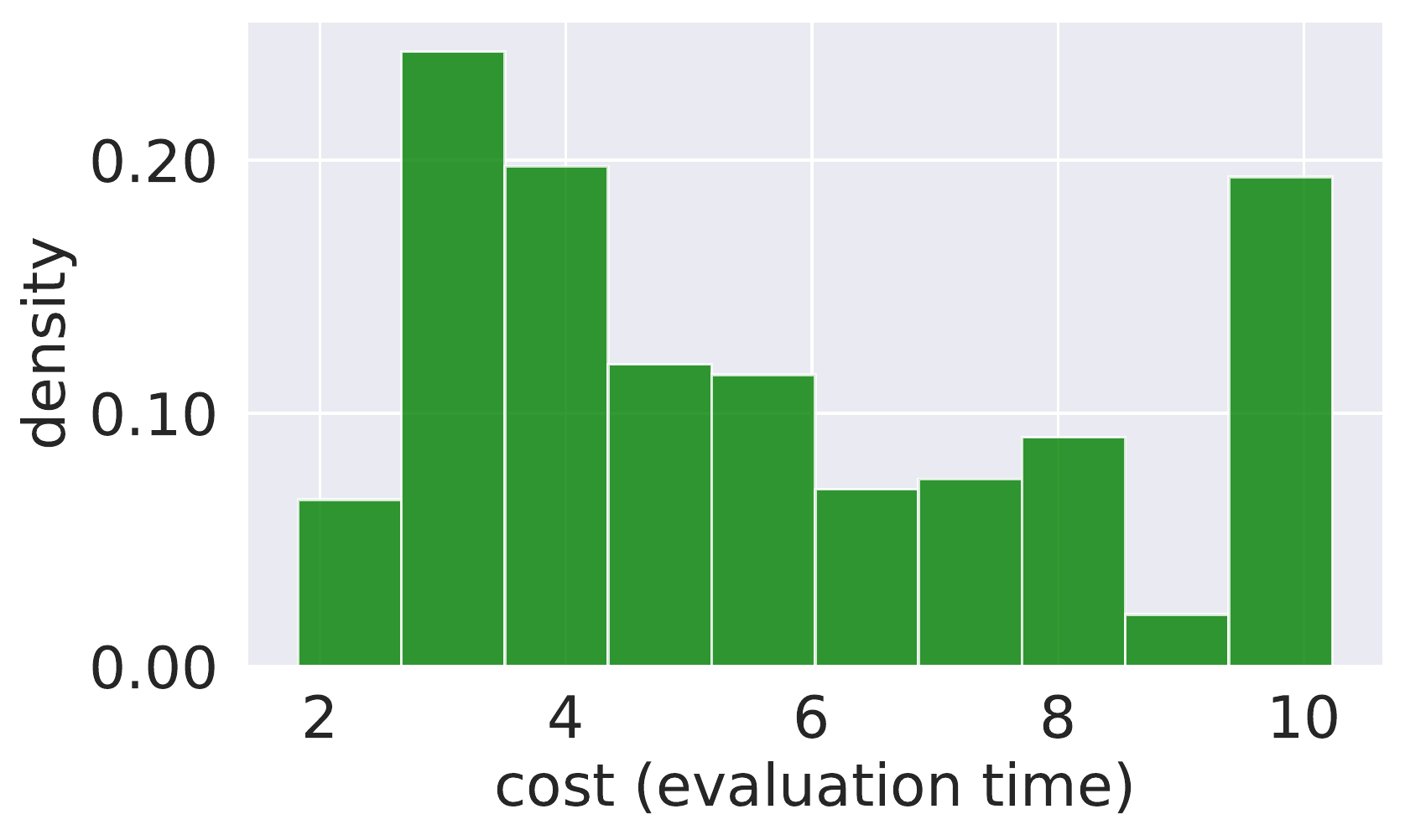}
 \includegraphics[width=0.32\textwidth]{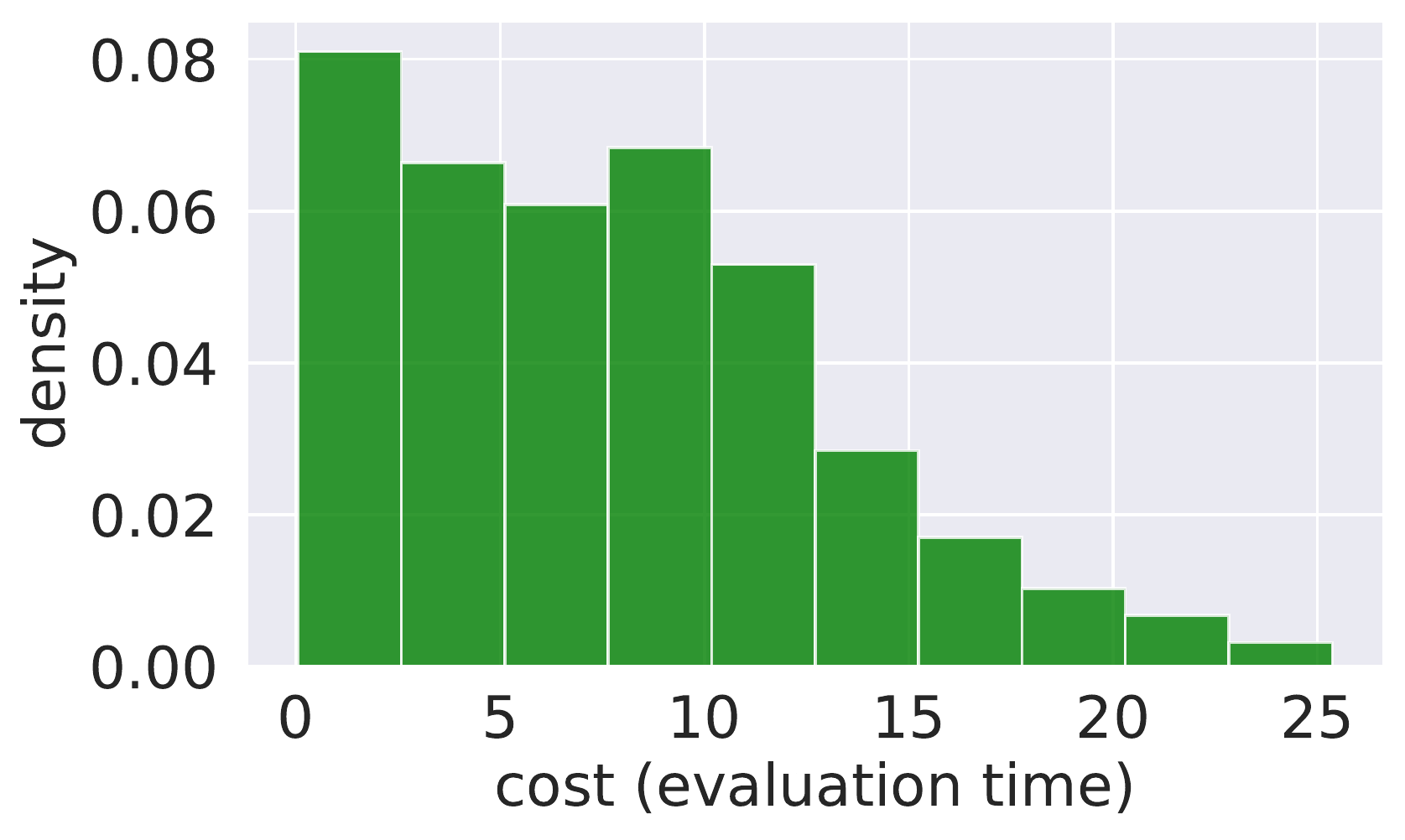}
 \includegraphics[width=0.32\textwidth]{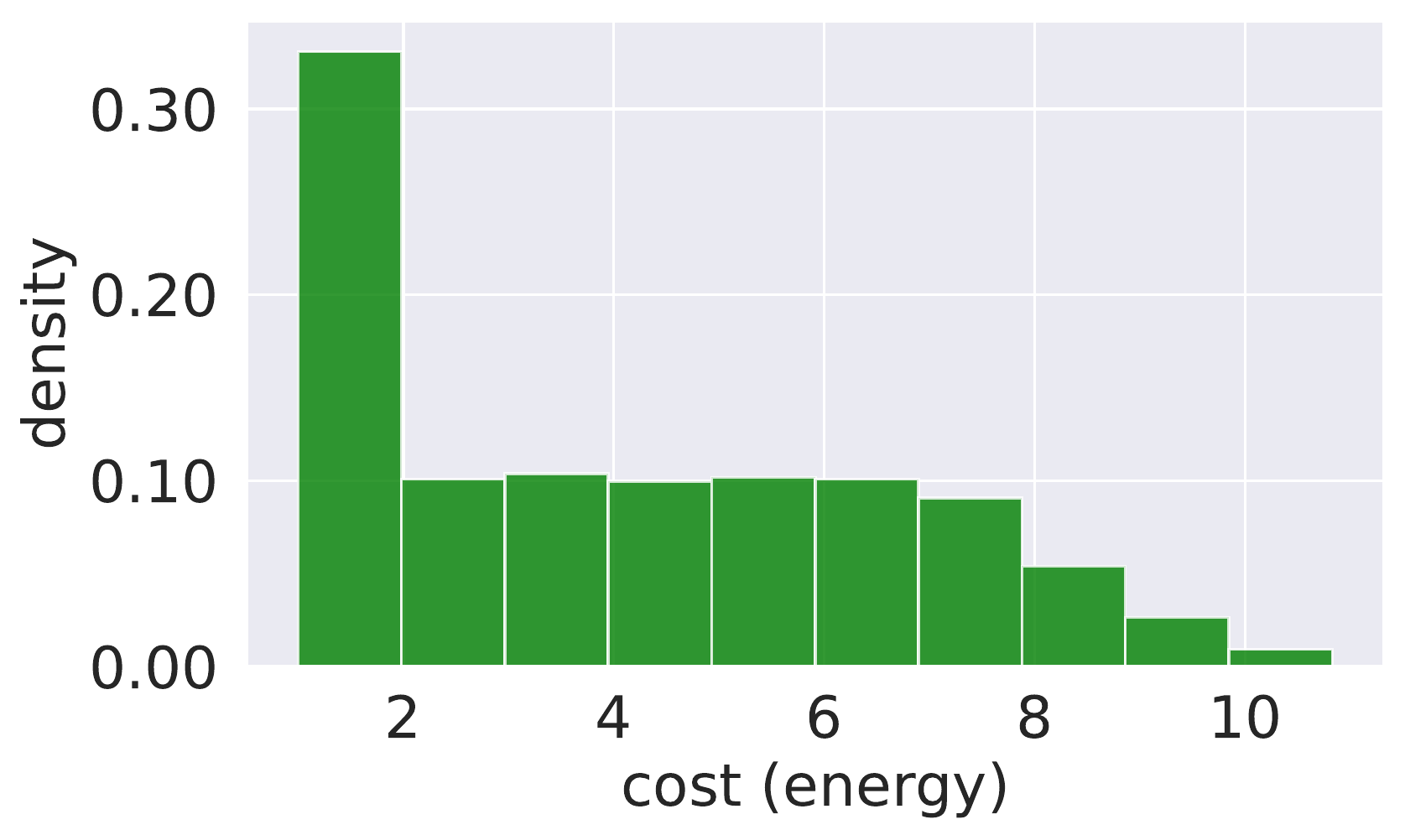}
   \vspace{-5pt}
  \caption{Evaluation times of the latent Dirichlet allocation, random forest, and energy-aware robot pushing benchmark problems (described later in Section~\ref{sec:description_benchmark}).} \label{fig:lda_costs_hist}
\end{figure}

 We consider \emph{budgeted} BO of a black-box objective function, whose evaluation costs are unknown and possibly heterogeneous across the domain. The goal is to find a point with the largest possible objective value by querying the objective function at a sequence of adaptively chosen points, where the total evaluation cost is subject to a budget constraint (this cost only affects evaluation, and not a point's quality upon implementation).
\looseness-1 
While some existing approaches do address heterogeneous evaluation costs, all do so heuristically, e.g. by maximizing a traditional cost-agnostic acquisition function divided by the cost of evaluation \citep{Snoek2012ML,poloczek2017multi,wu2020practical,lee2020cost}, or by rolling out a heuristic base policy \citep{lee2021}. Importantly, most of these approaches accommodate neither budget constraints nor uncertainty about the cost function as part of the exploration-exploitation trade-off. As we argue theoretically and demonstrate through experiments, this can lead to poor performance. A notable exception is the concurrent work of \cite{lee2021}, which introduces a budget-aware non-myopic acquisition function based on rollout of a heuristic base policy. This work appeared while the present paper was under review. 


\textbf{Main Contributions.} Motivated by the above shortcomings in existing work, we provide a principled approach to budgeted BO with unknown and potentially heterogeneous evaluation costs. Our main contributions are:
\begin{itemize}[itemsep=2pt,topsep=1pt,leftmargin=20pt]
\item We propose a Markov decision process (MDP) formulation of the budgeted BO problem with unknown and heterogeneous evaluation costs. Our formulation allows for a random time horizon (i.e., the last time before the budget is depleted),  
going beyond the fixed-horizon MDPs formulated in existing work on non-myopic BO.
\item 
\emph{Budgeted multi-step expected improvement} (\BMSEI{}), a novel look-ahead acquisition function that generalizes classical expected improvement to the budgeted cost-heterogeneous setting. \BMSEI{} can be seen as a principled approximation of the optimal policy of our MDP. 
\item We prove that expected improvement (EI) and its cost-normalized variant, two popular existing approaches, can be arbitrarily inferior with respect to the optimal non-myopic policy.
\item An empirical evaluation on a number of synthetic and real-world experiments demonstrates that \BMSEI{} performs favorably with respect to other acquisition functions that are widely-used in settings with heterogeneous costs.
\end{itemize}

The remainder of this work is organized as follows: In Section \ref{sec:background}, we review related work. Our problem setup is formalized in Section \ref{sec:problem}. In Section \ref{sec:acqf}, we introduce 
\BMSEI{} and discuss its efficient maximization via one-shot multi-step Monte Carlo trees. Numerical experiments are presented in Section \ref{sec:experiments}. Finally, we discuss directions of future work and conclude in Section \ref{sec:conclusion}.

\section{Related Work}
\label{sec:background}
Our work falls within the BO framework \citep{frazier2018tutorial}, whose origins date back to the seminal work of \cite{kushner1964new}. BO has been successful in a wide range of applications, such as hyperparameter tuning of machine learning algorithms \citep{Snoek2012ML, wu2020practical}, materials design \citep{zhang2020bayesian}, drug discovery \citep{griffiths2020constrained}, and robot locomotion \citep{calandra2016bayesian, wang2017max}. 

Within the BO literature, the works most closely related to ours are those that acknowledge the existence of costs for evaluating the objective function that are heterogeneous across the search space and aim to devise algorithms that are cost-aware. Much of this work has occurred in the multi-fidelity setting \citep{swersky2013multi,kandasamy2016gaussian,kandasamy2017multi, poloczek2017multi, song2019general, wu2020practical}, i.e., where cheaper approximations of the objective function are available. The only exceptions known to us are \cite{Snoek2012ML}, \cite{lee2020cost}, and \cite{lee2021}, which consider the single-fidelity setting.


\cite{Snoek2012ML} proposes the \textit{expected improvement per unit of cost (EI-PUC)}, i.e, $\mathrm{EI}(x) / c(x)$\footnote{This expression is for the case when $c(x)$ is known. When $c(x)$ is unknown and learned, then either the expectation is taken over the distribution of improvement to cost ratios (as we do in our experiments) or the denominator is replaced by the mean cost.} where $c(x)$ is the cost of evaluating the objective at $x$ and $\mathrm{EI}(x)$ is the expected improvement. \cite{lee2020cost} proposes a simple variation called the \textit{expected improvement per unit of cost with cost cooling (EI-PUC-CC)}. EI-PUC-CC is defined by $\textnormal{EI}(x)/c(x)^\nu$, where $\nu$ is the ratio between the current remaining and initial budgets. The intuition behind the cost exponent is that evaluating points with high cost should be discouraged early in the BO loop (when $\nu \approx 1$) and accommodated as the budget is consumed and $\nu$ decreases to $0$. However, neither EI-PUC nor EI-PUC-CC 
consider uncertainty in the cost or measure the budget-dependent value of information in a principled way. 

Work concurrent to ours, \cite{lee2021}, also tackles budgeted BO with heterogeneous costs using a non-myopic strategy. However, our work differs in both the model and solution method. \cite{lee2021} uses a finite-horizon constrained MDP, while our model is an MDP with a random horizon.
We argue that the
random horizon formulation is more natural: the formulation of \cite{lee2021} requires the addition of a zero reward, zero cost state to accommodate trajectories with a small number of evaluations. Within this formulation, \cite{lee2021} proposes a rollout acquisition function with a particular heuristic base policy: $h-1$ steps of EI-PUC followed by a last step of EI, where $h$ is the number of look-ahead steps performed. The acquisition function is essentially a single step of policy improvement over the ``EI-PUC followed by EI'' heuristic \citep{sutton2018reinforcement}. In contrast, our acquisition function aims to directly approximate the optimal policy.



\looseness-1 The approach of dividing a cost-agnostic acquisition function by some cost term is widely used in practice for addressing heterogeneity in costs, and  
is closely related to the use of ``value divided by cost'' in knapsack problems \citep{badanidiyuru2013bandits}.
In the knapsack problem, one selects items to include into a knapsack to maximize the sum of the selected items' values subject to a budget constraint on the sum of the items' costs.
In this setting, myopically adding items to the knapsack that maximize value divided by cost has strong theoretical guarantees: 
this algorithm provides at least $1/2$ the optimal value \citep{williamson2011design}.
However, this theoretical guarantee relies on the additive nature of value in the knapsack problem. In contrast, in BO, the value obtained from multiple evaluations is the maximum of the values of the evaluations, not their sum. Indeed, we show that in this setting the ``value divided by cost'' approach can perform arbitrarily worse than the optimal policy.

\looseness-1 Heterogeneous evaluation costs have also been considered in the multi-armed bandits literature \citep{badanidiyuru2013bandits,xia2015thompson,xia2016budgeted}. These works develop algorithms based on optimistic policies that maximize some form of reward-to-cost ratio. As mentioned above, this type of policy is sensible when the measure of performance is the cumulative regret but is not appropriate in an optimization or ``best-arm identification'' setting.

Our work is also closely related to non-myopic BO \citep{gonzalez2016glasses, lam2016bayesian, yue2020non, jiang2020binoculars, lee2020efficient, jiang2020efficient,lee2021}, a class of acquisition functions that account for future evaluations when quantifying a point's acquisition value. To the best of our knowledge, the work of \cite{lee2021} discussed above is the only one among these that is able to handle heterogeneous evaluation costs.


\section{Budgeted Bayesian Optimization with Unknown Evaluation Costs}
\label{sec:problem}
We now formally state the problem of budgeted BO with unknown evaluation costs. Given a compact optimization domain $\X\subset\R^d$, our goal is to find a point $x\in\X$ with the largest possible objective value by querying the black-box objective function, $f:\X \rightarrow \R$, at a sequence of points $\{x_i\}_{i=1}^n$, subject to the constraint $\sum_{i=1}^n c(x_n)\leq B$, where $c(x)$ is the cost of evaluating $f$ at $x$ and $B$ is the evaluation budget. The cost observation $c(x)$ is revealed immediately after the evaluation of $f(x)$ is performed. However, the actual cost function $c$ is unknown. 
As is typical in BO, we endow $f$ and $c$ with a joint prior distribution, $p$. An \emph{observation} in our setting is a triple $(x_i, y_i, z_i) \in \X\times\R\times \R_{>0}$, where $y_i$ is an observation of the objective $f$ at $x_i$, and $z_i$ is an observation of the cost $c$ for evaluating $f$ at $x_i$.

\subsection{MDP Formulation}
\looseness-1 The state of our MDP at step $n$ is $\mathcal D_n$, defined as the set of observations so far. These sets are defined recursively by $\D_n = \D_{n-1}\cup {(x_n, y_n, z_n)}$ for $n\ge 1$, where $\D_0$ is a set of initial observations. The joint posterior distribution over $f$ and $c$ given $\D_n$ is denoted by $p(\cdot\mid \D_n)$. The \emph{utility} generated by a particular state $\D_n$ is defined as the maximum observed objective value $u(\D_n) = \textstyle \max_{(x,y,z) \in \D_n} y$. Note that this utility function encodes the fact that \emph{after evaluation}, a point with maximum objective value is desired regardless of its cost. We also let $s(\D_n) = \textstyle \sum_{(x,y,z) \in \D_n} z$ denote the \emph{total cost} of observed points in $\D_n$.

The sets of observations $\D_1, \D_2, \ldots$ are random due to the yet unobserved values of the objective and cost functions. A \emph{policy} $\pi = \{\pi_k\}_{k=1}^\infty$ is a sequence of functions, each mapping sets of observations to points in $\X$, so that $x_k = \pi_k(\D_{k-1})$. Given a set of observations $\D$ such that $s(\D) \le B$ (i.e., there is nonnegative remaining budget), the \emph{value function} of a policy $\pi$ is defined as $V^\pi(\D) = \E^{\pi}\bigl[u(\D_{N_B}) - u(\D_0) \, | \, \D_0 = \D\bigr]$, where the random stopping time $N_B = \sup\{k : s(\D_k) \leq B\}$ is the largest time step $k$ for which the budget constraint is still satisfied. The notation $\E^\pi[ \, \cdot \, ]$ indicates an expectation taken over sequences of observation sets $\D_1, \D_2, \ldots, \D_{N_B}$ selected by a policy $\pi$. For a set $\D$ where $s(\D) > B$ (i.e., budget is exhausted), we define $V^\pi(\D) = 0$. Our goal is to find a policy $\pi$ that maximizes the increase in expected utility:
\begin{equation}
\label{eq:opt_policy}
   V^*(\D) = \sup_{\pi \in \Pi} V^\pi(\D),
\end{equation}
where $\Pi$ is the set of all possible policies. The above problem is well-defined provided that $N_B < \infty$ for all policies $\pi$. This is the case, for example, when $\ln c$ follows a Gaussian process (GP) prior, a modeling choice that we make in our numerical experiments. Since the horizon $N_B$ is random, the formulation (\ref{eq:opt_policy}) can be viewed as a stochastic shortest path problem, rather than a standard finite or discounted infinite horizon dynamic program \citep{bertsekas1991analysis}. Note that at time $k$, the set $\D_k$ contains all past observed costs and fully captures the remaining budget. 
This formulation is capable of the following:
\begin{enumerate}[itemsep=2pt,topsep=1pt,leftmargin=20pt]
    \item Through multi-step planning, it can navigate the trade-off of how to sequence high-cost and low-cost evaluations in order to make the best use of the given budget.
    \item It can reason about uncertainty when planning optimal cost-learning. For example, an exploratory evaluation may be worthwhile in a region with moderate estimated cost and high model uncertainty: the evaluation may reveal the cost is lower than estimated, allowing us to explore the region more fully for low cost.
\end{enumerate}

\subsection{Contrast with Value-to-Cost Ratio Methods}

\looseness-1 It is not surprising that cost-agnostic methods, such as expected improvement (EI), can perform poorly when the evaluation cost is heterogeneous. 
In particular, ignoring cost can lead to evaluating excessively high-cost points, depleting budget and limiting future evaluations. In an attempt to avoid this, past work has focused on using a value-to-cost ratio \citep{Snoek2012ML, swersky2013multi,kandasamy2016gaussian,kandasamy2017multi, poloczek2017multi, song2019general, wu2020practical, lee2020cost}. 

We show here, however, that value-to-cost acquisition functions exhibit a complementary kind of undesirable behavior: they may repeatedly measure excessively low-cost points that are also low-value, leading to poor overall performance. In fact, the performance can be arbitrarily bad compared with an optimal policy. Theorem~\ref{thm:counterexample} demonstrates this formally for the most widely-used of these policies: measuring at the point that maximizes the expected improvement per unit of cost (EI-PUC), and also for cost-agnostic expected improvement (EI).

\begin{theorem}
\label{thm:counterexample}
The approximation ratios provided by the $\mathrm{EI}$ and $\mathrm{EI\text{-}PUC}$ policies are unbounded. That is, for any arbitrarily large $\rho > 0$ and each policy $\pi\in\{\mathrm{EI}, \mathrm{EI\text{-}PUC}\}$, there exists a Bayesian optimization problem instance (a  prior probability distribution over objective and cost functions, a budget, and a set of initial observations $\D_0$) where 
$V^*(\D_0) > \rho \, V^{\pi}(\D_0)$.
\end{theorem}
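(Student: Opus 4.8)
The plan is to prove each case by exhibiting an explicit problem instance on which the named policy is arbitrarily suboptimal; since only existence is required, I will use priors concentrated enough that every policy's trajectory is exactly computable. In both instances the cost function is deterministic, the budget is $B = 1$, and the initial state $\D_0$ fixes the incumbent at $u(\D_0) = 0$. The objective is deterministic for the EI-PUC instance and, crucially, random for the EI instance. Given a target ratio $\rho$, I tune a few scalar parameters so that the policy value is more than a factor $\rho$ below $V^*(\D_0)$.

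For EI-PUC a deterministic two-point instance suffices. Let $\X$ contain a cheap point $a$ with $c(a) = \epsilon$ and value $f(a) = v_a > 0$, and an expensive point $b$ with $c(b) = 1 = B$ and value $f(b) = H \gg v_a$. I choose $\epsilon, v_a$ so that $v_a/\epsilon > H$, which forces EI-PUC to evaluate $a$ first, since its ratio beats the ratio $H/1$ of $b$, even though $a$ yields only a tiny improvement. The point is that $B$ is tight: once $a$ is measured the remaining budget $1 - \epsilon$ no longer affords $b$, and as $a$ offers no further improvement the trajectory is frozen at utility $v_a$ (this holds under either the affordable-restricted or the unrestricted convention, as an over-budget evaluation of $b$ is simply not counted toward $u(\D_{N_B})$). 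The optimal policy instead spends the whole budget on $b$ and attains $H$, so $V^*(\D_0)/V^{\mathrm{EI\text{-}PUC}}(\D_0) = H/v_a$. Taking e.g. $\epsilon = \rho^{-2}$ and $v_a = H/(2\rho)$ (and $\rho > 2$, harmless since $\rho$ may be taken large) gives $v_a/\epsilon = H\rho/2 > H$ and ratio $H/v_a = 2\rho > \rho$.

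For EI the situation is more delicate, because a cost-agnostic policy restricted to affordable points is in fact optimal on any deterministic instance: any point it ever evaluates has cost at most $B$, so its attainable utility equals the largest value among cost-$\le B$ points, which also upper-bounds $V^*$. Hence the EI counterexample must exploit EI's myopia toward the \emph{value of information} rather than cost alone. I place $K$ expensive candidates $p_1, \dots, p_K$, each with cost $1 - \epsilon$, exactly one of which (indexed by a latent $I \sim \mathrm{Unif}\{1,\dots,K\}$) has value $W$ while the others have value $0$, together with one cheap probe $r$ with $c(r) = \epsilon$ whose objective value is injective in $I$ yet nonpositive, e.g. $f(r) = -I$, so that observing $r$ reveals $I$ but never improves the incumbent. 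Because $f(r) \le 0$ forces $\mathrm{EI}(r) = 0$ while $\mathrm{EI}(p_i) = W/K > 0$, the EI policy ignores the probe and gambles on a single candidate; with probability $1 - 1/K$ it draws a zero, after which the tight budget (with $\epsilon < 1/2$) leaves no room for a second candidate, so $V^{\mathrm{EI}}(\D_0) = W/K$. The optimal policy measures $r$ first (cost $\epsilon$), learns $I$, then measures $p_I$ (cost $1-\epsilon$, exactly exhausting the budget) to attain $W$, giving $V^*(\D_0)/V^{\mathrm{EI}}(\D_0) = K$; choosing $K > \rho$ finishes the argument.

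The main obstacles are conceptual rather than computational. The first is recognizing that EI cannot be defeated deterministically, and therefore engineering a prior in which a zero-improvement observation is nonetheless decision-relevant: this is exactly the coupling of $f(r)$ and the $f(p_i)$ through the shared latent $I$, which makes $r$ informative about the location of the optimum without making it attractive to EI. The second is calibrating the budget so that a single wrong evaluation (cheap-but-useless for EI-PUC, expensive-but-unlucky for EI) is irrecoverable. I would complete the proof by writing out the affordability checks along each induced trajectory, confirming that each acquisition function is maximized where claimed (including the degenerate steps where all affordable acquisition values vanish, which do not change the attained utility), verifying the two ratio computations, and noting that because the cheapest cost is $\epsilon > 0$ every policy halts after at most $\lfloor 1/\epsilon\rfloor$ steps, so $N_B < \infty$ and $V^*, V^\pi$ are well defined.
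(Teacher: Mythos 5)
Your argument is sound, and both of your counterexamples take a genuinely different route from the paper's. For EI-PUC, the paper builds a stochastic instance: an independent zero-mean Gaussian prior with roughly $(1+\delta)/\epsilon$ cheap points of variance $\epsilon^2$ and one expensive point of variance $1$; EI-PUC spends the entire budget on the cheap points, and the proof then needs a maximal inequality (via Jensen and the Gaussian moment generating function) showing $\E[\max\{0,y_1,\dots,y_N\}]\le \epsilon\sqrt{2\ln(N+1)}\to 0$, while the optimal policy earns $\E[Z^+]$ from a single expensive evaluation. Your deterministic two-point instance sidesteps all probabilistic estimates by exploiting an infeasibility cliff --- spending $\epsilon$ on the high-ratio cheap point makes the valuable point unaffordable --- which is a legitimately simpler construction since the theorem only demands existence, though it hinges on the knife-edge $c(b)=B$ rather than on EI-PUC systematically squandering budget on low-value queries. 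For EI the divergence is sharper: the paper reuses the Gaussian instance with cheap-point variance $(1-\epsilon)^2$, so EI takes the single expensive draw worth $\E[Z^+]$ while repeatedly sampling cheap points earns on the order of $\sqrt{\ln N}\to\infty$ (via a lower bound on the expected maximum of i.i.d.\ Gaussians); your construction instead targets EI's blindness to pure information, coupling a zero-improvement, zero-EI probe to a latent index that identifies the one valuable expensive candidate. Your preliminary observation that affordability-restricted EI is optimal on every deterministic instance --- which is what forces the randomized construction --- is correct and is a point the paper does not make explicit. Both ratio computations check out under either convention for handling an over-budget query (the paper's $N_B$ truncation simply discards it), so the argument is complete once the deferred affordability bookkeeping along each trajectory is written out.
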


\vspace{-10pt}
\begin{proof}[Sketch of Proof.] \looseness-1 To show that EI-PUC has an unbounded approximation ratio, the detailed proof of Theorem~\ref{thm:counterexample} (provided in Section~\ref{supp:thm1} of the supplementary material) constructs a problem instance with a prior that is independent across a discrete domain with a known cost function. There are two kinds of points: high-cost points with large prior variance; and low-cost points with low variance. To support analysis, all points have the same mean.

The variance of the low-cost points is low enough that spending the entire budget on evaluating them earns only a fraction of the value, in expectation, earned by evaluating a single high cost point. The acquisition function EI-PUC, however, does not understand this. Its greedy nature leads it to overvalue these points. 
Indeed, EI (the numerator of EI-PUC's acquisition function) greedily values improvement relative to the status quo, even though the status quo will likely be surpassed by other later evaluations. This overvalues small improvements like those that result from low-variance points.

EI-PUC spends its entire budget on these low-cost low-variance points, earning almost no value. In contrast, the optimal policy spends its entire budget on a single evaluation of the high-cost point, obtaining substantially more value in expectation. 


For the case of EI, we construct a similar example, with the change that the low-cost points now have variance \emph{only slightly smaller} than that of the high-cost points, while still being significantly cheaper. Here, EI performs a single evaluation of the high-cost point and runs out of budget. 
On the other hand, repeatedly measuring low-cost points is far superior to EI in expectation.
\end{proof}
\vspace{-5pt}

\begin{figure}
\centering
\subfloat[EI-PUC]{%
\begin{tabular}[b]{c}%
\includegraphics[width=0.33\textwidth]{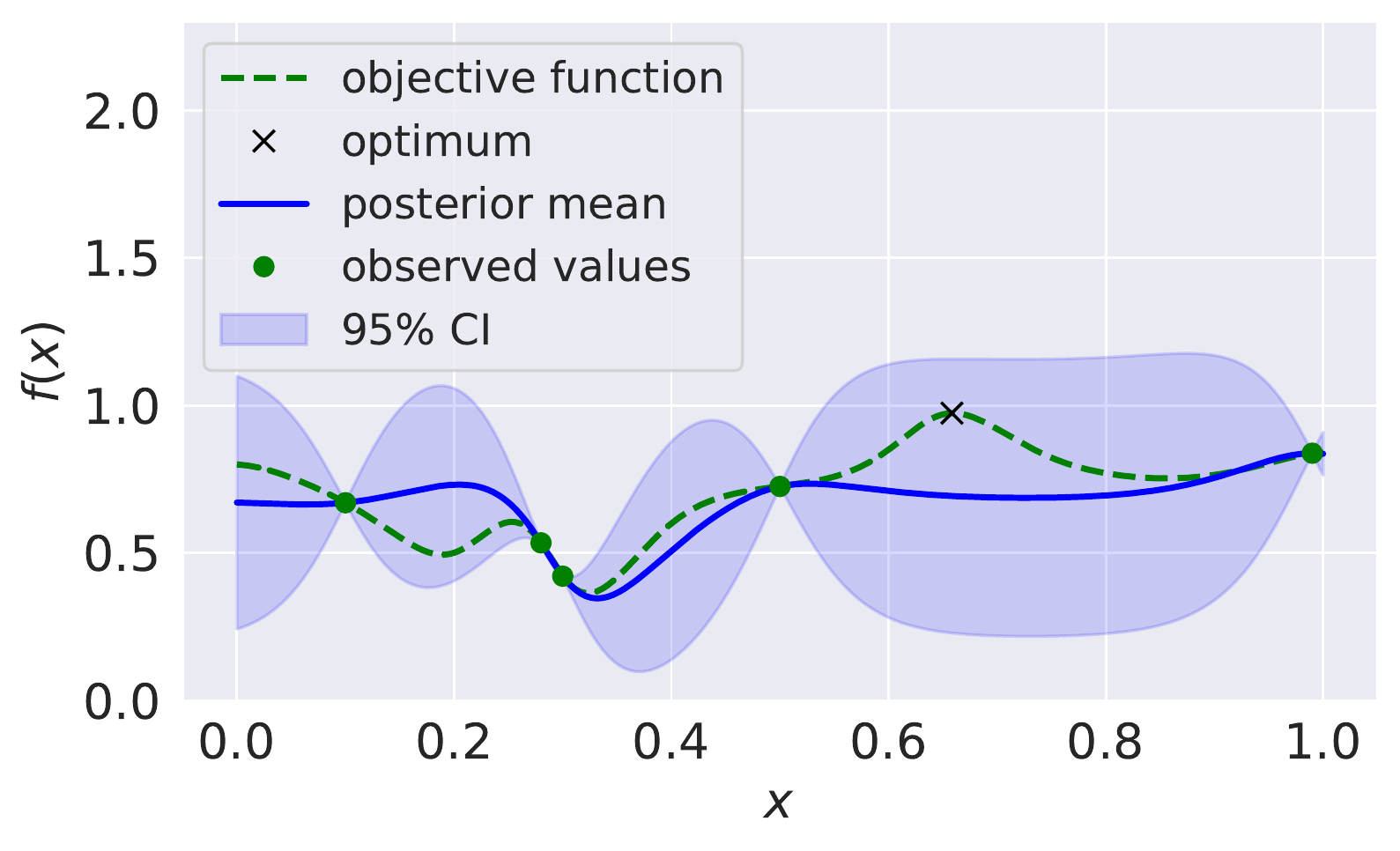}
  \includegraphics[width=0.33\textwidth]{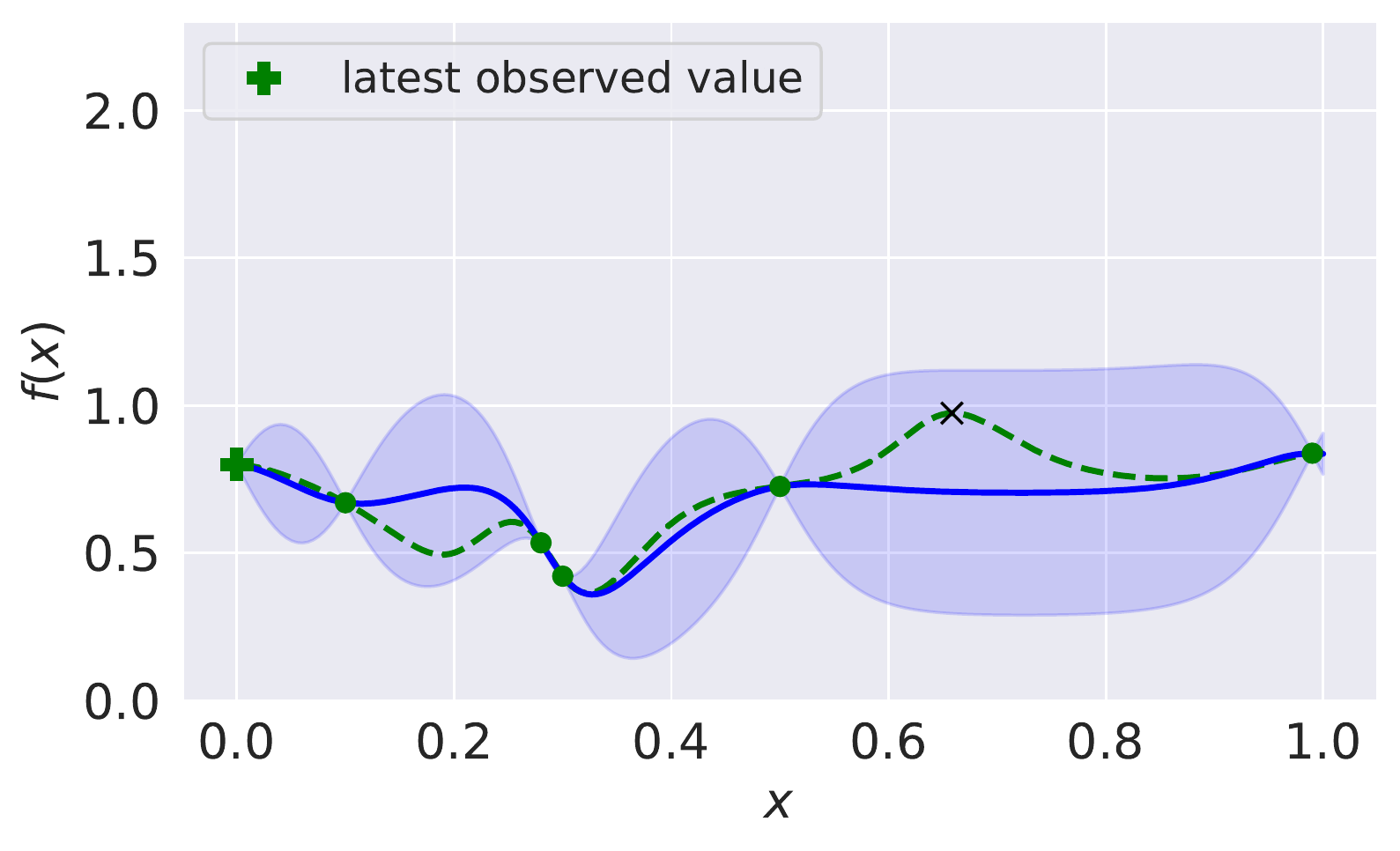}
  \includegraphics[width=0.33\textwidth]{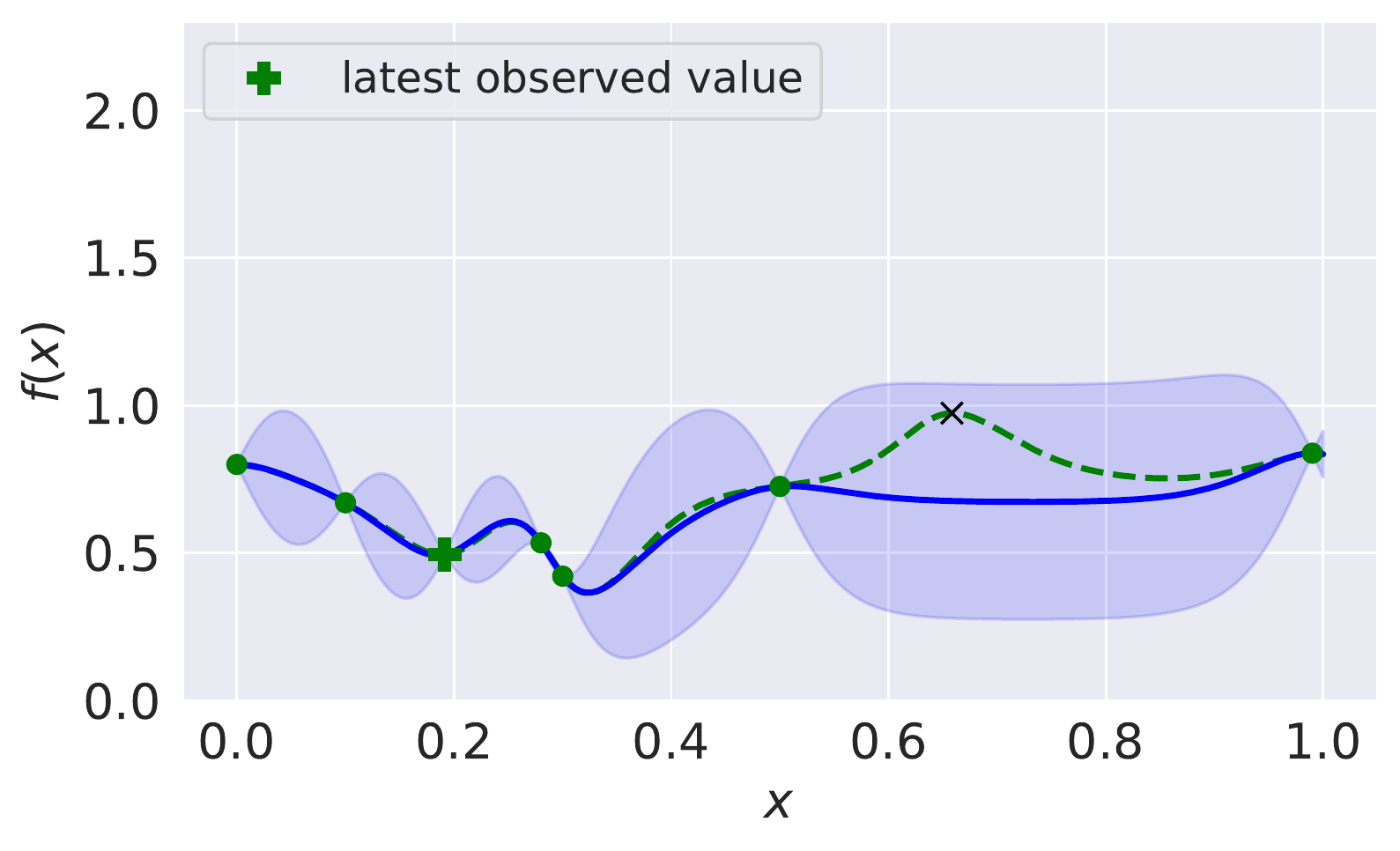}\\
    \includegraphics[width=0.33\textwidth]{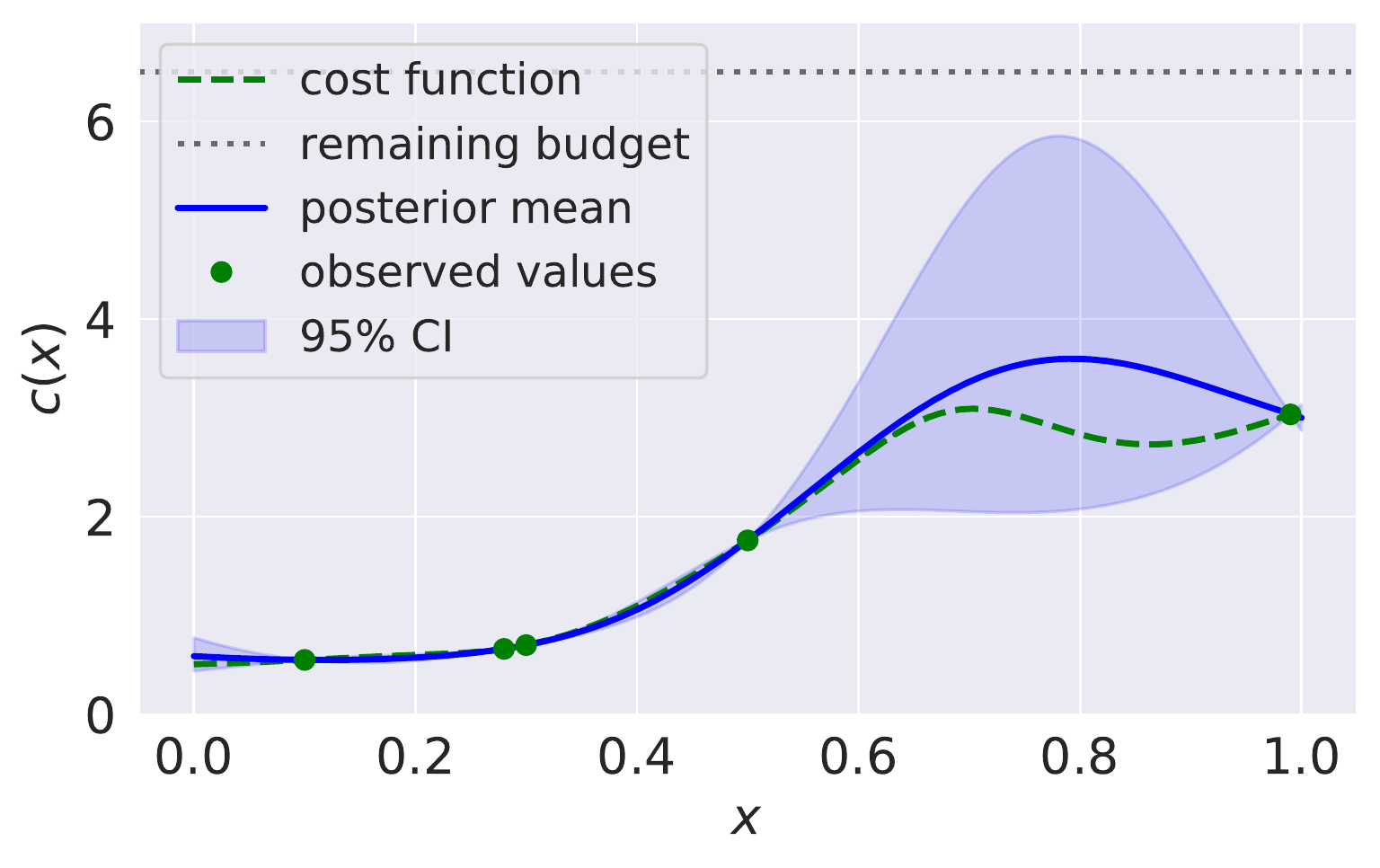}
  \includegraphics[width=0.33\textwidth]{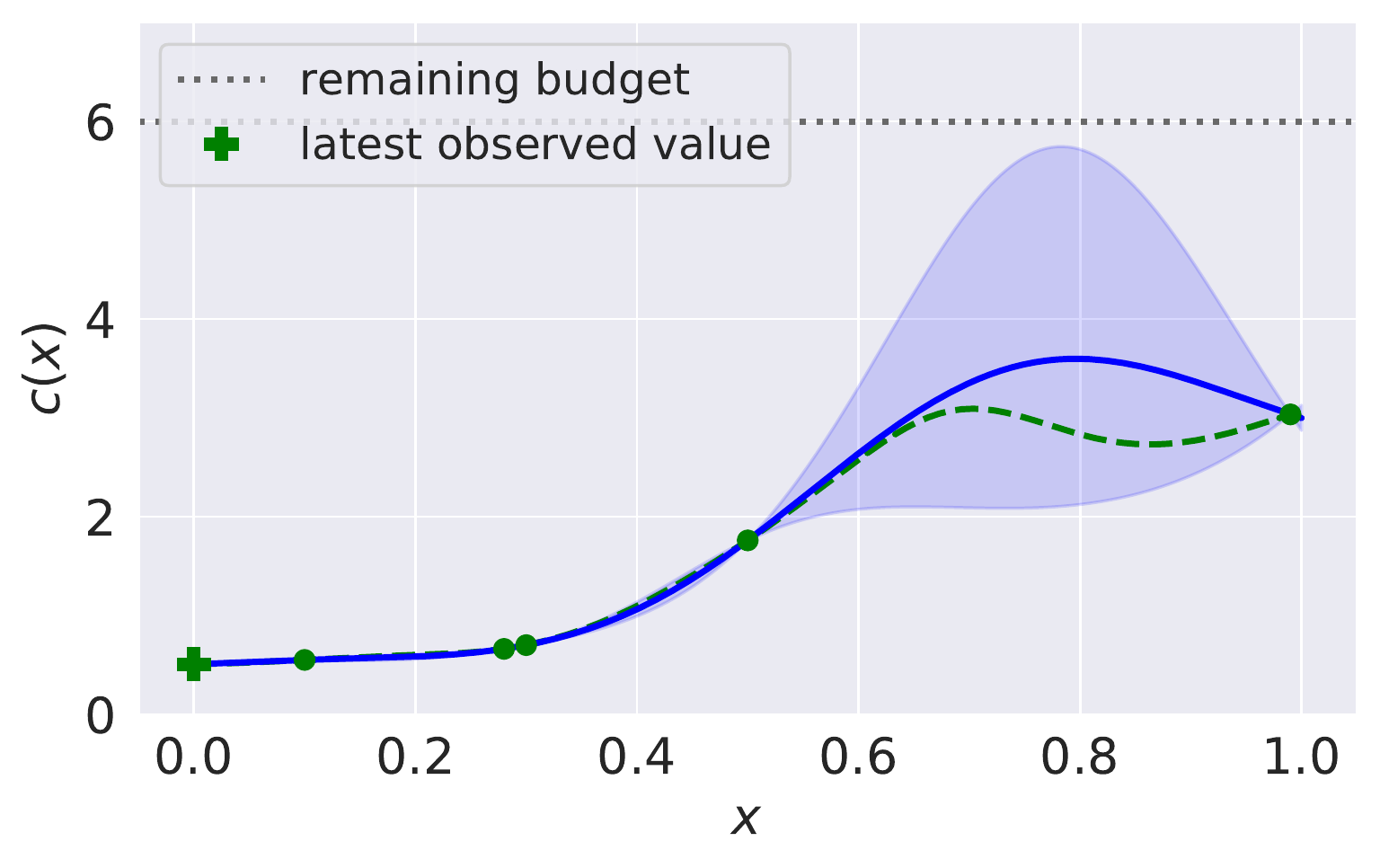}
  \includegraphics[width=0.33\textwidth]{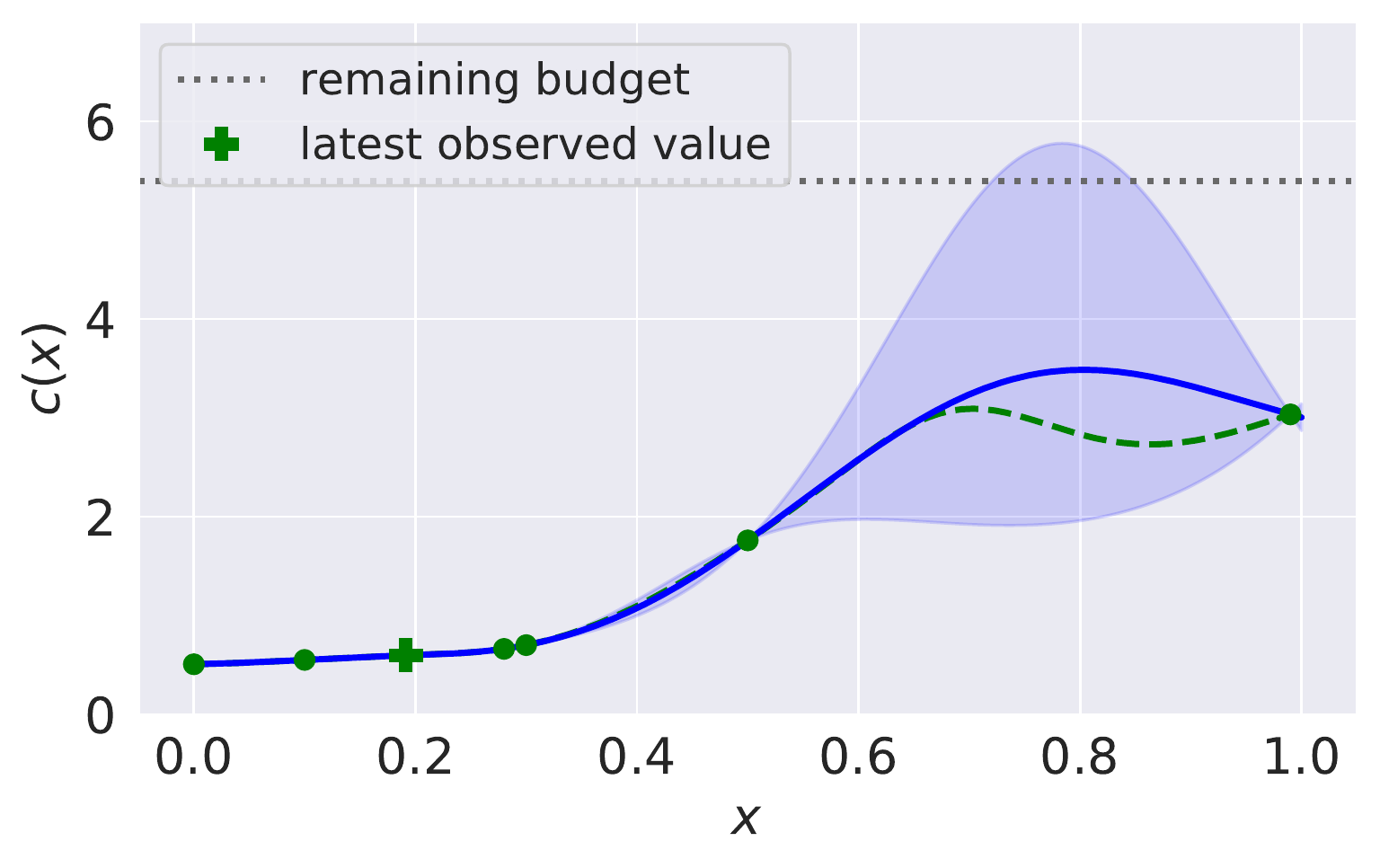}\\
   \includegraphics[width=0.33\textwidth]{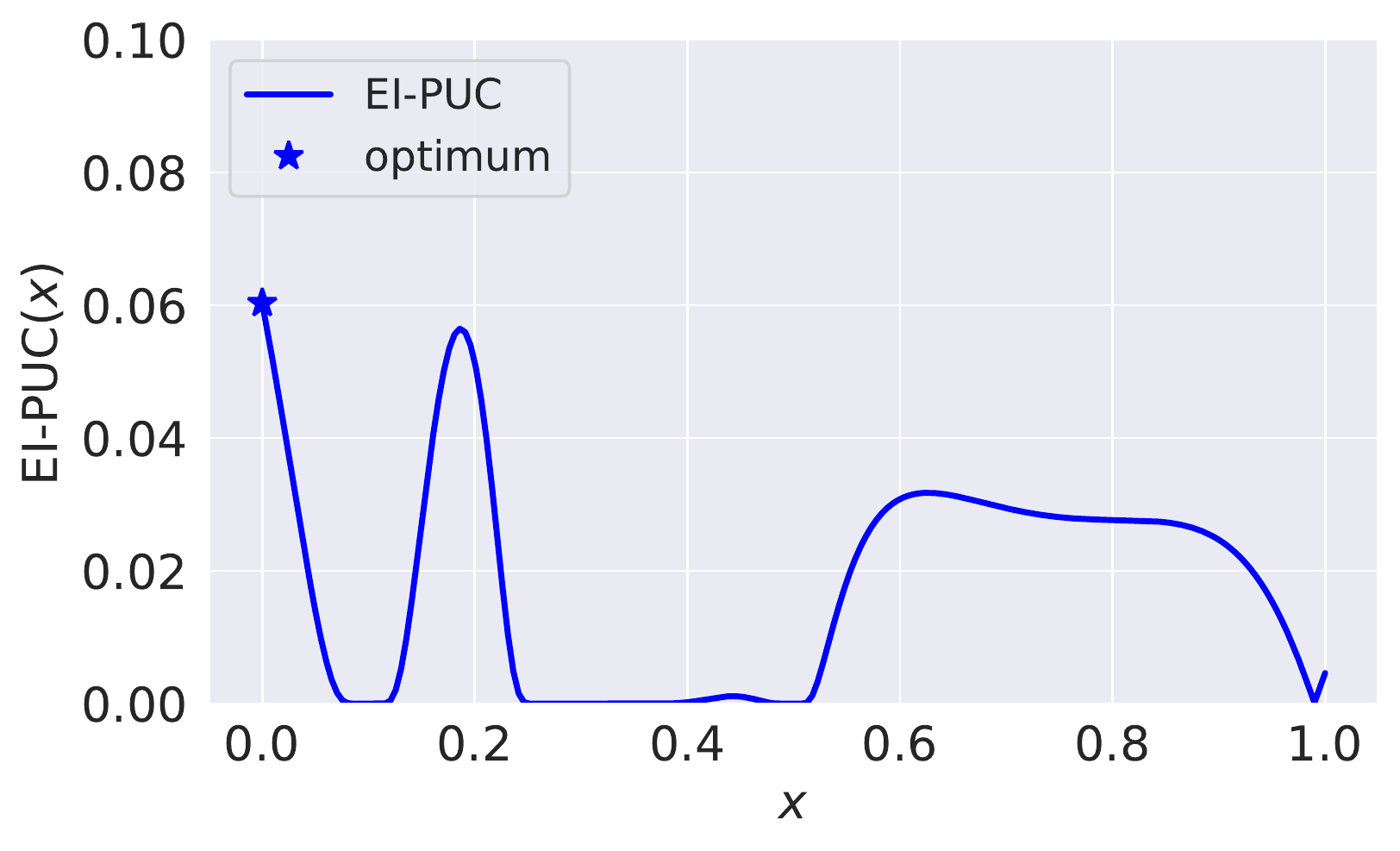}
  \includegraphics[width=0.33\textwidth]{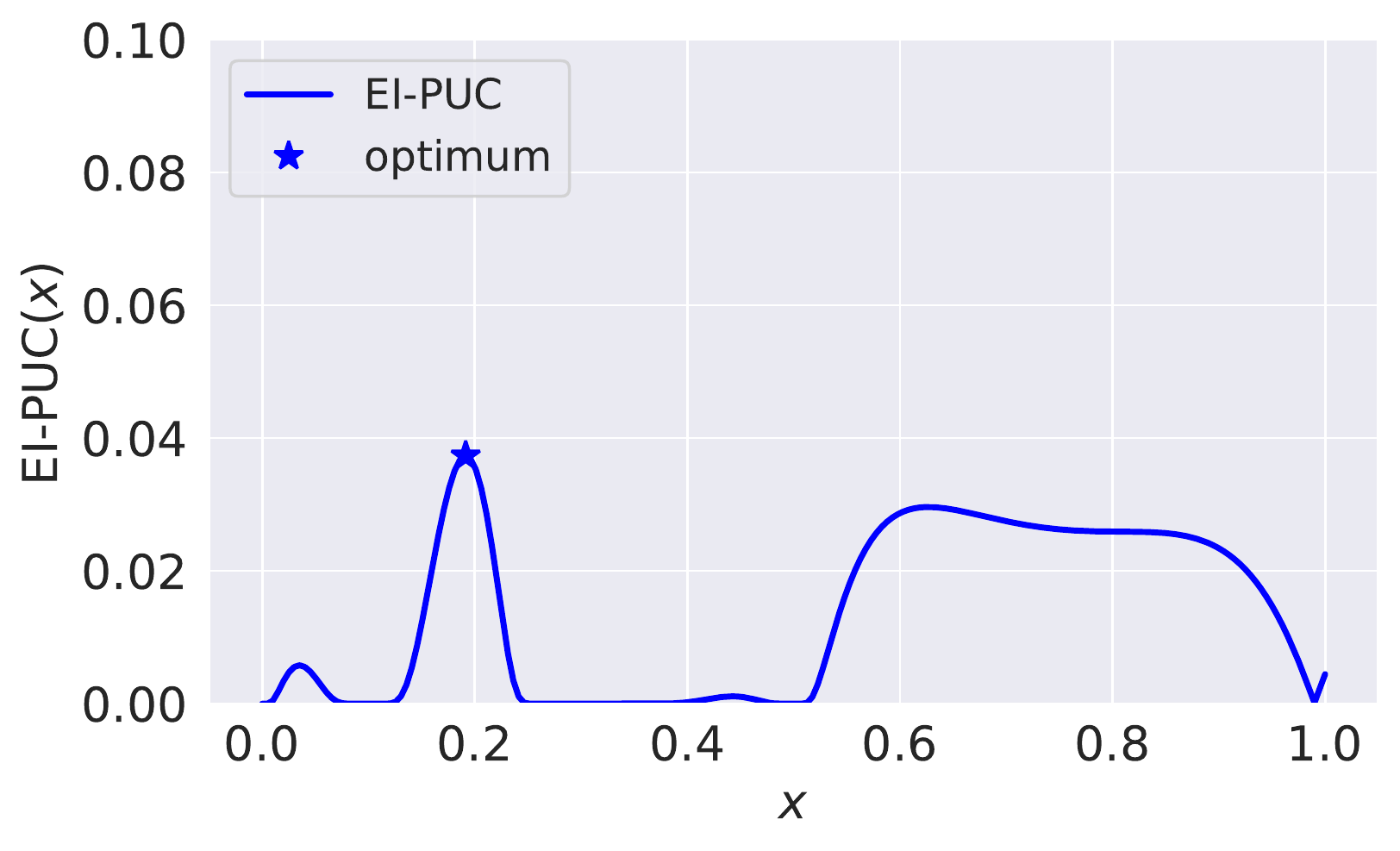}
  \includegraphics[width=0.33\textwidth]{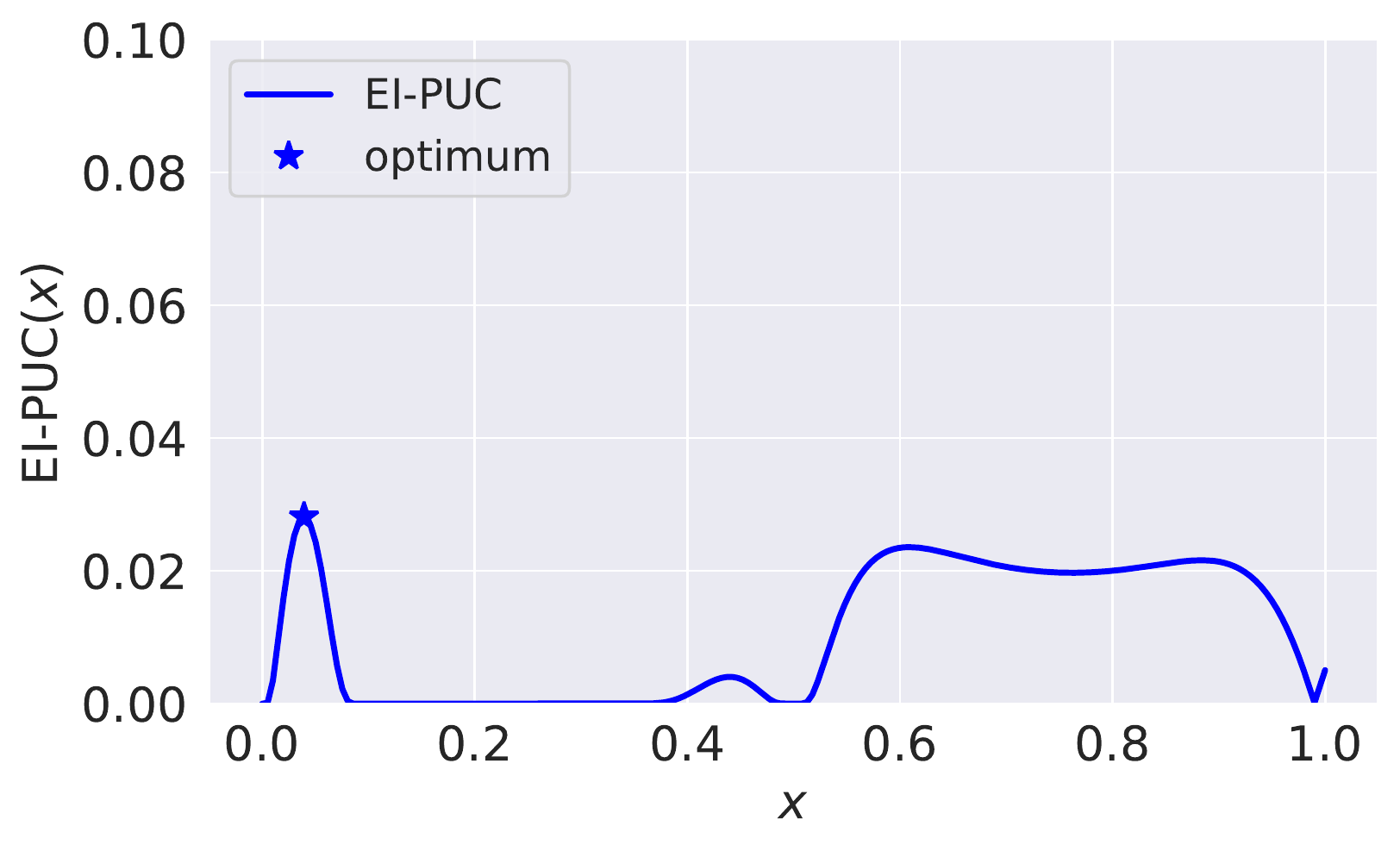}
 \end{tabular}
}\\
\subfloat[B-MS-EI]{%
\begin{tabular}[b]{c}
 \includegraphics[width=0.33\textwidth]{figures/animation/obj0.pdf}
  \includegraphics[width=0.33\textwidth]{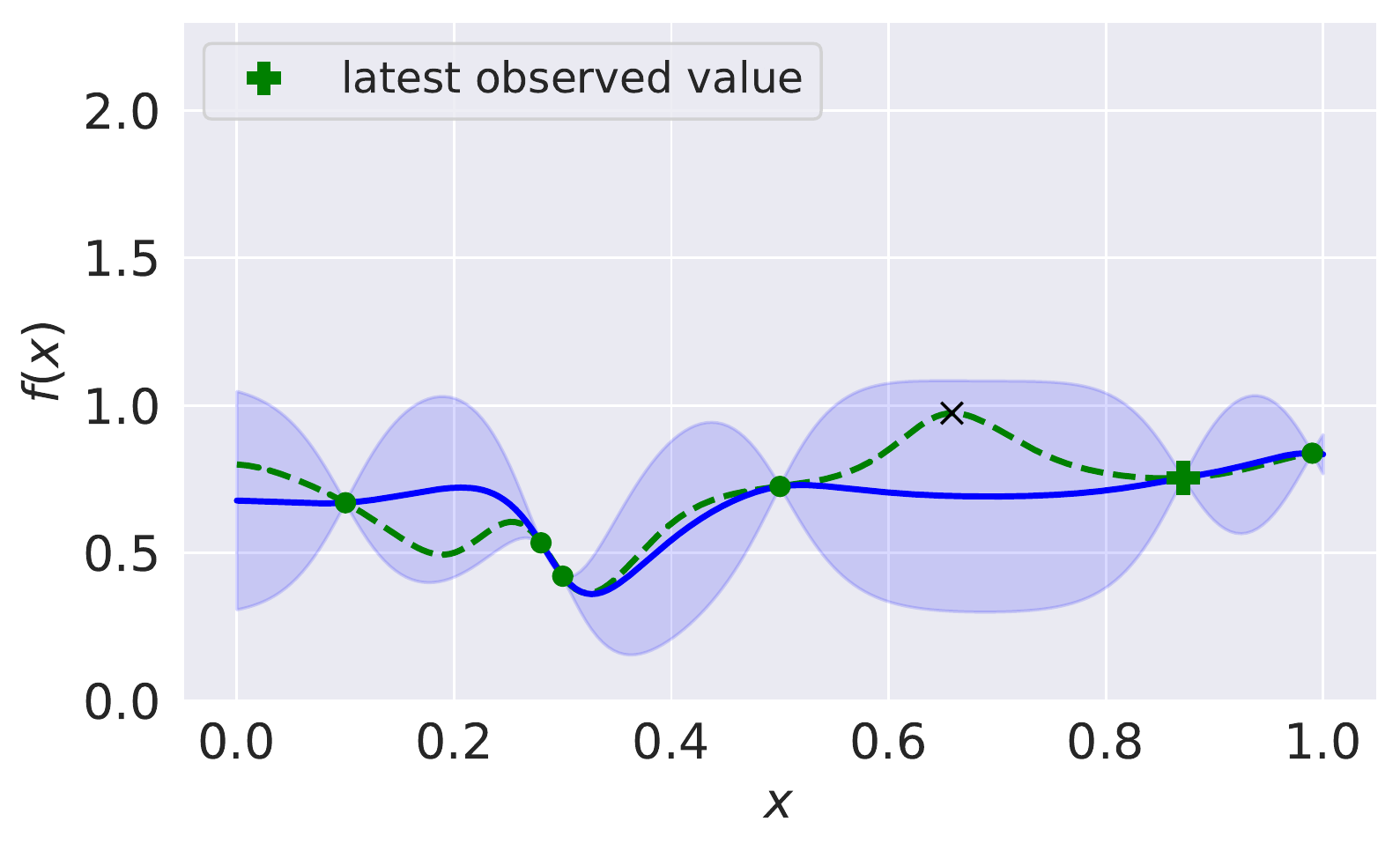}
  \includegraphics[width=0.33\textwidth]{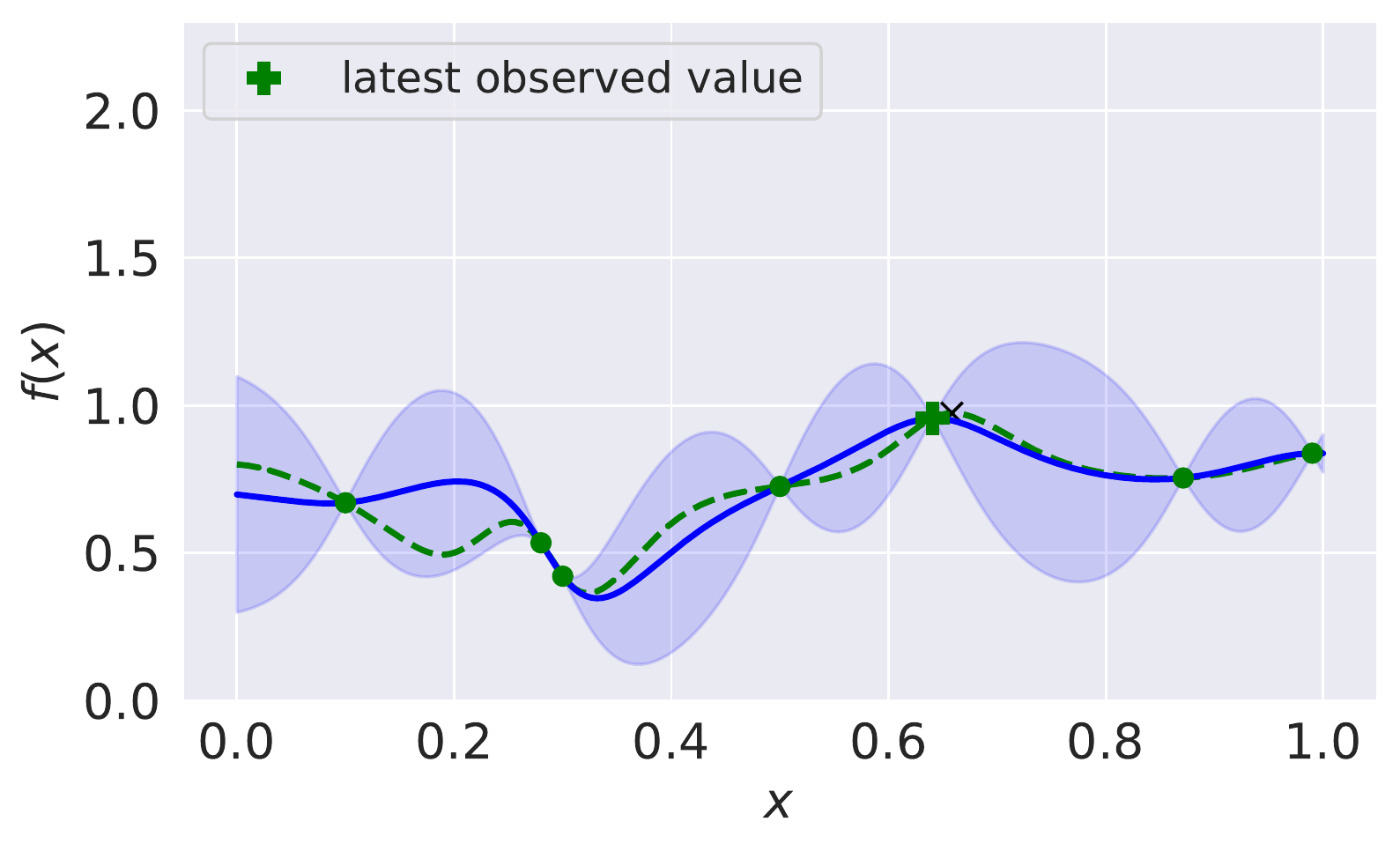}\\
    \includegraphics[width=0.33\textwidth]{figures/animation/cost0.pdf}
  \includegraphics[width=0.33\textwidth]{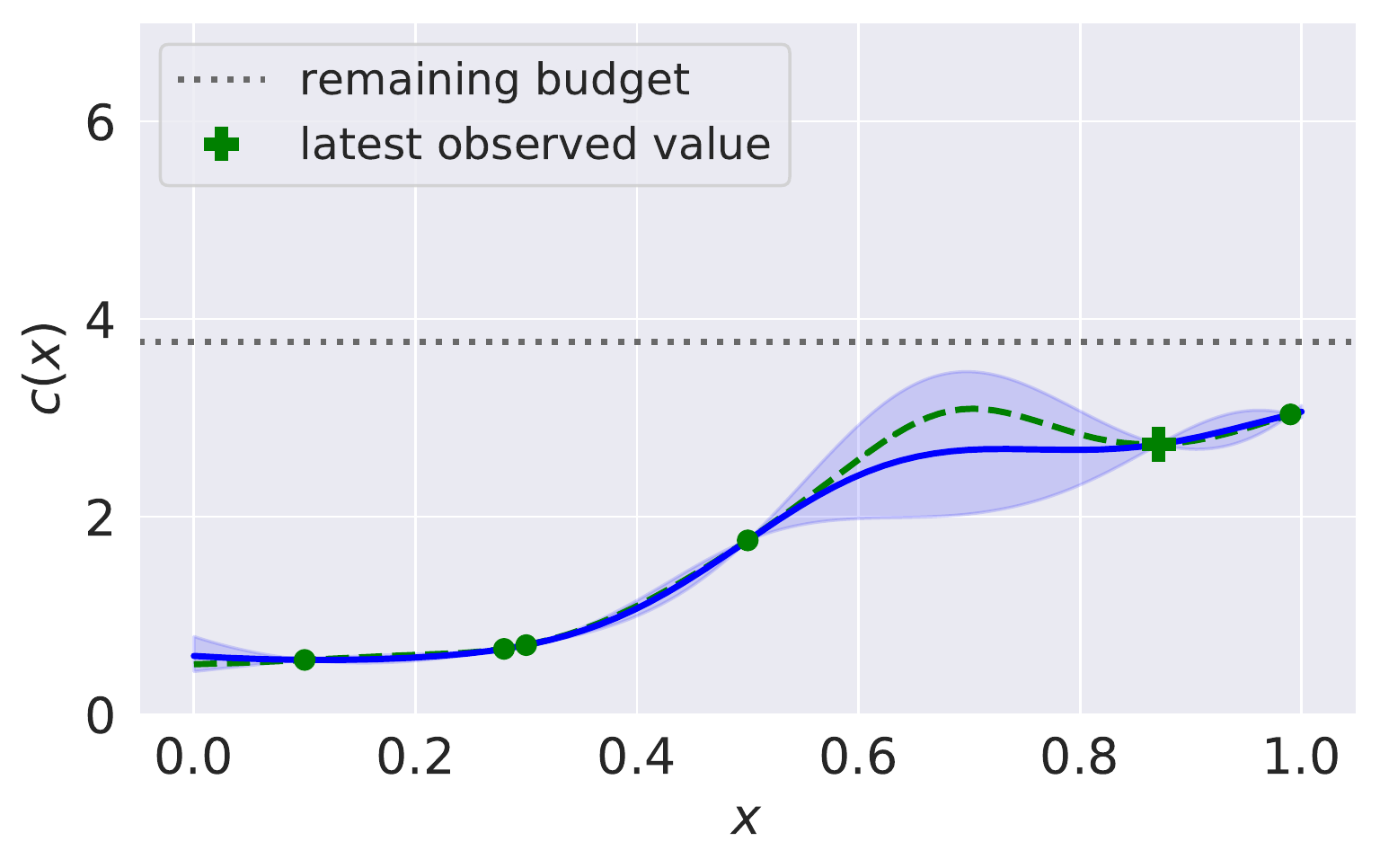}
  \includegraphics[width=0.33\textwidth]{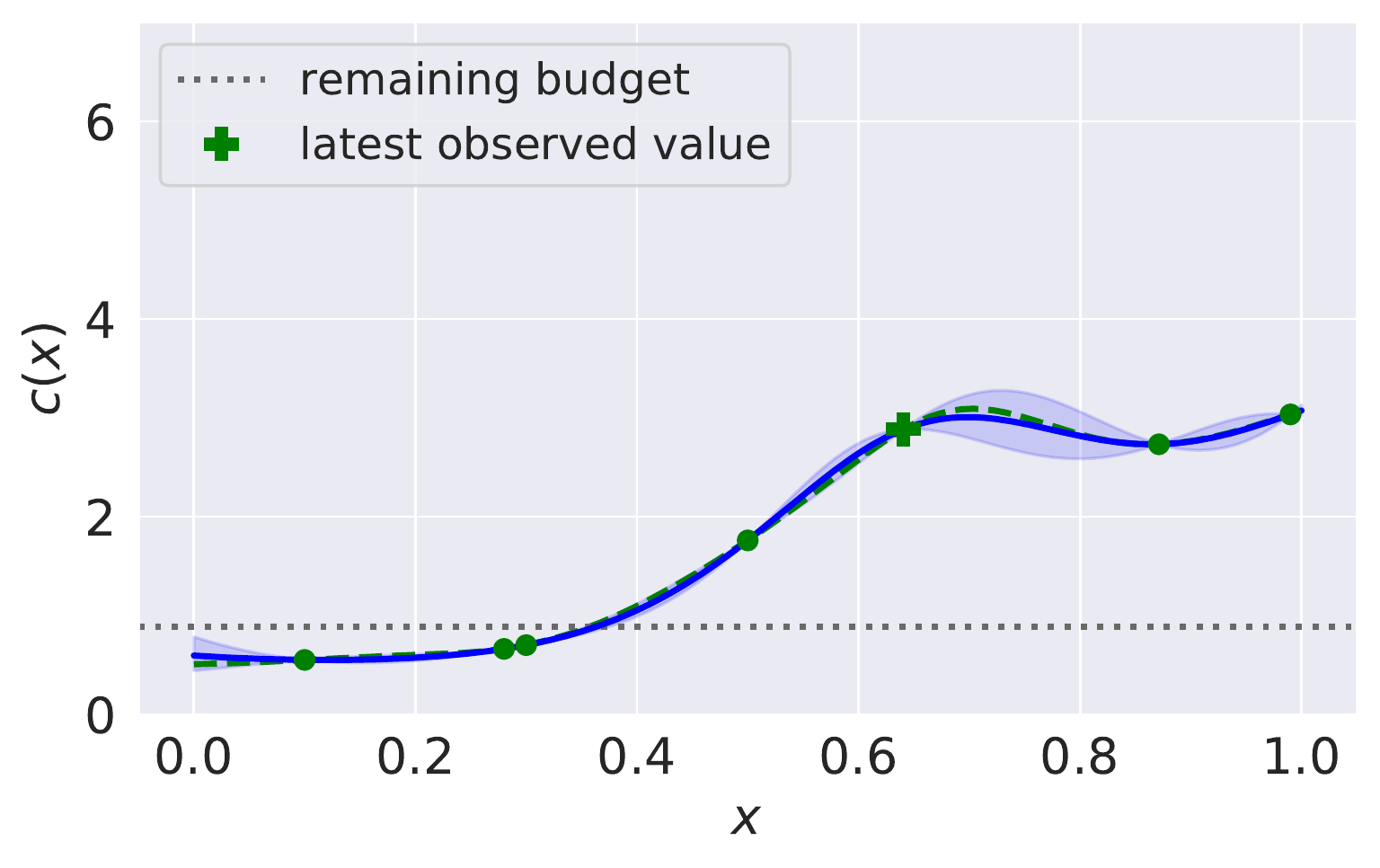}\\
  \includegraphics[width=0.33\textwidth]{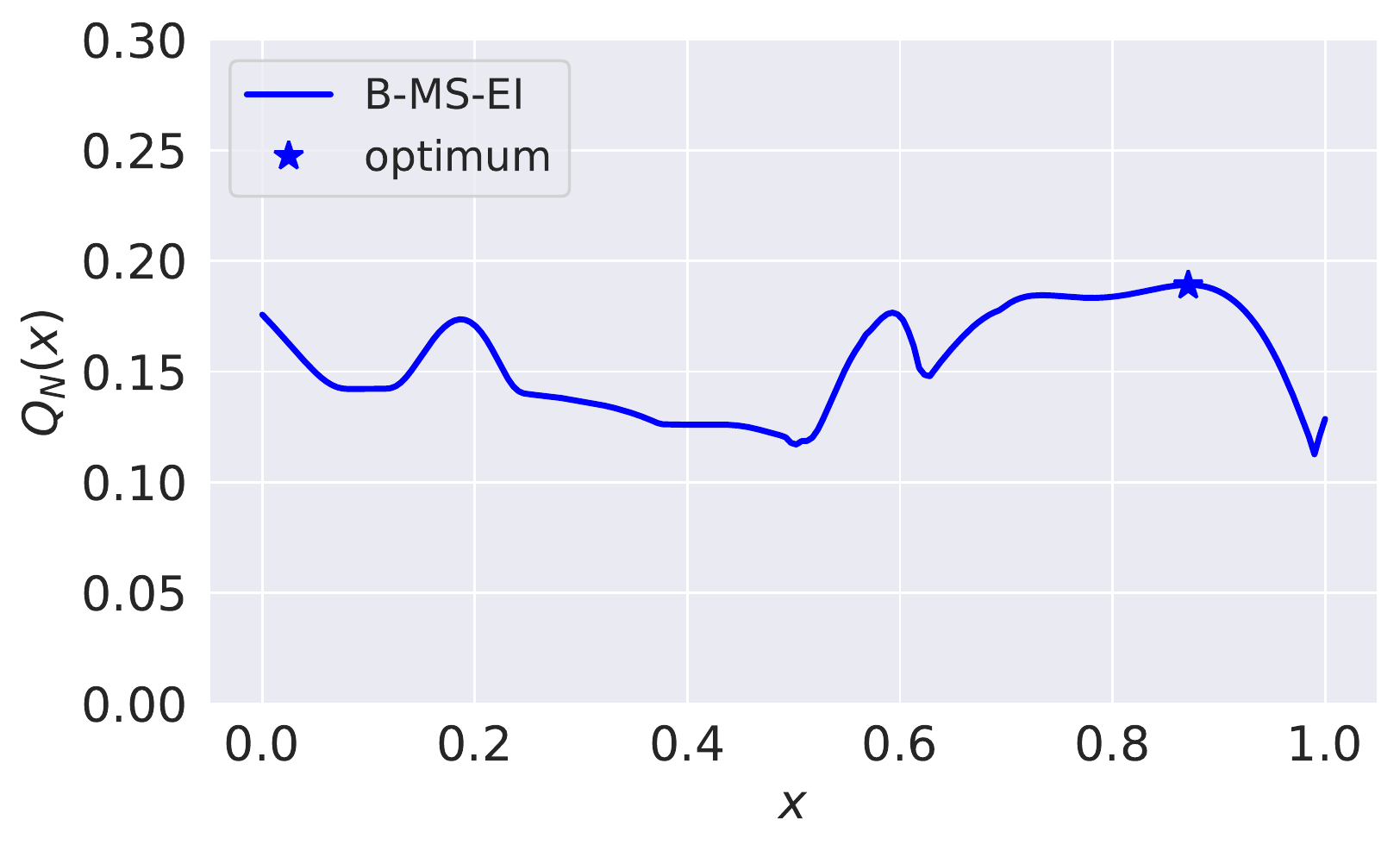}
  \includegraphics[width=0.33\textwidth]{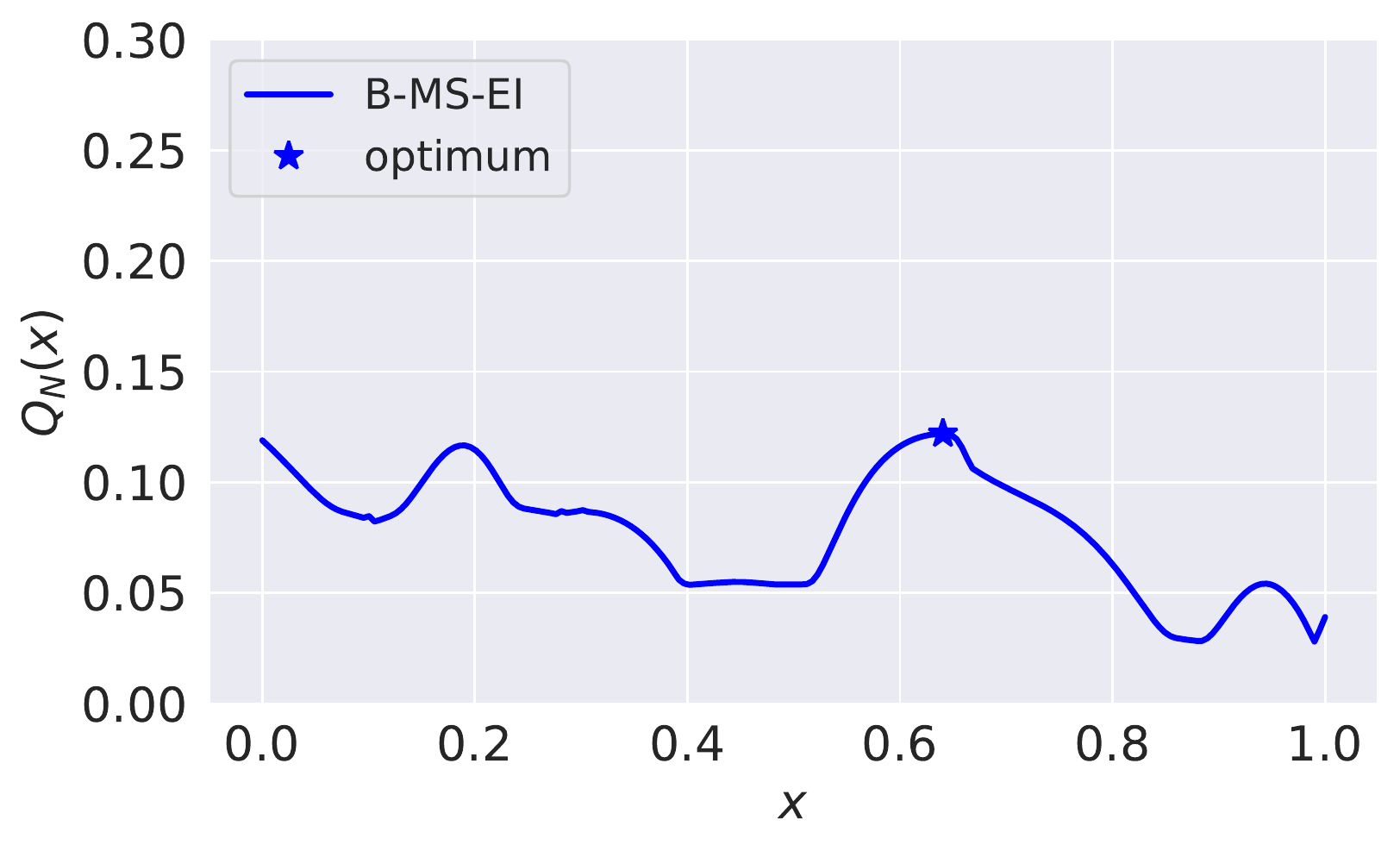}
  \includegraphics[width=0.33\textwidth]{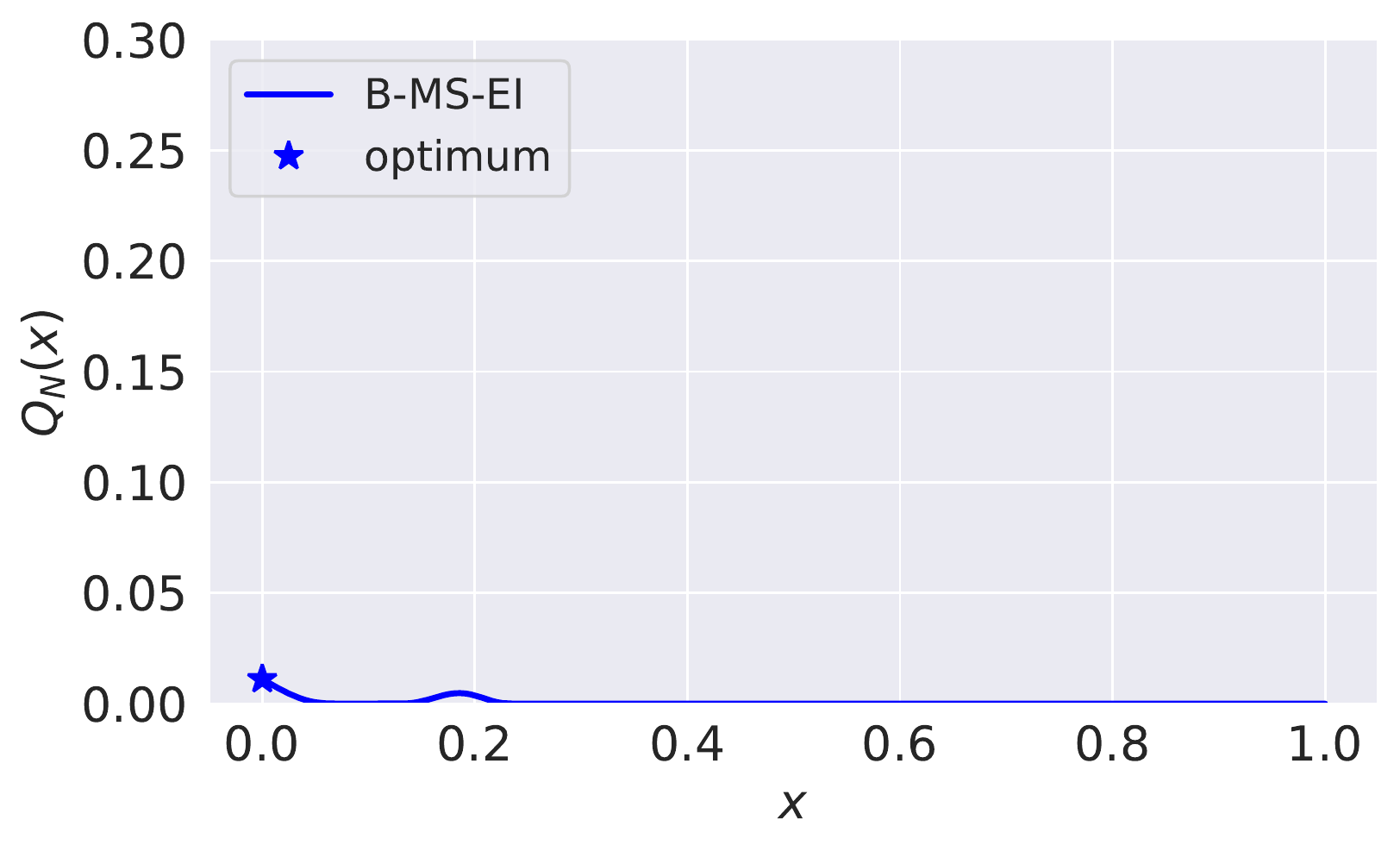}
\end{tabular}
}
\caption{The top six panels show the EI-PUC strategy and the bottom six show our \BMSEI{} strategy. In each group, the top and middle rows show the posterior on the objective and cost, respectively, and the bottom row shows the implied acquisition function. Time moves from left to right, where the first column shows the initial posteriors, and the second and third columns show the posteriors after one and two evaluations. EI-PUC evaluates low-variance, low-mean points that are low-cost but unlikely to reveal values near the global optimum. In contrast, B-MS-EI discovers a point near this global optimum within budget after two evaluations. While the remaining budget of EI-PUC after two evaluations is still relatively large, we include additional plots in Section~\ref{supp:additional} of the supplementary material showing that indeed B-MS-EI achieves a better objective value within budget.
\label{fig:animation}}
\end{figure}


Figure~\ref{fig:animation} illustrates this phenomenon in a continuous one-dimensional setting with a Gaussian process prior. Under the posterior in the first time slice, there is a lower-cost region on the left ($x < 0.5$) with low-variance and a mean that is significantly below the best point observed. There is a higher-cost region on the right ($x>0.5$) where the mean and variance are both larger. These means and variances are such that the global maximum of the function is likely to be in the right-hand region. In its first two measurements, the EI-PUC strategy measures at two low-cost points, which results in reduction of variance in this region but (as expected) no values that contend with the likely value of the global maximum on the right. In contrast, in its first two measurements, our multi-step B-MS-EI strategy evaluates on the right, finding values that are close to the global optimum.

\looseness-1 In this example, EI-PUC encounters the same difficulty it faces in the proof of Theorem~\ref{thm:counterexample}:  It overvalues low-cost points that improve relative to the status quo but not relative to where we hope to be near the end of the budget. While some budget remains to evaluate on the right after these evaluations, the evaluations on the left are likely unproductive toward the goal of finding a global maximum. 

\subsection{Decomposition and Truncation}
We now discuss how we can additively decompose and then truncate the problem in (\ref{eq:opt_policy}) so that it is more amenable as a BO acquisition function. Let $r(\D_{n-1}, \D_{n}) = u(\D_{n}) - u(\D_{n-1})$ be the increase in utility between successive states. Using a telescoping sum and rewriting the summation as an infinite sum, we have
\begin{align}
   V^*(\D) &= 
   \sup_{\pi\in\Pi} \E^{\pi}\Biggl[ \sum_{n=1}^{\infty} r(\D_{n-1},\D_n) \, \indicate{n \le N_B}  \, \Bigl | \, \D_0 = \D\Biggr].
   \label{eq:opt_telescope}
\end{align}
A truncated version of (\ref{eq:opt_telescope}) that is useful within our scenario tree optimization approach described in Section \ref{subsec:max} is given by
\begin{align}
   V_N(\D) = \sup_{\pi\in\Pi} \E^{\pi}\Biggl[ \sum_{n=1}^{N} r(\D_{n-1},\D_n) \, \indicate{n \le N_B} \, \Bigl | \, \D_0 = \D\Biggr],
   \label{eq:opt_policy_truncated_N}
\end{align}
where $N$ is a fixed number of ``look-ahead steps.'' When for each $c$ drawn from the prior, there exists a lower bound on $c(x)$ over $\mathbb X$ so that (\ref{eq:opt_policy}) is well-defined, it follows that the truncation becomes more accurate when $N$ becomes large: $\lim_{N \rightarrow \infty} V_N(\D) = V^*(\D)$. This serves as the motivation for our acquisition function described below.




\section{Budgeted Multi-Step Expected Improvement}
\label{sec:acqf}
\looseness-1 We now derive the \emph{budgeted multi-step expected improvement} (\BMSEI{}) acquisition function, where
the main idea is to solve the truncated problem given in (\ref{eq:opt_policy_truncated_N}). Accordingly, the acquisition function is parameterized by $N$, the maximum number of look-ahead steps. In Section~\ref{subsec:max}, we discuss how to approximately optimize this acquisition function using a scenario tree, leveraging the technique developed in \cite{jiang2020efficient}, and in Section~\ref{subsec:budget}, we discuss how to set the budget that defines our acquisition function when the actual remaining budget is too large to be meaningfully taken into account by a computationally feasible number of look-ahead steps.

\subsection{Dynamic Programming on the Truncated Problem}
\label{subsec:bellman}
Our derivation of \BMSEI{} starts with applying Bellman recursion to \eqref{eq:opt_policy_truncated_N}. To this end, we define the one-step marginal value of a measurement at point $x$ given an arbitrary set of observations $\D$ (i.e., a state-action value function, or $Q$-function) to be
\begin{align*}
    Q_1(x \mid \D) &= \E_{y,z}\left[ r(\D, \D \cup \{(x, y, z)\}) \indicate{s(\D) + z \leq B} \right]
    = \E_{y,z}\left[(y - u(\D))^+\; \indicate{s(\D) + z \leq B}\right].
\end{align*}
Proposition \ref{prop:bcei} below shows that, when $f$ and $\ln c$ are drawn from independent GPs, $Q_1$ admits an analytic expression similar to the one of constrained expected improvement \citep{schonlau1998global,gardner14}. The proof of this result can be found in Section~\ref{supp:prop1} of the supplementary material.

\begin{proposition}
\label{prop:bcei}
Suppose that $f$ and $\ln c$ follow independent Gaussian process prior distributions and that $\D$ is an arbitrary set of observations. Define $\mu^f_\D(x) = \mathbb{E}[ f(x) \,|\, \D]$, $\mu^{\ln c}_\D(x) = \mathbb{E}[ \ln c(x) \,|\, \D]$, $\sigma^f_\D(x) = \textnormal{Var}[ f(x) \,|\, \D]^{1/2}$, and $\sigma^{\ln c}_\D(x) = \textnormal{Var}[ \ln c(x) \,|\, \D]^{1/2}$. Then,
\begin{equation*}
    Q_1(x\mid \D) = \mathrm{EI}^{f}(x\,|\,\D) \, \Phi(\zeta)\, \indicate{s(\D) \le B} 
\end{equation*}
where $\textnormal{EI}^f$ is the classical expected improvement computed with respect to $f$, $\zeta = \bigl(\ln\left(B - s(\D)\right) - \mu^{\ln c}_{\D}(x)\bigr) / \sigma^{\ln c}_{\D}(x)$, 
and $\Phi$ is the standard normal cdf.
\end{proposition}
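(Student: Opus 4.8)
The plan is to exploit the product structure induced by the independence of $f$ and $\ln c$. First I would observe that, since $f$ and $\ln c$ are endowed with independent GP priors and conditioning on $\D$ preserves this independence, the posterior random variables $y = f(x)$ and $z = c(x) = \exp(\ln c(x))$ are independent given $\D$, with $y \sim \mathcal{N}(\mu^f_\D(x), \sigma^f_\D(x)^2)$ and $\ln z \sim \mathcal{N}(\mu^{\ln c}_\D(x), \sigma^{\ln c}_\D(x)^2)$. The key observation is that the budget indicator $\indicate{s(\D) + z \le B}$ depends only on $z$, whereas the improvement term $(y - u(\D))^+$ depends only on $y$. Hence the joint expectation factors:
\begin{equation*}
  Q_1(x \mid \D) = \E_y\!\left[(y - u(\D))^+\right] \cdot \E_z\!\left[\indicate{s(\D) + z \le B}\right].
\end{equation*}

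Next I would identify each factor. The first is, by definition, the classical expected improvement of $f$ at $x$ relative to the incumbent $u(\D)$, i.e., $\E_y[(y-u(\D))^+] = \mathrm{EI}^f(x \mid \D)$, which admits the usual closed form in $\mu^f_\D$, $\sigma^f_\D$, and the standard normal density and cdf. The second factor is the probability of remaining within budget after evaluating at $x$. Assuming $s(\D) < B$ (so $B - s(\D) > 0$), I would rewrite $\{s(\D) + z \le B\} = \{z \le B - s(\D)\}$ and, using that $z > 0$ and $\ln$ is strictly increasing, pass to logarithms to get $\{\ln c(x) \le \ln(B - s(\D))\}$. Standardizing the Gaussian $\ln c(x)$ then yields
\begin{equation*}
  \E_z\!\left[\indicate{s(\D) + z \le B}\right] = \Phi\!\left(\frac{\ln(B - s(\D)) - \mu^{\ln c}_\D(x)}{\sigma^{\ln c}_\D(x)}\right) = \Phi(\zeta).
\end{equation*}
Multiplying the two factors gives the claimed expression.

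Finally I would dispose of the boundary cases to justify the $\indicate{s(\D) \le B}$ factor. When $s(\D) > B$ the budget is already exhausted, and since $z > 0$ almost surely, the indicator $\indicate{s(\D) + z \le B}$ is identically zero, so $Q_1 = 0$, matching the vanishing of $\indicate{s(\D) \le B}$. The case $s(\D) = B$ is handled consistently by the formula itself: there $\ln(B - s(\D)) \to -\infty$, hence $\zeta \to -\infty$ and $\Phi(\zeta) = 0$, reflecting that no strictly positive cost can be afforded. There is no genuine obstacle here beyond careful bookkeeping; the two points requiring attention are verifying that posterior independence of $y$ and $z$ truly holds—so the expectation legitimately factors—and correctly translating the budget constraint on $z = c(x)$ into a linear constraint on the Gaussian variable $\ln c(x)$ through the monotone log transform.
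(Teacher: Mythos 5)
Your proposal is correct and follows essentially the same route as the paper's proof: factor the expectation using the posterior independence of $f(x)$ and $c(x)$, identify the first factor as classical expected improvement, and convert the budget event into a linear constraint on the Gaussian $\ln c(x)$ to obtain $\Phi(\zeta)$, with the degenerate cases $s(\D) \ge B$ handled exactly as you describe. No gaps.
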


In contrast with homogeneous-cost non-myopic formulations, the indicator $\indicate{s(\D)\le B}$ truncates our reward sequence at the random time before the budget is first depleted. Analogously, we can define the $n$-step value function evaluated at $x$, $Q_n(x\mid \D)$, as the expected difference in utility after using the optimal policy to evaluate $n$ additional points, among which $x$ is the first of them. Using the Bellman recursion, one can write $Q_{n}(x\mid\D)$ as
\begin{align*}
    Q_{n}(x\mid\D) =  Q_1(x\mid \D) + \E_{y,z}\Bigl[\max_{x\in\X}Q_{n-1}\left(x\mid \D \cup \{(x, y, z)\}\right)\Bigr],
\end{align*}
which holds for all $n$. Our acquisition function \BMSEI{} is defined as $Q_N$, meaning that at every step our sampling policy evaluates $x_{n+1} \in \argmax_{x\in\X} Q_N(x \mid \D_n)$.

\subsection{Optimizing B-MS-EI via Budgeted One-Shot Multi-Step Trees}
\label{subsec:max}
Maximizing our acquisition function is challenging as, in principle, this requires nested stochastic optimization over a continuous domain. We build upon the multi-step scenario tree approach of \cite{jiang2020efficient} and devise an optimization method based on a sample average approximation of $\max_{x\in\X} Q_N(x\mid \D_n)$.
In our approach, each scenario is associated with its own decision variable, allowing the problem to be cast as a single deterministic optimization problem over a higher-dimensional domain instead of a sequence of nested stochastic optimization problems. We tackle the higher-dimensional problem using modern tools of automatic differentiation and batched linear algebra operations \citep{balandat2020botorch}. We begin by noting that, if we apply Bellman's recursion repeatedly, $Q_N$ can be rewritten as
\begin{align*}
    Q_{N}(x\mid\D) =  Q_1(x\mid \D) + \E_{y, z}\Bigl[\max_{x_2}\bigl\{Q_1\left(x_2\mid \D_1\right) +
     \E_{y,z}\bigl[\max_{x_3}\bigl\{Q_1\bigl(x_3\mid \D_2 \bigr) + \cdots \bigl\} \bigl ] \bigl\} \Bigr].
\end{align*}

Now we consider the Monte Carlo approximation of $Q_{N}(x\mid\D)$ given by
\begin{align*}
\widehat{Q}_{N}&(x\!\mid\!\D)\\
&= Q_1(x\!\mid\!\D) +  \frac{1}{m_1}\!\sum_{j_1}^{m_1}\!\bigg[\max_{x_2^{j_1}}\biggl\{Q_1(x_2^{j_1}\!\mid\! \D_1^{j_1}) +
 \frac{1}{m_2}\sum_{j_2}^{m_2}\biggl[\max_{x_3^{j_1 j_2}}\Bigl\{Q_1(x_3^{j_1 j_2}\!\mid\!\D_2^{j_1 j_2})\!+ \!\cdots\Bigr\}\biggr]\biggr\}\biggr],
\end{align*}
where $m_i, \ i = 1, \ldots, N-1$, is the number of samples used in step $i$, and the sets of observations are defined recursively by the equations $\D_i^{j_1} = \D \cup \bigl\{\bigl(x, y_i^{j_1}, z_i^{j_1}\bigr)\bigr\}$ and
\begin{equation*}
   \D_i^{j_1 \ldots j_i} = \D_i^{j_1 \ldots j_{i-1}} \cup \bigl\{\bigl(x_i^{j_1 \ldots j_{i-1}}, y_i^{j_1 \ldots j_i}, z_i^{j_1 \ldots j_i}\bigr)\bigr\}, 
\end{equation*}
with \textit{fantasy samples} (i.e., samples drawn from the model) $(y_i^{j_1 \ldots j_i}, z_i^{j_1 \ldots j_i}) \sim   p(\cdot \mid x_i^{j_1 \ldots j_{i-1}},  \D_i^{j_1 \ldots j_{i-1}})$
obtained via the \textit{reparametrization trick}\footnote{Note that since we employ Monte Carlo sampling to approximate $Q_N(x\mid \D_n)$, the independence assumption between $f$ and $c$ in Proposition~\ref{prop:bcei} is not necessary for our approach, and one can jointly model $f$ and $c$ (e.g. with a multi-task GP) if there is reason to believe that these functions are correlated, as is often the case in practice. For instance, randomness in training jobs based on variations in test-train splits or weight initialization may interact with adaptive learning rate scheduling and thus affect both model performance and training time.} \citep{kingma2013auto,wilson2018maximizing}. 



\begin{figure}
\floatbox[{\capbeside\thisfloatsetup{capbesideposition={left,center},capbesidewidth=0.55\textwidth}}]{figure}[\FBwidth]
{\caption{An illustration of the scenario tree representation of our acquisition function for $N=5$ stages, where $(m_1, m_2, m_3, m_4)=(4,2,2,1)$.
Some paths down the tree stop accumulating value earlier than others due to the budget being exhausted, i.e., when $s(\D_i^{j_1 \ldots j_i}) > B$ (ending in red circle nodes). Other paths may terminate due to the maximum look-ahead $N$ before the budget is exhausted (ending in black square nodes), representing some approximation error due to the truncation.}\label{fig:tree}}
{\includegraphics[clip,width=0.4\textwidth]{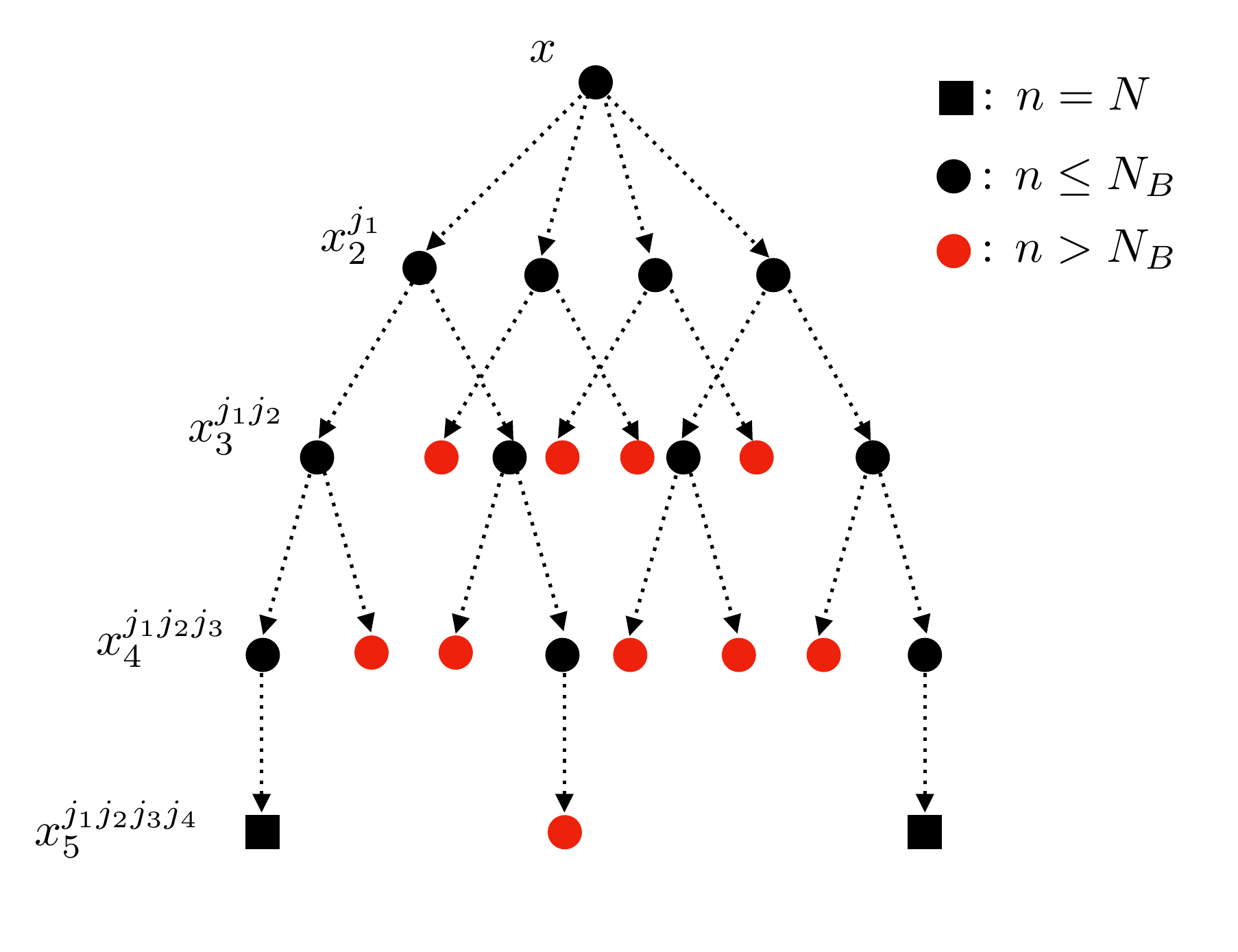}}
\end{figure}

\subsection{Budget Scheduling via Rollout of Base Sampling Policy}
\label{subsec:budget}
\looseness-1 Let $B_n = B - \sum_{i=1}^n z_i$ be the remaining budget at time $n$. Since the number of look-ahead steps, $N$, that can be performed in practice is relatively small due to computational limits, $B_n$  may be too large to be taken into account by our acquisition function in a meaningful way, especially during the first few evaluations (if the remaining budget is too large relative to the number of look-ahead steps, then the evaluation costs have little effect in our sampling decisions). Therefore, instead of  using the actual remaining budget at step $n$, our acquisition function uses a \textit{fantasy} budget set by a heuristic rule.

We propose a heuristic budgeting rule based on the use of a \textit{base sampling policy} that can be computed quickly. We fantasize $N$ sequential evaluations under this base sampling policy. More specifically, for each of the $N$ steps, we draw fantasy objective and cost values at the point recommended by this policy using the (joint) posterior distribution, then we update the posterior distribution by incorporating these fantasy evaluations, and repeat. The sum of the resulting $N$ fantasy costs determines a cumulative cost incurred by the base sampling policy. Our acquisition function then sets the budget to be the minimum between this cumulative cost and the true remaining budget. This budget is used by our method as if it were the actual budget until depletion, and then the heuristic rule is used again to compute a new fantasy budget. The intuition behind this heuristic is that our acquisition function will try to find the best non-myopic decision using the same budget as the base sampling policy, and thus should perform evaluations that are better or at least as good as those of the base sampling policy. In our experiments, we use EI-PUC-CC as the base policy.

\section{Experiments}
\label{sec:experiments}
We demonstrate the efficacy of \BMSEI{} on four synthetic and four real-world experiments. We report the performance of two variants of our acquisition function, the main variant that uses multiple fantasy samples per step, and the \textit{multi-step path} variant \citep{jiang2020efficient}, which is based on a degenerate tree with one fantasy sample per step. For each of these variants, we consider both $N=2$ and $N=4$ look-ahead steps. We denote the multi-step path variant with $N$ steps as $N$-B-MS-EI$_{\textnormal{p}}$ and the former simply as $N$-B-MS-EI.

In addition, we report the performance of three baseline acquisition functions from the literature: expected improvement (EI), expected improvement per unit of cost (EI-PUC) \citep{Snoek2012ML, lee2020cost}, and  expected improvement per unit of cost with cost cooling (EI-PUC-CC) \citep{lee2020cost}. EI-PUC and EI-PUC-CC are currently the de-facto standard approach to cost-aware BO. Since \cite{lee2020cost} assumes that the cost is known (while our experiments are run in the setting of unknown costs), we also integrate with respect to the uncertainty on the cost when computing these acquisition functions. Closed form analytical expressions for these acquisition functions can be found in Section~\ref{supp:closed_form} of the supplementary material.

\subsection{Description of Benchmark Problems}
\label{sec:description_benchmark}

\textbf{Synthetic Test Problems.} The first four problems use synthetic objective functions commonly found in the literature: Dropwave, Alpine1, Ackley, and Shekel5  (defined in Section~\ref{supp:synthetic} of the supplementary material). Each objective function is accompanied by a cost function of the form $c(x) = \exp\bigl[\frac{\alpha}{d} \sum_{i=1}^d \cos\bigl(\beta (x_i -x_i^* + \gamma)\bigr)\bigr]$,
where $x^*$ is the objective function's maximizer. In each replication we use a different cost function, obtained by sampling $\alpha$, $\beta$, and $\gamma$ uniformly at random over appropriate intervals. Our results thus average over a variety of cost functions. As these parameters vary, cost functions with different characteristics arise. More concretely, $\alpha$ controls the magnitude of the difference between the minimum and maximum costs; $\beta$ controls the variability of the cost; and $\gamma$ controls the cost at the optimum. In particular, if $\gamma = 0$, then $x^*$ is the most expensive point, whereas if $\gamma = \pi$, then $x^*$ is the cheapest point. This family of cost functions emulates a wide range of possible scenarios. 
\begin{figure}
  \centering
 \includegraphics[width=0.24\textwidth]{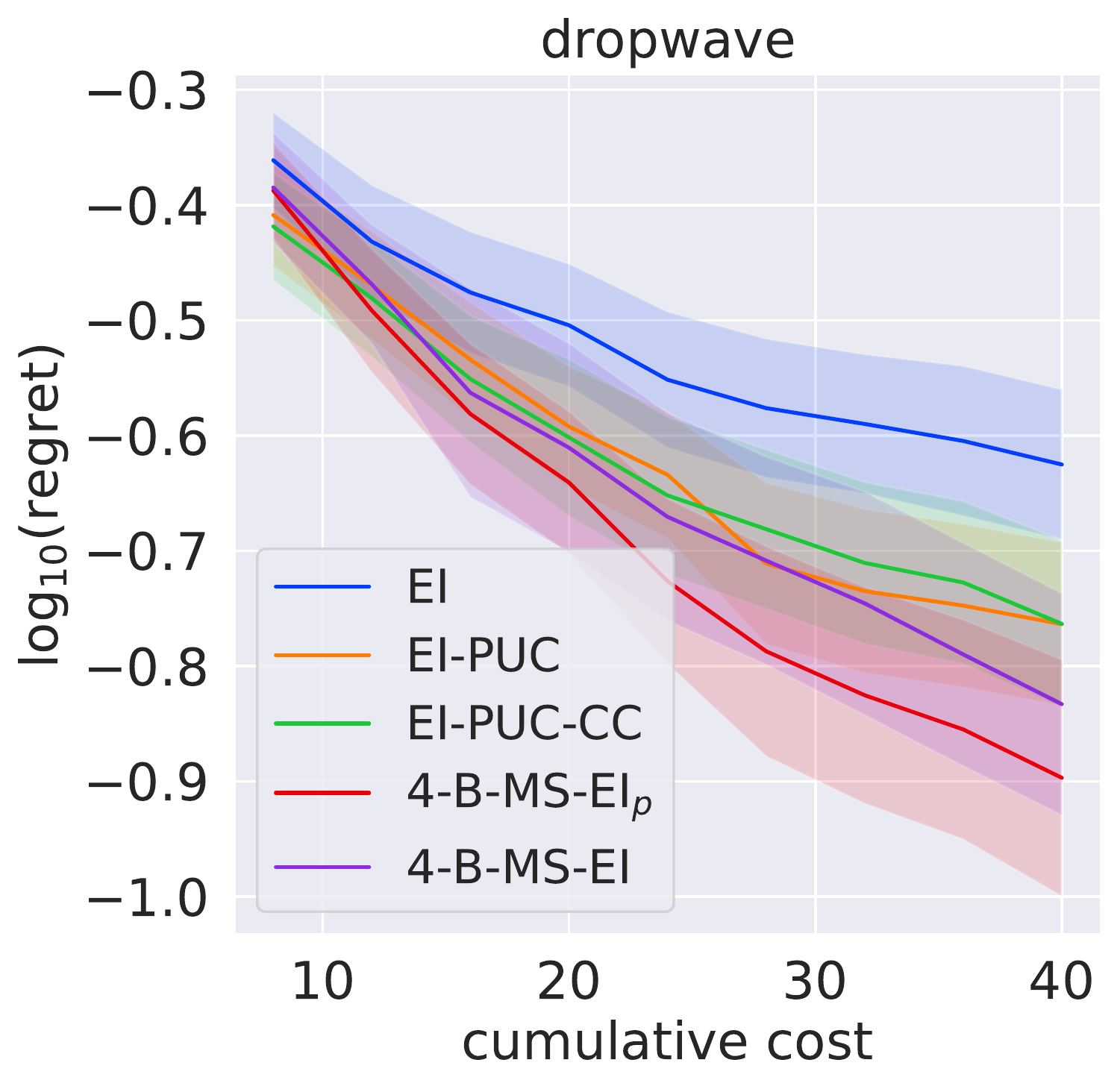}
 \includegraphics[width=0.24\textwidth]{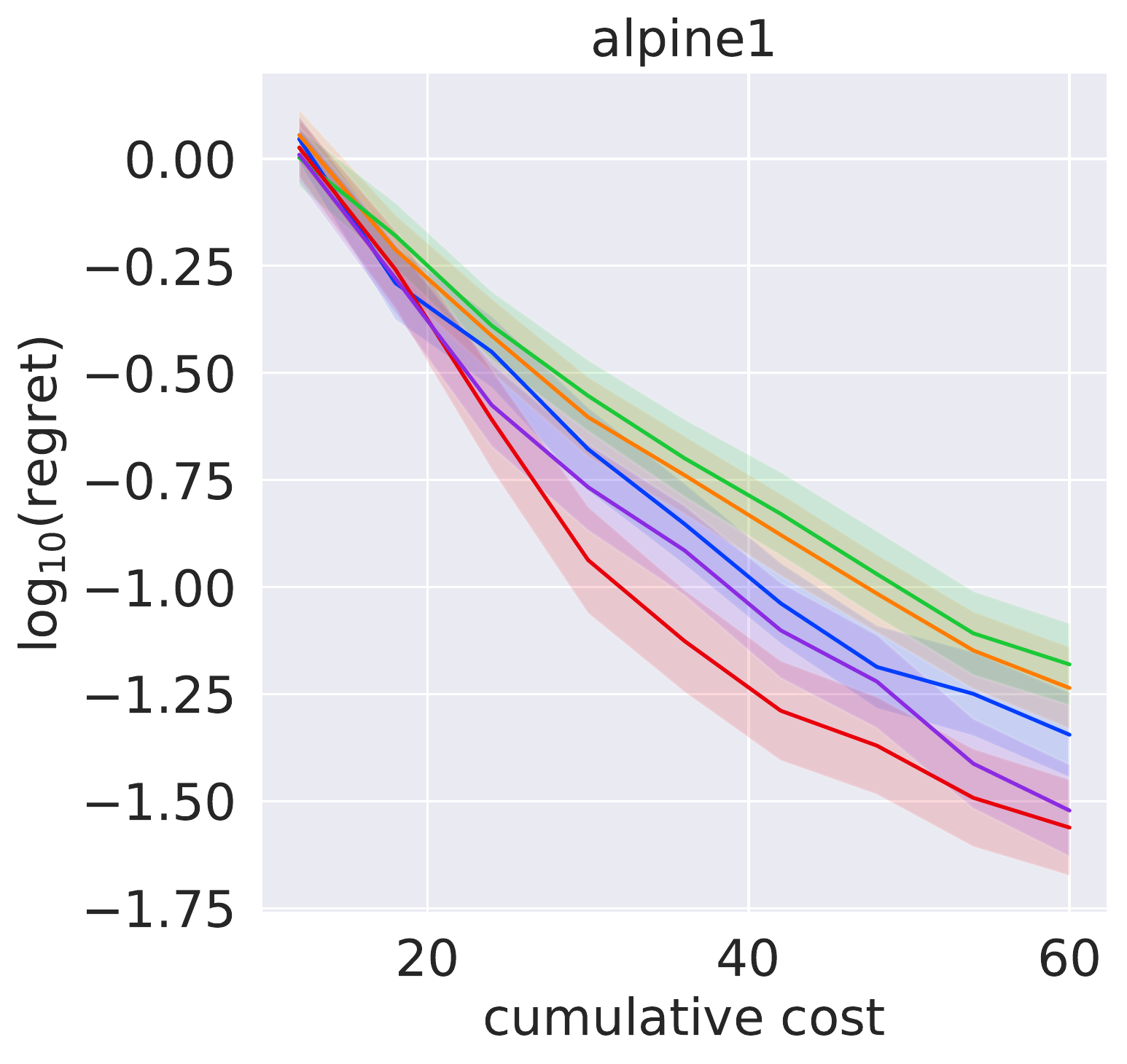}
 \includegraphics[width=0.24\textwidth]{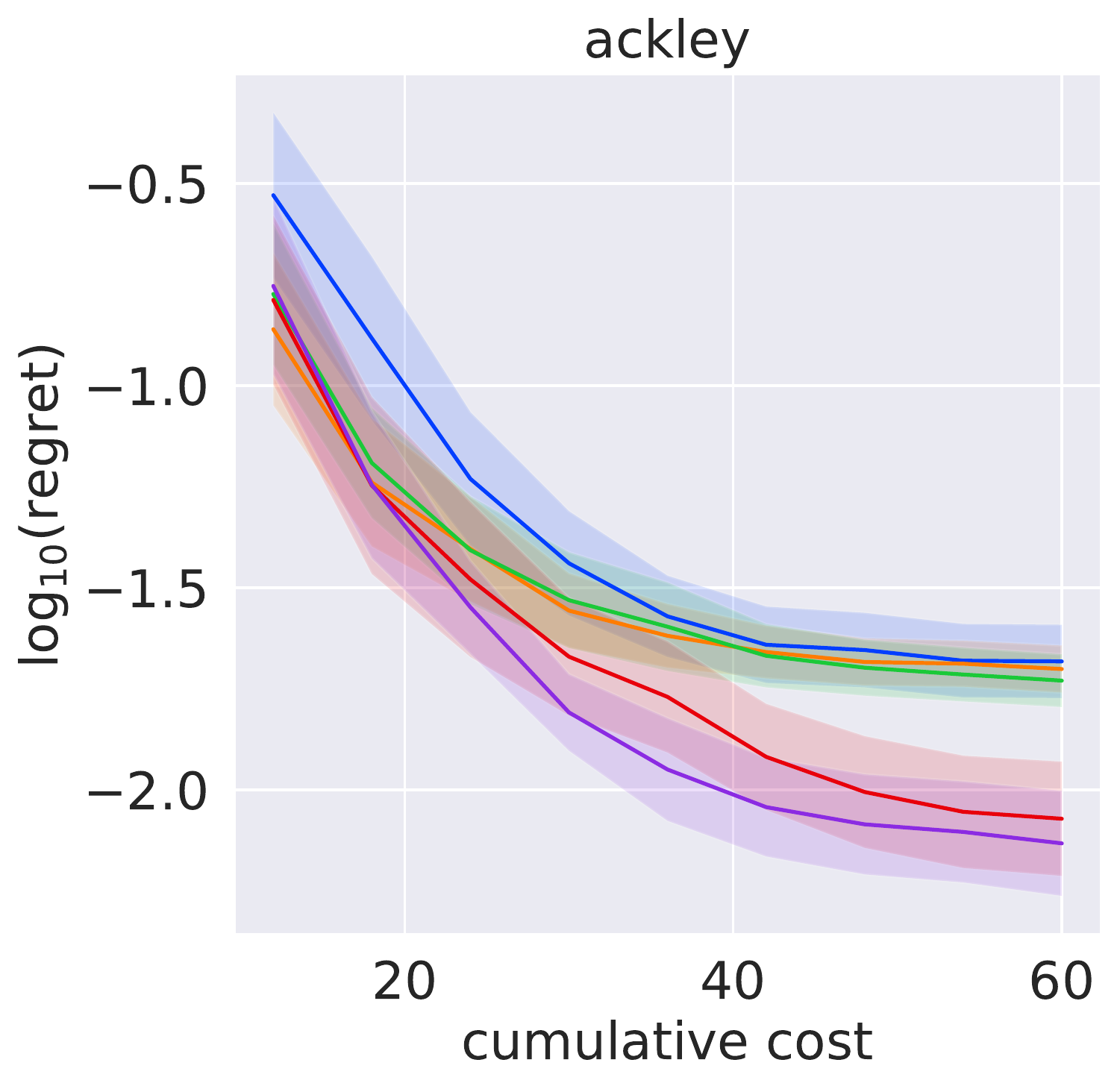}
 \includegraphics[width=0.24\textwidth]{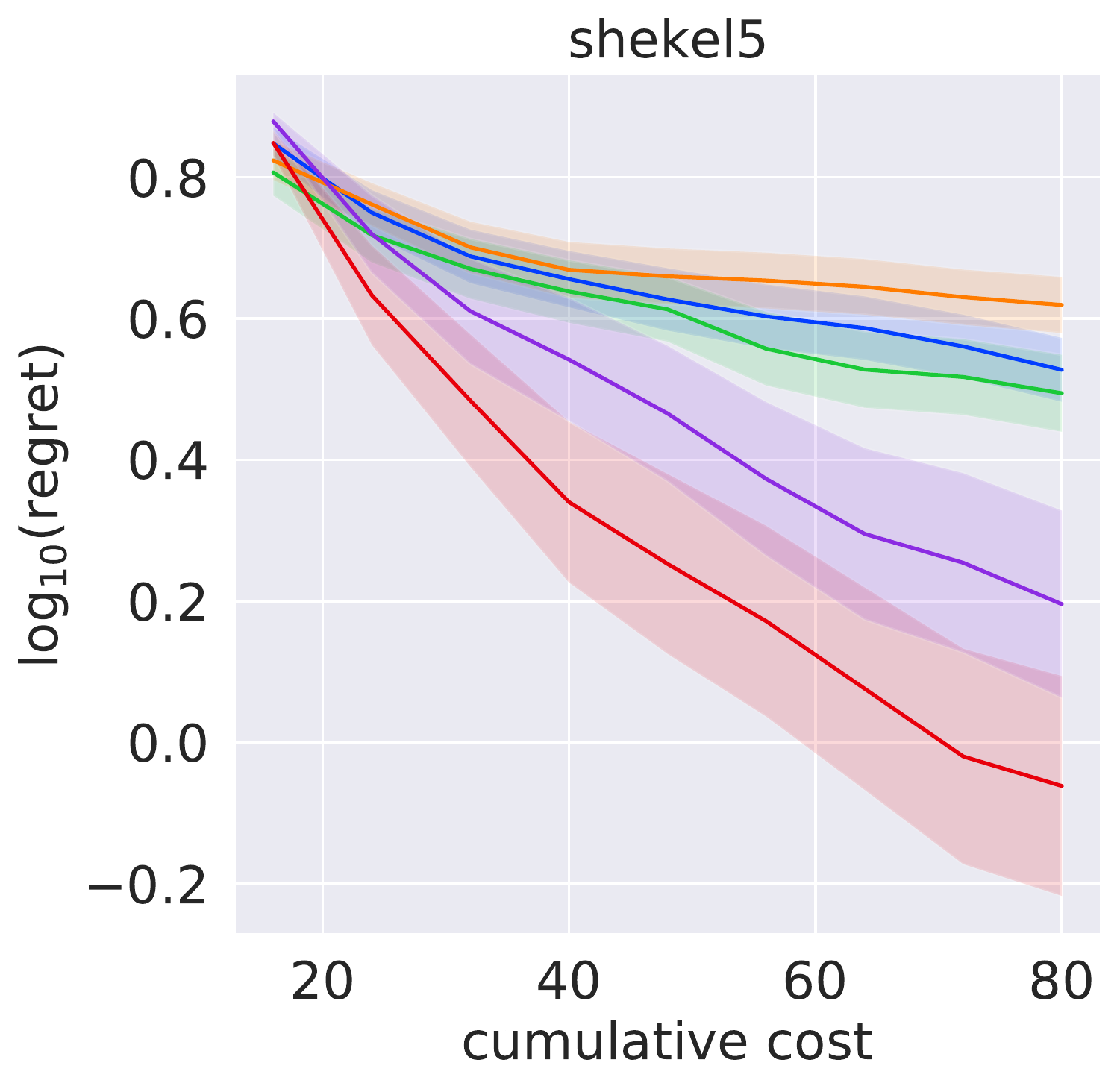}
 \\
 \includegraphics[width=0.235\textwidth]{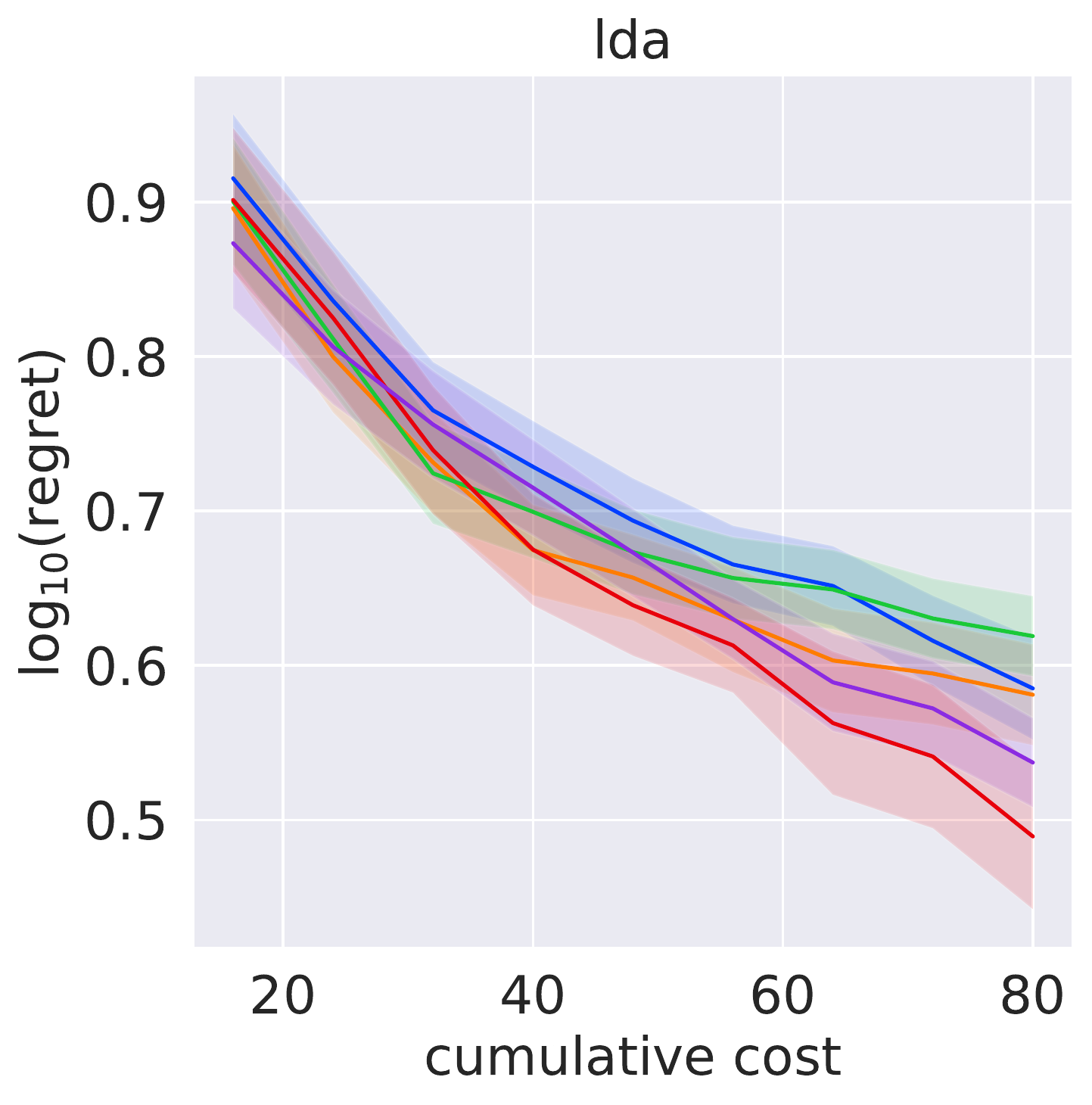}
 \includegraphics[width=0.245\textwidth]{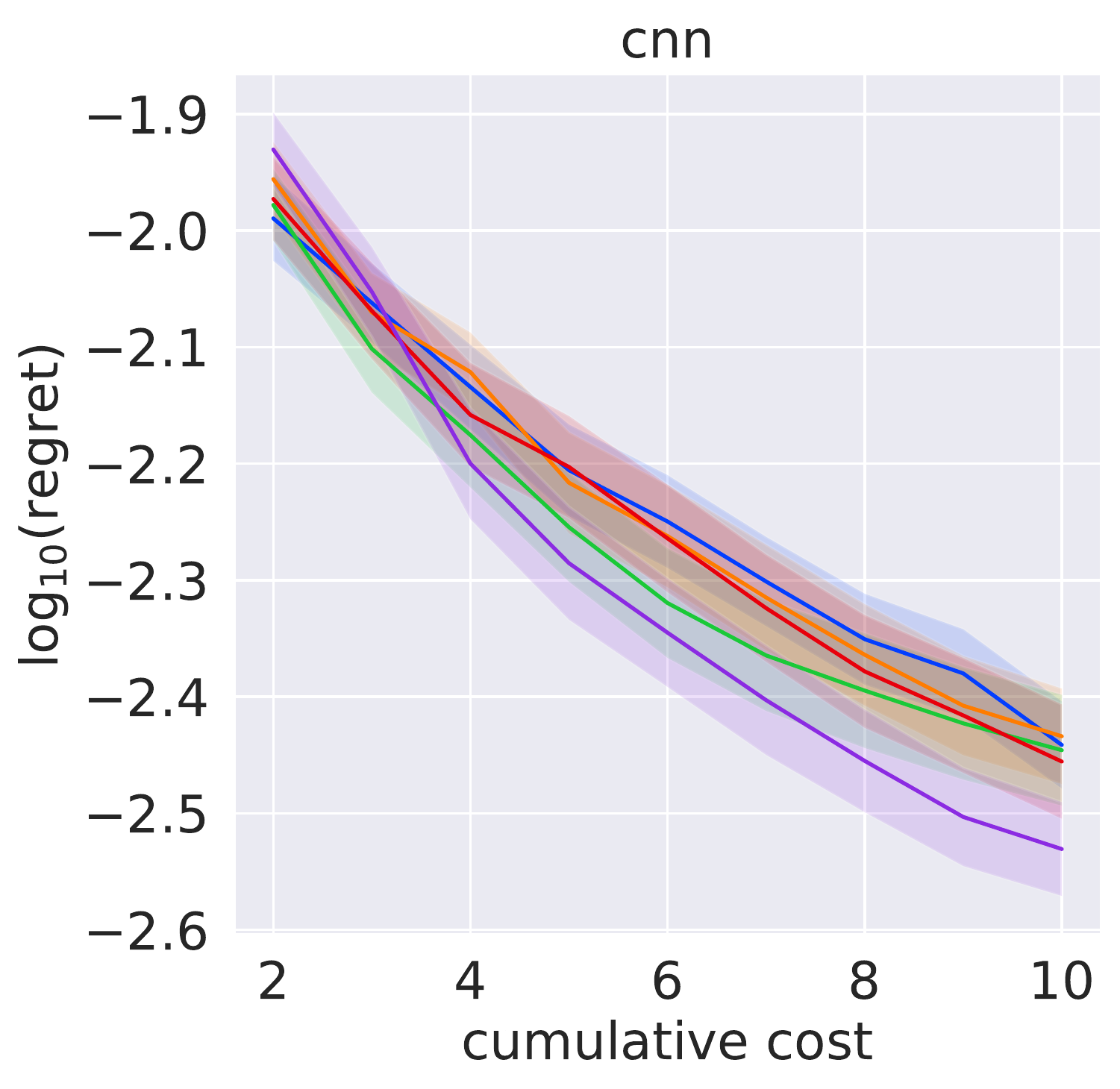}
  \includegraphics[width=0.25\textwidth]{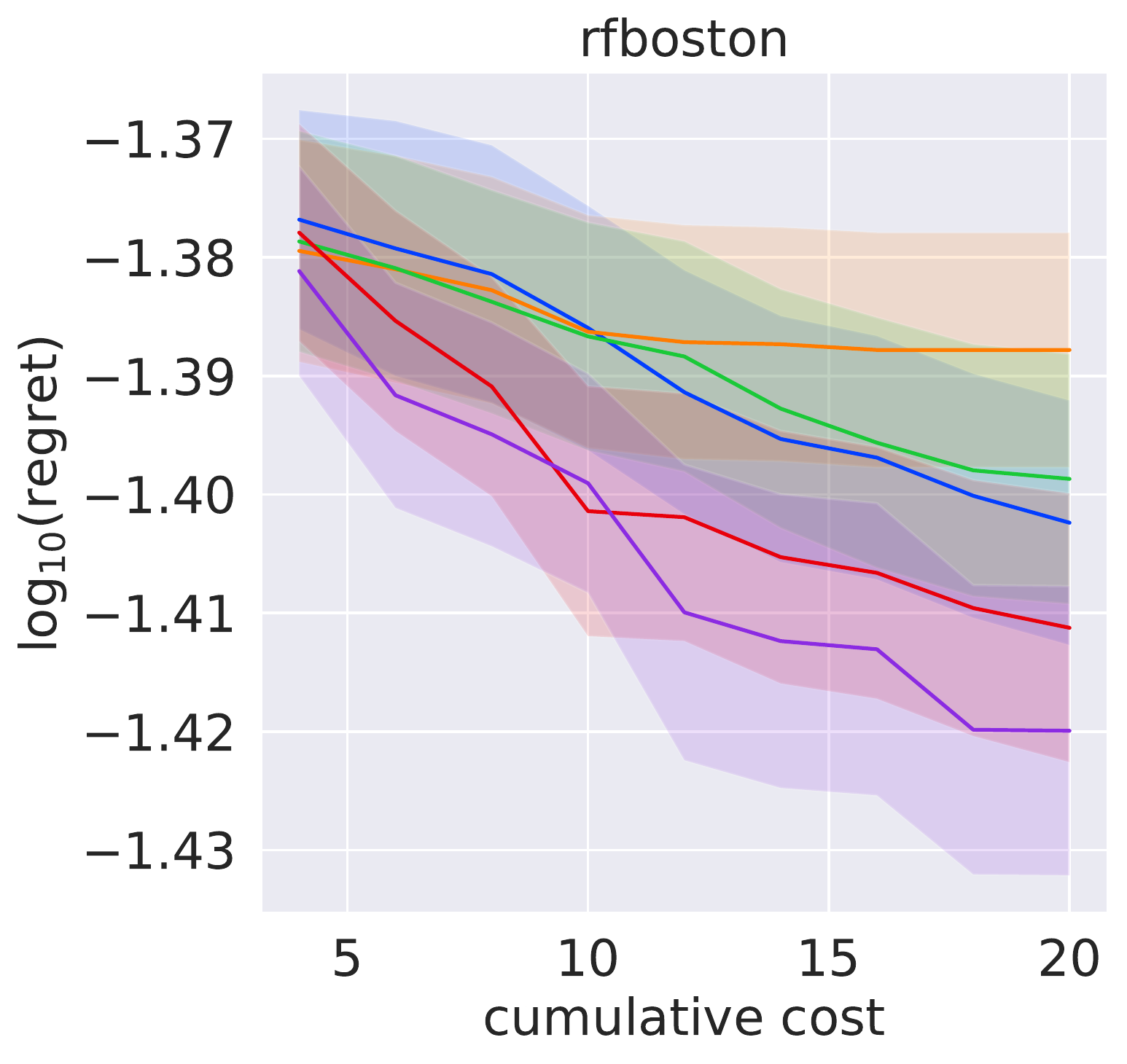}
\includegraphics[width=0.245\textwidth]{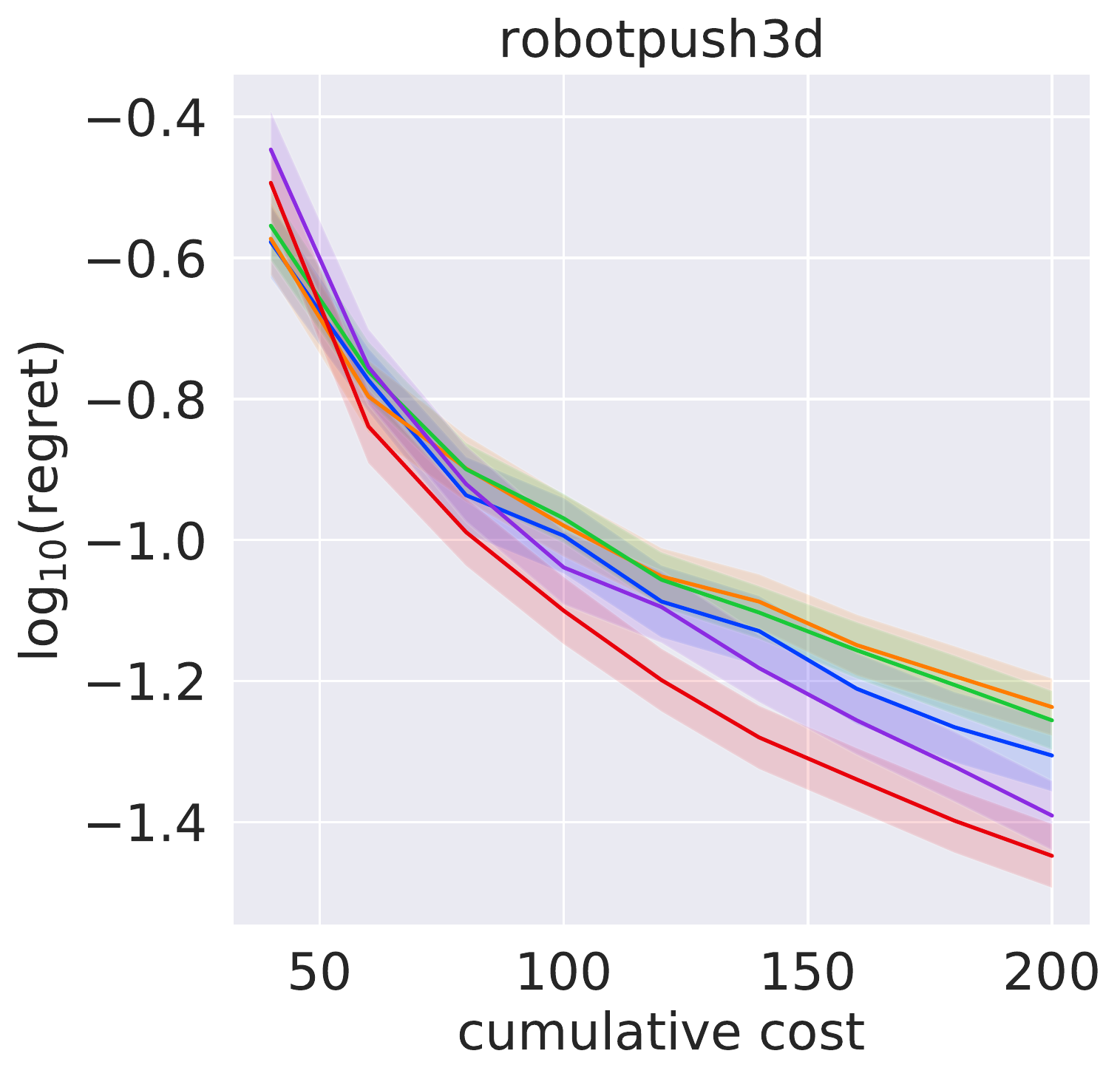}
  \caption{Log-regret of our non-myopic budget-aware BO method (4-B-MS-EI) compared with baseline acquisition functions on a range of problems. \label{fig:experiments}}
\end{figure}

\textbf{AutoML Benchmarks.} We consider three AutoML benchmark problems: LDA, CNN, and RF-Boston. The LDA and CNN problems use publicly available data sets from the HPOLib \citep{hpolib} and HPOLib1.5 \citep{hpolib15} hyperparameter optimization libraries. Following \cite{eggensperger2018efficient}, we use  surrogates of the underlying objective and cost functions to emulate the computationally expensive process of training the corresponding models. Details on the construction of the surrogates can be found in Section~\ref{supp:automl} of the supplementary material. 

The first data set, originally introduced by \cite{hoffman2010online}, was obtained by evaluating an online latent Dirichlet allocation algorithm for topic modeling with 3 hyperparameters: mini-batch size $S \in [1, 16384]$ (on a log2 scale), and learning rate parameters $\kappa \in [0.5, 1.0]$ (controls speed at which information is forgotten) and $\tau_0 \in [1, 1024]$ (on a log2 scale, downweights early iterations). These evaluations are expensive and heterogeneous, ranging from 2 to 10 hours each (see Figure~\ref{fig:lda_costs_hist}). 

The second data set was obtained by training a 3-layer convolutional neural network on the CIFAR-10 dataset with 5 hyperparameters: ``initial learning rate'' $\in [10^{-6}, 1.0]$ (on a log10 scale), ``batch size'' $\in [32, 512]$, and ``number of units in layer $k$'' $\in [16, 256]$ (on a log2 scale), for $k=1,2,3$. T

Finally, the RF-Boston problem considers optimization of the 5-fold cross validation error when using a random forest (RF) regressor
on the Boston dataset, both of which are from \texttt{sklearn} \citep{scikit-learn}.
We tune the following hyperparameters of the RF regressor: ``n\_estimators'' $\in [1, 256]$ (rounded to the nearest integer), ``max\_depth'' $\in [1, 64]$ (rounded to nearest integer), and ``max\_features'' $\in [0.1, 1]$ (on a log10 scale). The cost of each is evaluation is proportional to the training time of the model under the evaluated set of hyperparameters (we scale it by training time of a single initial evaluation). 


\looseness-1 \textbf{Energy-Aware Robot Pushing.} We consider a version of the robot pushing problem  introduced by \cite{wang2017max}, where a robot pushes an object from its origin, $w_{\mathrm{init}}=(0,0)$, to a target $w_{\mathrm{target}}\in\R^2$. In the our version, we draw 20 different target locations uniformly at random over $\{(w_1,w_2)\in\R^2: 1< |w_1|,|w_2| < 5\}$, which are distributed equally across the replications performed. The parameters to be optimized are the location of the robot, $z\in[-5,5]^2$, and the duration of the push, $t\in[1,30]$. The objective to minimize is the distance from the object's location after being pushed, $w_{\mathrm{push}}(z,t)$, and the target location, i.e., $f(z,t) = \|w_{\mathrm{target}} - w_{\mathrm{push}}(z,t)\|_2 $. The cost function is given by $c(z,t)= \|w_{\mathrm{push}}(z,t)\| + \epsilon$, where $\epsilon$ is a constant, and represents the energy spent by the robot by pushing the object.

\subsection{Results and Discussion}

Figure~\ref{fig:experiments}
plots a 95\% confidence interval on mean log-regret versus the budget used thus far. We perform experiments using a single budget in each problem; the goal 
is to minimize regret at the point when the budget is fully exhausted at the right-hand edge of each plot. To focus attention on  these budgets, we plot results over the range from 20\% to 100\% of this overall budget.
The results 
for $N=2$ look-ahead steps are deferred Section~\ref{supp:additional} to the supplementary material to improve readability and also because they are outperformed by the $N=4$ counterparts for most problems. 
All implementations use BoTorch \citep{balandat2020botorch}. The objective and  log-cost functions are modeled using independent GPs. Additional details and runtimes can also be found in Section~\ref{supp:additional} of the supplementary material. An implementation of our algorithms and numerical experiments can be found at \url{https://github.com/RaulAstudillo06/BudgetedBO}.

B-MS-EI (red and purple lines) performs favorably compared to existing methods. In some cases (Dropwave and CNN) EI performs worst, while in other cases (Alpine, Shekel, LDA), one of the value-to-cost ratio methods (EI-PUC or EI-PUC-CC) performs worst.
\looseness-1 B-MS-EI is more computationally intensive to optimize than the standard approaches. The average walltime per acquisition of 4-B-MS-EI, the most expensive variant of our algorithm, ranges from 42 seconds to 7 minutes across the problems, whereas the average walltimes for EI, EI-PUC, and EI-PUC-CC, range from 1 to 30 seconds. However, optimization times on the order of minutes\footnote{These times are not unusual for advanced BO methods; see, e.g., Section 4 of \cite{wu2019practical}.} are acceptable when the acquisition function is applied to real-world problems where the function evaluation often takes much longer (e.g., in Figure~\ref{fig:lda_costs_hist}, LDA evaluations require up to 10 hours). 
In such settings, 
extra computational time spent optimizing the acquisition function more than pays for itself  
with improved query efficiency and the savings in objective function evaluation time that results.
Nevertheless, there are opportunities to reduce the computational requirements of B-MS-EI; possibilities include re-using the optimized tree for multiple steps or devising better initial conditions for optimization. We leave these for future work.


\section{Conclusion}
\label{sec:conclusion}
We introduced the problem of budgeted BO under unknown and potentially heterogeneous evaluation costs. This arises in a wide range of practical settings where BO is applied. However, the most common paradigm for cost-aware BO, the value-to-cost ratio, fails to account for uncertainty in the cost function as well as the presence of the budget constraint as part of the exploration-exploitation trade-off in a principled way, as we show in Theorem~1. In this work, we provided a dynamic programming formulation of this problem, along with a non-myopic acquisition function that addresses the shortcomings of the existing methods. Our acquisition function is derived by truncating the aforementioned dynamic program, and is approximately maximized following a one-shot multi-step tree approach. Our experiments show that our acquisition function outperforms existing methods on a number of synthetic and real-world examples that exhibit heterogeneity in evaluation costs. In the future, we hope to extend our approach to the multi-fidelity setting, where evaluation costs play an even larger role.

\section*{Acknowledgments}
Authors RA and PF were partially supported by AFOSR FA9550-19-1-0283 and FA9550-20-1-0351.

\bibliographystyle{apalike} 
\bibliography{bibl} 

\setcounter{theorem}{0}
\setcounter{proposition}{0}
\setcounter{lemma}{0}

\clearpage
\appendix
\section{Proof of Theorem 1}
\label{supp:thm1}
\begin{theorem}
\label{prop:counterexample}
The approximation ratios provided by the EI and EI-PUC policies are unbounded. That is, for any $\rho > 0$ and each policy $\pi\in\{\mathrm{EI}, \mathrm{EI\textnormal{-}PUC}\}$, there exists a Bayesian optimization problem instance (a  prior probability distribution over objective and cost functions, a budget, and a set of initial observations $\D_0$) where 
\[ V^*(\D_0) > \rho \, V^{\pi}(\D_0). \]
\end{theorem}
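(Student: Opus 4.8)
The plan is to prove the theorem by explicit construction: for each policy $\pi \in \{\mathrm{EI}, \mathrm{EI\text{-}PUC}\}$ and each target ratio $\rho$, I would exhibit a problem instance on a finite domain with a \emph{known} (deterministic) cost function and an objective prior that is independent across the domain, every objective value having prior mean $0$. The initial data $\D_0$ fixes the incumbent $u(\D_0)=0$, and the budget $B$ is chosen so that a single designated \emph{high-cost} point $x_H$ consumes the entire budget, $c(x_H)=B$, while each of $K$ identical \emph{low-cost} points costs $B/K$. The only free parameters are the prior variances $\sigma_H^2$ of $x_H$ and $\sigma_L^2$ of the low-cost points, together with $K$, and the whole argument reduces to choosing these so that the named policy is trapped in low-value behavior while a simple competing policy, and hence $V^*$, attains high value.

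For $\mathrm{EI\text{-}PUC}$ I would set $\sigma_L = (1+\epsilon)\sigma_H/K$ for a small fixed $\epsilon>0$. Since every point has prior mean $0$ and the incumbent is $0$, the one-step expected improvement of a point with variance $\sigma^2$ equals $\sigma/\sqrt{2\pi}$, so at the first step $\mathrm{EI\text{-}PUC}$ compares $(\sigma_L/\sqrt{2\pi})/(B/K)$ against $(\sigma_H/\sqrt{2\pi})/B$ and strictly prefers a low-cost point precisely because $\sigma_L > \sigma_H/K$. The crucial structural observation is that once \emph{any} low-cost point has been measured the remaining budget is $B(K-1)/K < B = c(x_H)$, so $x_H$ becomes permanently unaffordable; thus $\mathrm{EI\text{-}PUC}$ is confined to low-cost points for the remainder of the run, regardless of how its preferences might later evolve. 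I would then upper bound its value by $\E[\max_{i\le K} Y_i \vee 0] \le \E[\max_{i\le K}|Y_i|] \le \sigma_L\sqrt{2\ln(2K)}$ via the standard Gaussian maximal inequality, and lower bound $V^*(\D_0)$ by the value of the single-evaluation policy that plays $x_H$, namely $\sigma_H/\sqrt{2\pi}$. The resulting ratio is at least $K/\bigl((1+\epsilon)\sqrt{2\pi}\sqrt{2\ln(2K)}\bigr)$, which exceeds any $\rho$ once $K$ is large enough.

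For cost-agnostic $\mathrm{EI}$ the roles reverse: I would keep the same cost structure but set $\sigma_L = (1-\epsilon)\sigma_H$, so the low-cost points are only slightly less variable than $x_H$ yet far cheaper. Because $\mathrm{EI}$ ignores cost and expected improvement is increasing in variance, it selects $x_H$ at the first step, exhausts the budget in that single evaluation, and halts with value $\sigma_H/\sqrt{2\pi}$. Against this I would lower bound $V^*(\D_0)$ by the policy that spends the whole budget on the $K$ low-cost points, whose value $\E[\max_{i\le K} Y_i \vee 0] \ge \E[\max_{i\le K} Y_i] \ge c\,\sigma_L\sqrt{\ln K}$ by the matching lower bound on the expected maximum of i.i.d.\ centered Gaussians (for a universal $c>0$ and all $K\ge 2$). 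The ratio is then at least $c(1-\epsilon)\sqrt{2\pi}\,\sqrt{\ln K} \to \infty$, again exceeding any prescribed $\rho$ for $K$ large.

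I expect the main obstacle to be rigor in the $\mathrm{EI\text{-}PUC}$ case, specifically ruling out that it recovers the high-value behavior after its first move. A direct analysis of the per-cost ratio as the incumbent $u$ rises is delicate, because the ratio of expected improvements $\E[(Y_H - u)^+]/\E[(Y_L - u)^+]$, with $Y_H\sim N(0,\sigma_H^2)$ and $Y_L\sim N(0,\sigma_L^2)$, grows without bound as $u$ increases, so for a large enough incumbent $\mathrm{EI\text{-}PUC}$ would in fact favor $x_H$. The clean way around this is exactly the budget-locking argument above, which makes the first-step comparison the only one that must be checked and renders the later dynamics irrelevant; the bound also holds robustly whether $\mathrm{EI\text{-}PUC}$ restricts to affordable points or merely halts upon naming an unaffordable one, since the latter only yields fewer low-cost evaluations. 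The remaining care is bookkeeping: fixing the incumbent at $0$ and the available budget at exactly $B$ while respecting $c>0$ (e.g., by charging the initial observation a negligible cost and enlarging $B$ accordingly), and confirming $N_B<\infty$ since costs are bounded below by $B/K>0$. The extreme-value estimates are standard, so no heavy computation is required.
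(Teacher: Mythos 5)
Your proposal is correct and follows essentially the same route as the paper's proof: an independent zero-mean Gaussian prior on a finite domain with known costs, one high-cost/high-variance point versus many low-cost points, the budget-locking observation that a single cheap evaluation renders the expensive point permanently unaffordable to EI-PUC, and matching upper/lower bounds of order $\sigma\sqrt{\ln K}$ on the expected maximum of i.i.d.\ centered Gaussians. The only differences are cosmetic (you send $K\to\infty$ with $\sigma_L\sim\sigma_H/K$ whereas the paper sends $\epsilon\to 0$ with cost $\epsilon$ and variance $\epsilon^2$), and your explicit handling of whether EI-PUC halts upon naming an unaffordable point is, if anything, slightly more careful than the paper's.
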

\begin{proof}
We prove the result in two parts, first focusing on EI-PUC, and then on EI.

\textbf{Part 1: EI-PUC.}
To show the result for EI-PUC, we construct a problem instance with a discrete finite domain and no observation noise. Let $\epsilon$ and $\delta$ be two strictly positive real numbers and let $K = \lceil (1+\delta)/\epsilon \rceil$. Suppose the domain is $\mathbb X = \{0, 1, \ldots, K+1\}$ and that the overall budget of the problem is $1+\delta$. The prior on $f(x)$ is independent across $x \in \mathbb X$, all points have zero mean under the prior, and the costs $c(x)$ are all known. We assume that the point $x=0$ has already been measured and has value $f(0)=0$.
The next $K$ points are ``low-variance, low-cost'' and the final point is ``high-variance, high-cost.''
\begin{itemize}
    \item For low-variance, low-cost points, $x \in \{1,\ldots,K\}$, $f(x)$ is normally distributed with mean 0 and variance $\epsilon^2$ and $c(x) = \epsilon$. Thus, $\mathrm{EI}(x) / c(x)$ for these arms is $\E[(f(x) - 0)^+] / \epsilon = \E[\epsilon Z^+] / \epsilon = \E[Z^+]$, where $Z$ is a standard normal random variable.
    \item For the high-variance, high-cost point, $x = K+1$, $f(x)$ is normally distributed with mean 0 and variance 1, and $c(x) = 1+\delta$. Thus, $\mathrm{EI}(x) / c(x)$ for these arms is $\E[(f(x) - 0)^+] / (1+\delta) = \E[Z^+] / (1+\delta)$.
\end{itemize}

One feasible policy for the problem is to ``measure the high-variance point once.'' Under this policy, the budget is exhausted after the first measurement and the overall value is $\E[Z^+]$. 

Let us consider the EI-PUC policy. By the calculation above, on the first evaluation, it measures a low-variance arm. Then, after that, the remaining budget is strictly less than $1+\delta$ and it can only measure low-variance, low-cost points. It measures $N = \lfloor (1+\delta)/\epsilon \rfloor$ such points in total (since there is no observation noise, EI of a measured point is zero and a new point is measured each time). We show, in a calculation below, that the value of this policy goes to $0$ as $\epsilon \to 0$ while we hold $\delta$ fixed.
That is, we show that 
\begin{equation}
\lim_{\epsilon \to 0} V^{\mathrm{EI\textnormal{-}PUC}}(\D_0) = 0.
\label{toshow}
\end{equation}

There are no policies other than the two just described, so the ``measure the high-variance point once'' policy becomes optimal for all $\epsilon$ sufficiently close to $0$.
Thus, $\lim_{\epsilon\to0} V^*(\D_0) = \E[Z^+] > 0$. Thus, given any finite strictly positive $\rho$, there is a $\epsilon$ small enough
whose corresponding problem instance satisfies 
\begin{equation*}
V^*(\D_0) > \rho \, V^{\mathrm{EI\textnormal{-}PUC}}(\D_0).
\end{equation*}

To complete the proof for EI-PUC, we now show (\ref{toshow}).
Under the EI-PUC policy, the values $\{y_n\}_{n=1}^N$ from the sequence of measured points (after the initial point $x=0$) are independent and identically distributed normal random variables, with mean 0 and variance $\epsilon^2$. The expected value under this policy is then $\E[M]$, where $M = \max\left\{0, y_1, \ldots, y_N \right\}$. 

Let $t$ be an arbitrary positive real number. Using Jensen's inequality, we obtain
\begin{align*}
\exp \bigl(\E[tM] \bigr) &\leq \E\bigl[\exp(tM)\bigr]\\
&=\E\bigl[\max\left\{1, \exp(ty_1), \dots, \exp(ty_N) \right\}\bigr]\\
&\leq\E\left[1 + \sum_{n=1}^N\exp(ty_n)\right]\\
&=1 + N\exp(t^2\epsilon^2/2)\\
&\leq (N+1)\exp(t^2\epsilon^2/2),
\end{align*}
where we note in the last step that 
$\exp(t^2\epsilon^2/2) \ge 1$.

Thus, $\E[M] \leq \ln(N + 1)/t + (t\epsilon^2)/2$, and taking $t=\sqrt{2\ln(N+1)}/\epsilon$ we obtain 
\begin{equation*}
   \E[M]\leq \epsilon\sqrt{2\ln(N+1)} \le \epsilon \sqrt{2\ln\left(\frac{1+\delta + \epsilon}{\epsilon}\right)}.
\end{equation*}

Using L'H\^{o}pital's rule, we see that the right hand side of the above inequality converges to 0 as $\epsilon \to 0$ and, therefore, so does $\E[M]$.

\textbf{Part 2: EI.}
We now show the result for EI. 
We consider almost the same setup as in the example above, with the only difference being that the low-cost points now have variance $(1-\epsilon)^2$.

This time $\mathrm{EI}(x) = (1-\epsilon) \, \E[Z^+]$ for the low-cost points and $\mathrm{EI}(x) = \E[Z^+]$ for the high-cost point. Hence, the EI policy evaluates the high-cost point and exhausts the budget after that single evaluation, thus implying that $V^\mathrm{EI}(\D_0) = \E[Z^+]$.

Now consider the policy that measures low-cost points only. The expected value under this policy is $\E[M]$, where where $M = \max\left\{0, y_1, \ldots, y_N \right\}$, and $y_1,\ldots, y_N$ are independent identically distributed normal random variables with mean $0$ and variance $(1-\epsilon)^2$.

We have
\begin{equation*}
    \E[M] \geq \E\left[\max_{n=1,\ldots, N}y_n\right]
    \geq (1-\epsilon)\sqrt{a\ln N}
     \ge (1-\epsilon) \sqrt{a\ln\left(\frac{1+\delta}{\epsilon}-1\right)},
\end{equation*}
where $a=(\pi\ln 2)^{-1}$ and the second inequality follows by Theorem 1 in  \cite{kamath2015bounds}. The expression on the right-hand side of the above inequality goes to $\infty$ as $\epsilon\to 0$. Since $V^*(\D_0)\geq \E[M]$, it follows that, for any $\rho > 0$, there exists $\epsilon$ small enough whose corresponding problem instance satisfies
\begin{equation*}
V^*(\D_0) > \rho \, V^\mathrm{EI}(\D_0),
\end{equation*}
which completes the proof.
\end{proof}

\section{Proof of Proposition 1}
\label{supp:prop1}
\begin{proposition}
\label{prop:bcei_app}
Suppose that $f$ and $\ln c$ follow independent Gaussian process prior distributions and that $\D$ is an arbitrary set of observations. Define $\mu^f_\D(x) = \mathbb{E}[ f(x) \,|\, \D]$, $\mu^{\ln c}_\D(x) = \mathbb{E}[ \ln c(x) \,|\, \D]$, $\sigma^f_\D(x) = \textnormal{Var}[ f(x) \,|\, \D]^{1/2}$, and $\sigma^{\ln c}_\D(x) = \textnormal{Var}[ \ln c(x) \,|\, \D]^{1/2}$. Then,
\begin{equation*}
    Q_1(x\mid \D) = \begin{cases} \textnormal{EI}^f(x\mid\D) \, \Phi\left(\frac{\ln\left((B - s(\D)\right) - \mu_\D^{\ln c}(x)}{\sigma_\D^{\ln c}(x)} \right), \textnormal{ if } s(\D) < B,\\
0, \textnormal{ otherwise,}
\end{cases}
\end{equation*}
where
\begin{equation*}
\textnormal{EI}^f(x\mid\D) = \Delta(x)\Phi\left(\frac{\Delta_\D(x)}{\sigma_\D^f(x)}\right) + \sigma^f(x)\varphi\left(\frac{\Delta_\D(x)}{\sigma_\D^f(x)}\right)    
\end{equation*}
is the classical expected improvement computed with respect to $f$; $\varphi$ and $\Phi$ are the standard normal pdf and cdf, respectively; and $\Delta_\D(x) = \mu^f(x) -  u(\mathcal{S})$.
\end{proposition}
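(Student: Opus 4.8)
The plan is to exploit the assumed independence of $f$ and $\ln c$ to factor the expectation defining $Q_1$ into a product of two one-dimensional expectations, each of which admits a closed form. First I would recall that, by the definition of the utility $u$, we have $r(\D, \D\cup\{(x,y,z)\}) = \max\{u(\D), y\} - u(\D) = (y - u(\D))^+$, so that
\begin{equation*}
Q_1(x\mid\D) = \E_{y,z}\left[(y - u(\D))^+\, \indicate{s(\D) + z \le B}\right].
\end{equation*}
Under the posterior given $\D$, the quantity $y$ is a draw of $f(x)$ and the cost $z$ is a draw of $c(x) = \exp(\ln c(x))$. Because $f$ and $\ln c$ follow \emph{independent} GP priors, and objective observations inform only $f$ while cost observations inform only $\ln c$, the two posteriors remain independent; hence $y$ and $z$ are independent random variables. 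Since the integrand is a product of a function of $y$ alone and a function of $z$ alone, the expectation factors as
\begin{equation*}
Q_1(x\mid\D) = \E_{y}\left[(y - u(\D))^+\right] \cdot \mathbb{P}\left(s(\D) + z \le B\right).
\end{equation*}

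The remaining steps treat each factor in turn. The first factor is, by definition, the classical expected improvement $\textnormal{EI}^f(x\mid\D)$ of $y \sim \mathcal N\!\left(\mu^f_\D(x), \sigma^f_\D(x)^2\right)$ against the incumbent $u(\D)$, and I would evaluate it using the standard one-dimensional Gaussian integral to recover the stated closed form with $\Delta_\D(x) = \mu^f_\D(x) - u(\D)$. For the second factor, I would first dispose of the case $s(\D) \ge B$: there the strictly positive cost $z$ can never satisfy $s(\D) + z \le B$, so the probability is $0$ and $Q_1(x\mid\D) = 0$. Assuming then that $s(\D) < B$, so that $B - s(\D) > 0$, I would write $\mathbb{P}(s(\D) + z \le B) = \mathbb{P}(z \le B - s(\D))$ and take logarithms of both sides of the event. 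Since $\ln z \sim \mathcal N\!\left(\mu^{\ln c}_\D(x), \sigma^{\ln c}_\D(x)^2\right)$, standardizing yields exactly $\Phi(\zeta)$ with $\zeta = \bigl(\ln(B - s(\D)) - \mu^{\ln c}_\D(x)\bigr)/\sigma^{\ln c}_\D(x)$. Multiplying the two factors gives the claimed expression.

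I do not expect a substantial obstacle: the argument is a direct consequence of independence together with two well-known Gaussian integrals. The only points requiring care are bookkeeping ones. First, one must justify that the factorization is legitimate, which reduces to verifying that the posteriors of $f(x)$ and $\ln c(x)$ are independent given $\D$, a property inherited from the independent priors and the separated likelihoods. Second, one should check the boundary behavior at $s(\D) = B$: there $\ln(B - s(\D)) \to -\infty$ forces $\Phi(\zeta) = 0$, so the closed form is continuous with the trivial case and is consistent with the indicator $\indicate{s(\D) \le B}$ appearing in the main-text statement of the proposition.
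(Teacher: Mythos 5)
Your proposal is correct and follows essentially the same route as the paper's proof: factor the expectation using the independence of $f$ and $\ln c$ into the classical expected improvement times the probability $\mathbb{P}(s(\D) + c(x) \le B)$, handle the $s(\D) \ge B$ case trivially, and otherwise standardize $\ln c(x)$ to obtain $\Phi(\zeta)$. Your added remarks on why posterior independence is inherited from the priors and on continuity at the budget boundary are sound but not needed beyond what the paper does.
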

\begin{proof}
Recall that
\begin{equation*}
     Q_1(x \mid \D) = \E\left[(f(x) - u(\D))^+\; \indicate{s(\D) + c(x) \leq B}\right].
\end{equation*}
But, since $f$ and $\ln c$ are assumed to be independent, 
\begin{align*}
     Q_1(x \mid \D) &= \E\left[(f(x) - u(\D))^+\right] \E\left[\indicate{s(\D) + c(x) \leq B}\right]\\
     &= \mathrm{EI}^f(x\mid\D) \mathbb{P}\left(s(\D) + c(x) \leq B\right).
\end{align*}

Now note that, if $s(\D) \geq B$, then $\mathbb{P}\left(s(\D) + c(x) \leq B\right)=0$. If, on the other hand, $s(\D) < B$, then
\begin{align*}
    \mathbb{P}\left(s(\D) + c(x) \leq B\right) & = \mathbb{P}\left(\ln c(x) \leq \ln\left(B - s(\D)\right)\right)\\
    &= \Phi\left(\frac{\ln\left((B - s(\D)\right) - \mu_\D^{\ln c}(x)}{\sigma_\D^{\ln c}(x)} \right),
\end{align*}
which concludes the proof.
\end{proof}

\section{Closed-Form Expressions of EI-PUC and EI-PUC-CC}
\label{supp:closed_form}
Recall that the EI-PUC and EI-PUC-CC acquisition functions are defined by
\begin{equation*}
    \mathrm{EI\textnormal{-}PUC}(x\mid\D) = \E_\D\left[\frac{\{f(x)-f_n^*\}^+}{c(x)}\right],   
\end{equation*}
and
\begin{equation*}
 \mathrm{EI\textnormal{-}PUC\textnormal{-}CC}(x\mid\D) = \E_\D\left[\frac{\{f(x)-f_n^*\}^+}{c(x)^\nu}\right],   
\end{equation*}
respectively, where $\nu$ is the ratio between the remaining budget and the initial budget.

Here we show that, under the same conditions of Proposition 1, EI-PUC and EI-PUC-CC have closed form expressions. This is summarized in Proposition 2 below. Note that this result gives the closed-form expression of EI-PUC as a special case by taking $\nu=1$.

\begin{proposition}
\label{prop:eipuc}
Let $\nu$ be an arbitrary positive real number and suppose that the conditions in Proposition 1 are satisfied. Then,
\begin{equation*}
 \E_\D\left[\frac{\{f(x)-f_n^*\}^+}{c(x)^\nu}\right]   = \mathrm{EI}^{f}(x\mid \D)\exp(-\nu \mu_{\D}^{\ln c}(x) + \nu^2\sigma_\D^{\ln c}(x)^2/2),
\end{equation*}
\end{proposition}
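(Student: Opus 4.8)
The plan is to exploit the independence of $f$ and $\ln c$ assumed in Proposition~\ref{prop:bcei_app}, which causes the expectation to factor. Writing $c(x)^{-\nu} = \exp(-\nu \ln c(x))$ and using that the improvement term depends only on $f$ while the cost term depends only on $\ln c$, I would first split
\[
\E_\D\left[\frac{\{f(x)-f_n^*\}^+}{c(x)^\nu}\right] = \E_\D\left[\{f(x)-f_n^*\}^+\right] \cdot \E_\D\left[\exp(-\nu \ln c(x))\right].
\]
The first factor is, by definition, the classical expected improvement $\mathrm{EI}^f(x\mid\D)$ computed with respect to $f$, so it requires no further manipulation.

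It then remains to evaluate the second factor. Under the posterior given $\D$, the quantity $\ln c(x)$ is normally distributed with mean $\mu_\D^{\ln c}(x)$ and variance $\sigma_\D^{\ln c}(x)^2$, so $\E_\D[\exp(-\nu \ln c(x))]$ is exactly the moment generating function of this normal random variable evaluated at $-\nu$. Applying the standard identity $\E[\exp(tW)] = \exp\bigl(t\mu + t^2\sigma^2/2\bigr)$ for $W \sim N(\mu,\sigma^2)$, with $t = -\nu$, $\mu = \mu_\D^{\ln c}(x)$, and $\sigma^2 = \sigma_\D^{\ln c}(x)^2$, yields the log-normal factor $\exp\bigl(-\nu \mu_\D^{\ln c}(x) + \nu^2 \sigma_\D^{\ln c}(x)^2/2\bigr)$. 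Multiplying the two factors produces the claimed expression.

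I do not expect any real obstacle, since the argument reduces to independence plus the log-normal moment formula, mirroring the separation of improvement and cost terms already used in the proof of Proposition~\ref{prop:bcei_app} (where the cost term appeared as a probability rather than a fractional moment). The one point I would verify explicitly is that the second expectation is finite, so that the factorization is legitimate and the right-hand side is well-defined: this is immediate because the normal moment generating function is finite for every real argument, whence $c(x)^{-\nu}$ is integrable for all $\nu$. Taking $\nu = 1$ then recovers the closed form for EI-PUC as the special case noted in the statement.
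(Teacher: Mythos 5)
Your proposal is correct and follows essentially the same route as the paper's proof: factor the expectation using the independence of $f$ and $\ln c$, rewrite $c(x)^{-\nu}$ as $\exp(-\nu \ln c(x))$, and apply the normal moment generating function. Your added remark on finiteness of the log-normal moment is a harmless extra check not present in the paper.
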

\begin{proof}
Since $f$ and $\ln c$ are assumed to be independent,
\begin{align*}
  \E_\D\left[\frac{\{f(x)-f_n^*\}^+}{c(x)^\nu}\right]  &=  \E_\D\left[\{f(x)-f_n^*\}^+\right]\E_\D\left[1/c(x)^\nu\right]\\
  &= \mathrm{EI}^{f}(x\mid\D) \, \E_\D\left[\exp(-\nu \ln c(x))\right].
\end{align*}
The well-known formula for the moment generating function of normal random variable gives
\begin{equation*}
  \E_\D\bigl[\exp(-\nu \ln c(x))\bigr] = \exp\bigl(-\nu \mu_{\D}^{\ln c}(x) + \nu^2\sigma_\D^{\ln c}(x)^2/2 \bigr), 
\end{equation*}
which completes the proof.
\end{proof}

\section{Synthetic Test Problems Details}
\label{supp:synthetic}
Below we provide additional details on the synthetic test problems used in the numerical experiments.
\subsubsection*{Dropwave:}
\begin{itemize}
    \item $f(x)=\left(1 + 12\sqrt{x_1^2 + x_2^2}\right)/\left(0.5(x_1^2 + x_2^2) + 2\right)$.
    \item $\X = [-5.12, 5.12]^2$.
    \item $\alpha\in[0.75, 1.5]$, $\beta\in[2\pi/5.12, 6\pi/5.12]$, $\gamma\in[0, 2\pi]$.
\end{itemize}

\subsubsection*{Alpine1:}
\begin{itemize}
\item $f(x)= \sum_{i=1}^d\left|x_i \sin(x_i) + 0.1 x_i\right|$.
\item $\X = [-10, 10]^d$.
\item $\alpha\in[0.75, 1.5]$, $\beta\in[2\pi, 6\pi]$, $\gamma\in[0, 2\pi]$.
\item In our experiment, we set $d=3$.
\end{itemize}

\subsubsection*{Ackley:}
\begin{itemize}
\item $f(x)= 20 \exp\left(-\frac{1}{5}\sqrt{\frac{1}{d}\sum_{i=1}^D x_i^2}\right) + \exp\left(\frac{1}{D}\sum_{i=1}^D \cos(2\pi x_i)\right) - 20 - \exp(1)$.
\item $\X = [-1, 1]^d$.
\item $\alpha\in[0.75, 1.5]$, $\beta\in[2\pi, 6\pi]$, $\gamma\in[0, 2\pi]$.
\item In our experiment, we set $d=3$.
\end{itemize}

\subsubsection*{Shekel5:}
\begin{itemize}
\item $f(x)= \sum_{j=1}^5\left(\sum_{i=1}^4\left(x_i - C_{ij}\right)^2 + b_j\right)^{-1}$, where $b = (1, 2, 2, 4, 4)/10$, and
    \begin{align*}
        C &= \begin{pmatrix}
4 & 1 & 8 & 6 & 3\\
4 & 1 & 8 & 6 & 7\\
4 & 1 & 8 & 6 & 3\\
4 & 1 & 8 & 6 & 7
\end{pmatrix}.
    \end{align*}
\item $\X = [0, 10]^4$.
\item $\alpha\in[0.75, 1.5]$,$\beta\in[2\pi/4, 3\pi/4]$, $\gamma\in[0, 2\pi]$.
\end{itemize}

\section{AutoML Surrogate Models}
\label{supp:automl}
The LDA and CNN problems use surrogate models to emulate the computationally expensive process of training the corresponding ML model. The objective and cost surrogate models for CNN are obtained directly from the HPOLib library, and were originally constructed by fitting independent random forest regression models to the objective and cost evaluations over a uniform grid of points. These evaluations are obtained by training a 3-layer convolutional neural network on the CIFAR-10 dataset. For the LDA problem, we fit independent GPs to the objective and log-cost evaluations over a uniform grid of size 288, and then use the resulting posterior means as the surrogate models. These surrogate models can be found at \url{https://github.com/RaulAstudillo06/BudgetedBO}.

\section{Acquisition Function Optimization}
\label{supp:acqfopt}
All acquisition functions are optimized as follows. First, we evaluate the acquisition value at $200d$ points over $\X$, and $10d$ of these points are selected using the initialization heuristic described in Appendix F.1 of \cite{balandat2020botorch}. Then, we run L-BFGS-B \citep{byrd1995limited} starting from each point. The point with highest acquisition value of the $10d$ points resulting after running L-BFGS-B is then chosen as the next point to evaluate. 

For the variants of B-MS-EI, the $200d$ initial points are chosen using the warm-start initialization strategy described in Appendix D of \cite{jiang2020efficient}. This strategy uses the solution found when optimizing B-MS-EI during the previous iteration and identifies the branch originating at the root of the tree whose fantasy sample is closest to the value actually observed when evaluating the suggested candidate on the true function. For the other acquisition functions, the $200d$ initial points are chosen using a Sobol sampling design.

\section{Additional Plots, Initial Design, Hyperparameter Estimation, Runtimes, and Licenses}
\label{supp:additional}

In each experiment, an initial stage of evaluations is performed using $2(d+1)$ points chosen according to a quasi-random Sobol sampling design over $\X$. Experiments are replicated either 100 (Ackley and Robot Pushing) or 200 (all other test problems) independent trials and we report the average performance and 95\% confidence intervals. For all algorithms, the objective function and the log-cost are modeled using independent GPs with constant mean and Mat\'ern 5/2 covariance function. The length scales of these GP models are estimated via maximum a posteriori (MAP) estimation with Gamma priors.

The average runtimes of the algorithms in each problem are summarized in  Table~\ref{table:runtimes}. Figure~\ref{fig:experiments_supp} plots a 95\% confidence interval on mean log-regret versus the budget. In contrast with the plots in the main paper, these plots also include the results for the 2-step variants of our algorithm. Figure~\ref{fig:animation_supp} below shows the additional evaluations performed by EI-PUC and B-MS-EI within budget in the problem discussed in Figure 2 of the main paper.

\begin{table}
\caption{Average per-acquisition runtimes (in seconds) of the algorithms in each problem.}
\label{table:runtimes}
\begin{center}
\begin{footnotesize}
\begin{tabular}{lrrrrrrr}
\toprule
& EI & EI-PUC & EI-PUC-CC &  2-B-MS-EI$_\textnormal{p}$ &  2-B-MS-EI &  4-B-MS-EI$_\textnormal{p}$ &  4-B-MS-EI\\
\midrule
Dropwave  &  $4.1$ & $5.5$ & $6.2$ & $24.1$ & $26.7$ & $42.6$ & $87.2$\\
Alpine1    & $7.2$  & $9.9$ & $10.8$ & $36.44$ & $44.1$ & $106.1$ & $236.7$\\
Ackley    & $6.4$ & $7.3$ & $5.7$ & $16.6$ & $26.0$ & $95.8$ & $103.5$\\
Shekel5        & $4.5$ & $5.5$ &  $5.6$ & $18.3$ & $12.8$ & $105.8$ & $165.5$ \\
LDA & $5.4$ & $5.9$ &  $5.5$ & $20.7$ & $29.6$ & $141.9$ & $202.1$ \\
CNN & $11.7$ & $26.9$ &  $19.3$ & $79.3$ & $95.0$ & $286.1$ & $443.2$ \\
Robot & $5.9$ & $4.0$ &  $4.7$ & $32.2$ & $34.3$ & $108.2$ & $193.7$ \\
{Boston} & $1.8$ & $1.6$ &  $1.5$ & $15.7$ & $19.7$ & $65.1$ & $90.5$ \\

\bottomrule
\end{tabular}
\end{footnotesize}
\end{center}
\end{table}

\begin{figure}
  \centering
 \includegraphics[width=0.24\textwidth]{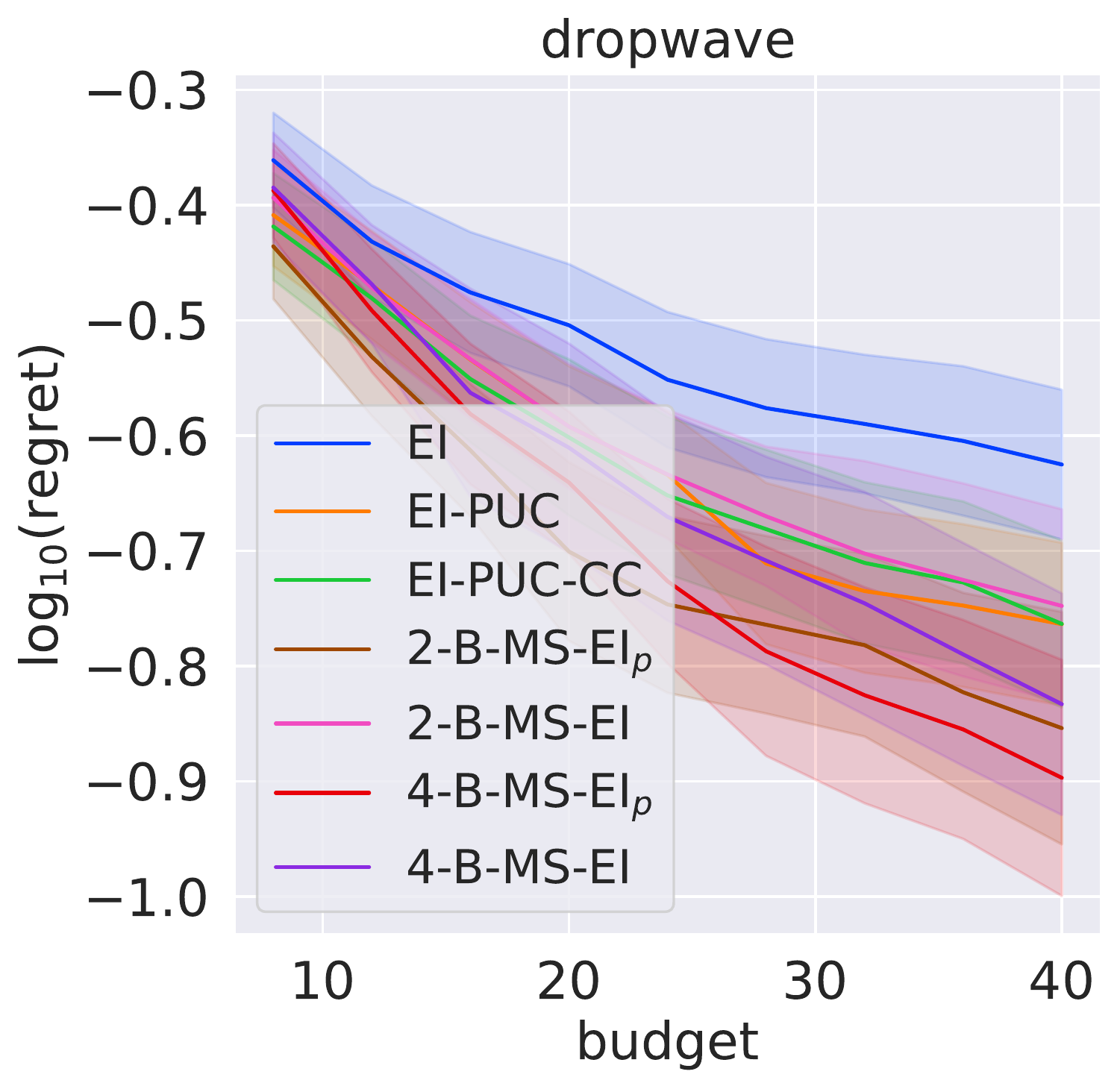}
 \includegraphics[width=0.24\textwidth]{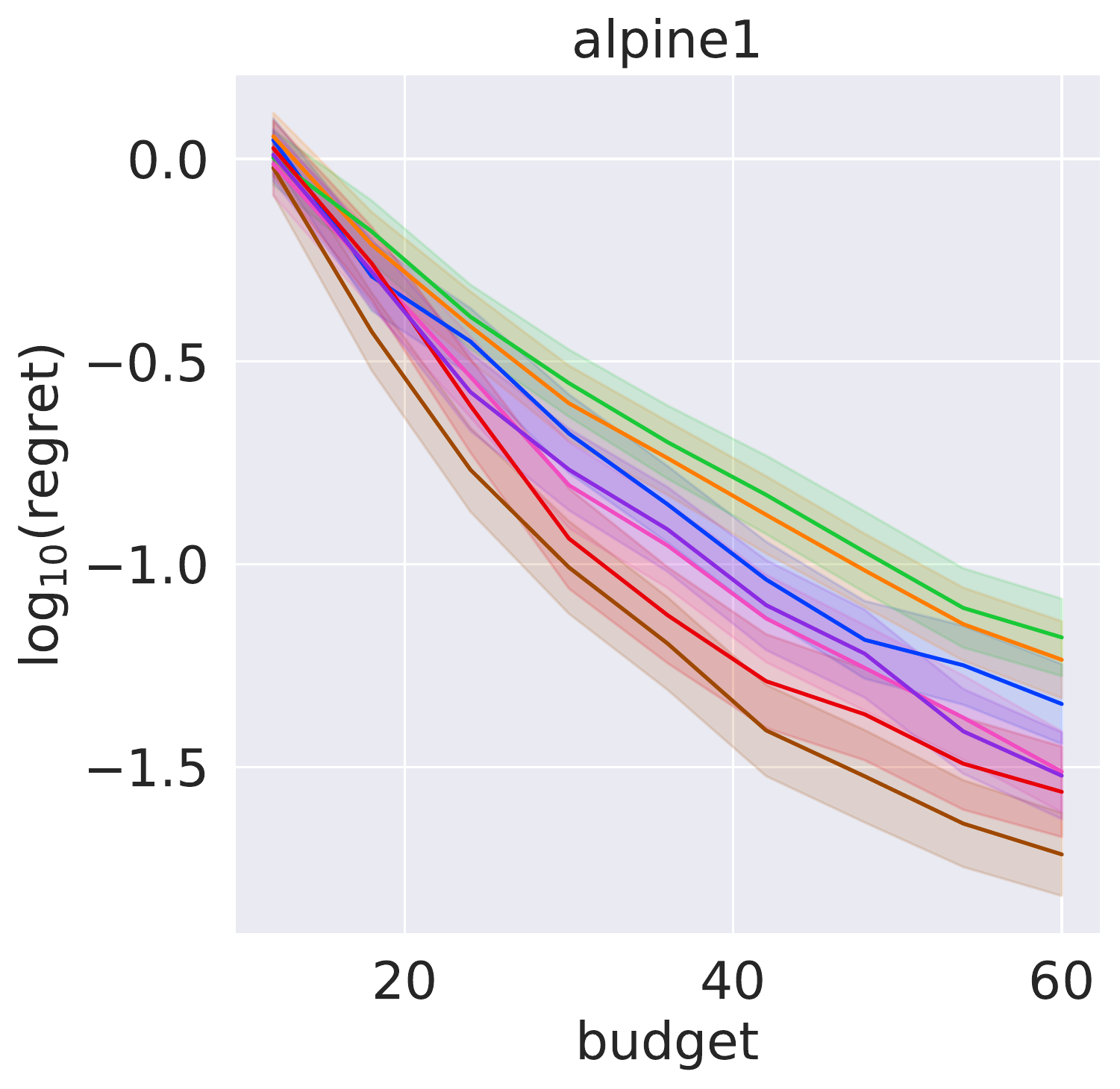}
 \includegraphics[width=0.24\textwidth]{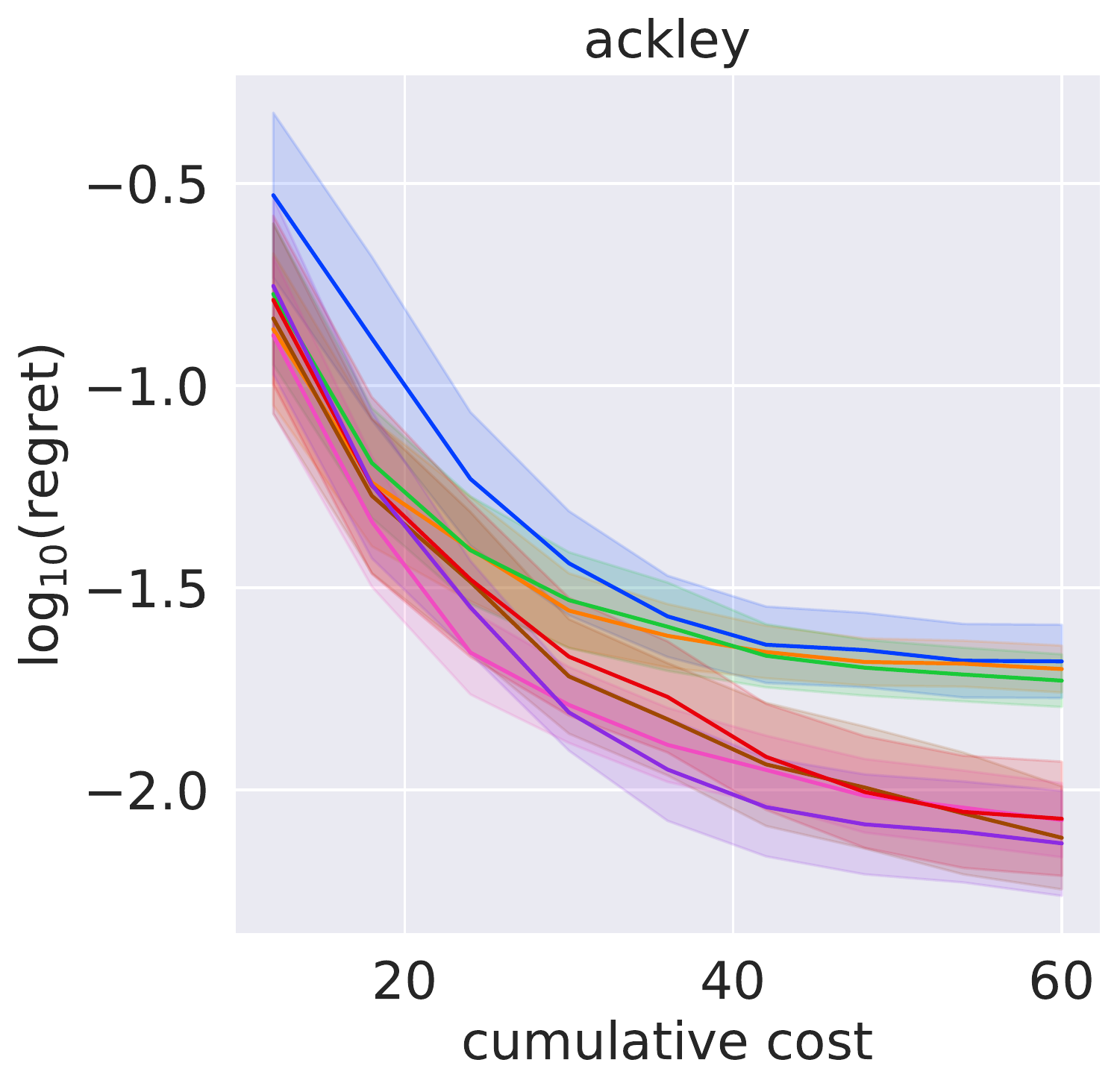}
 \includegraphics[width=0.24\textwidth]{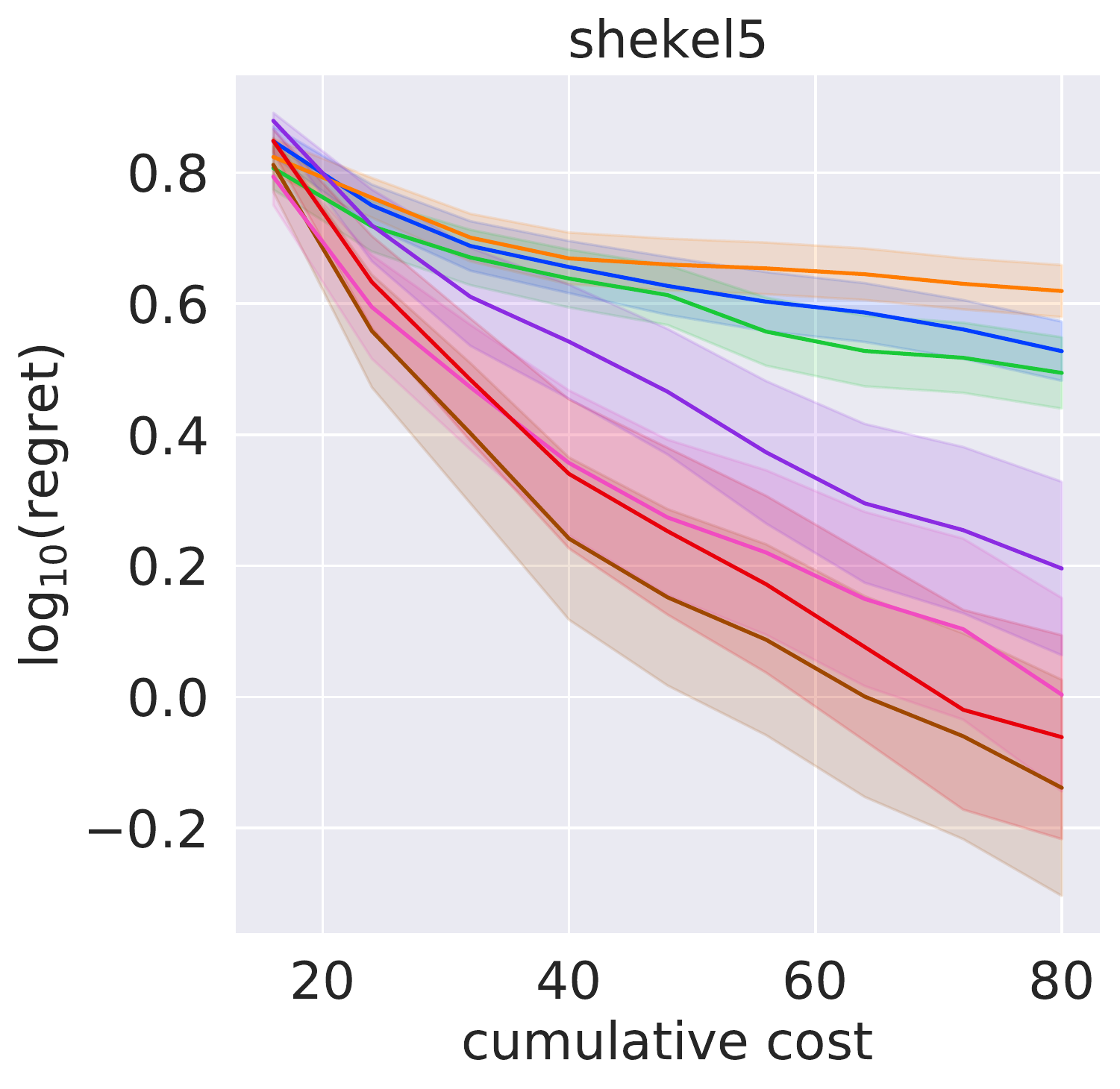}
 \\
 \includegraphics[width=0.235\textwidth]{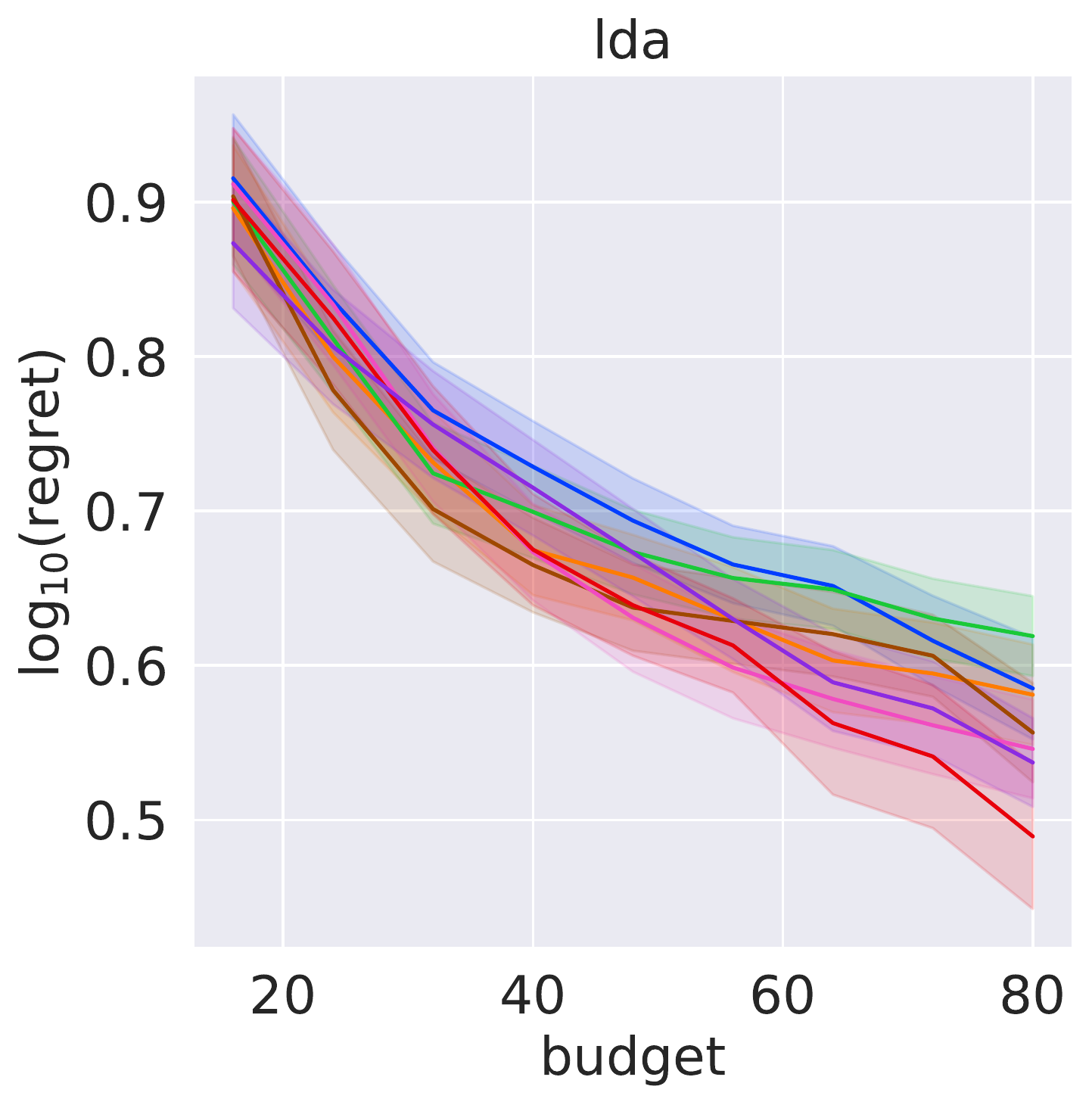}
 \includegraphics[width=0.245\textwidth]{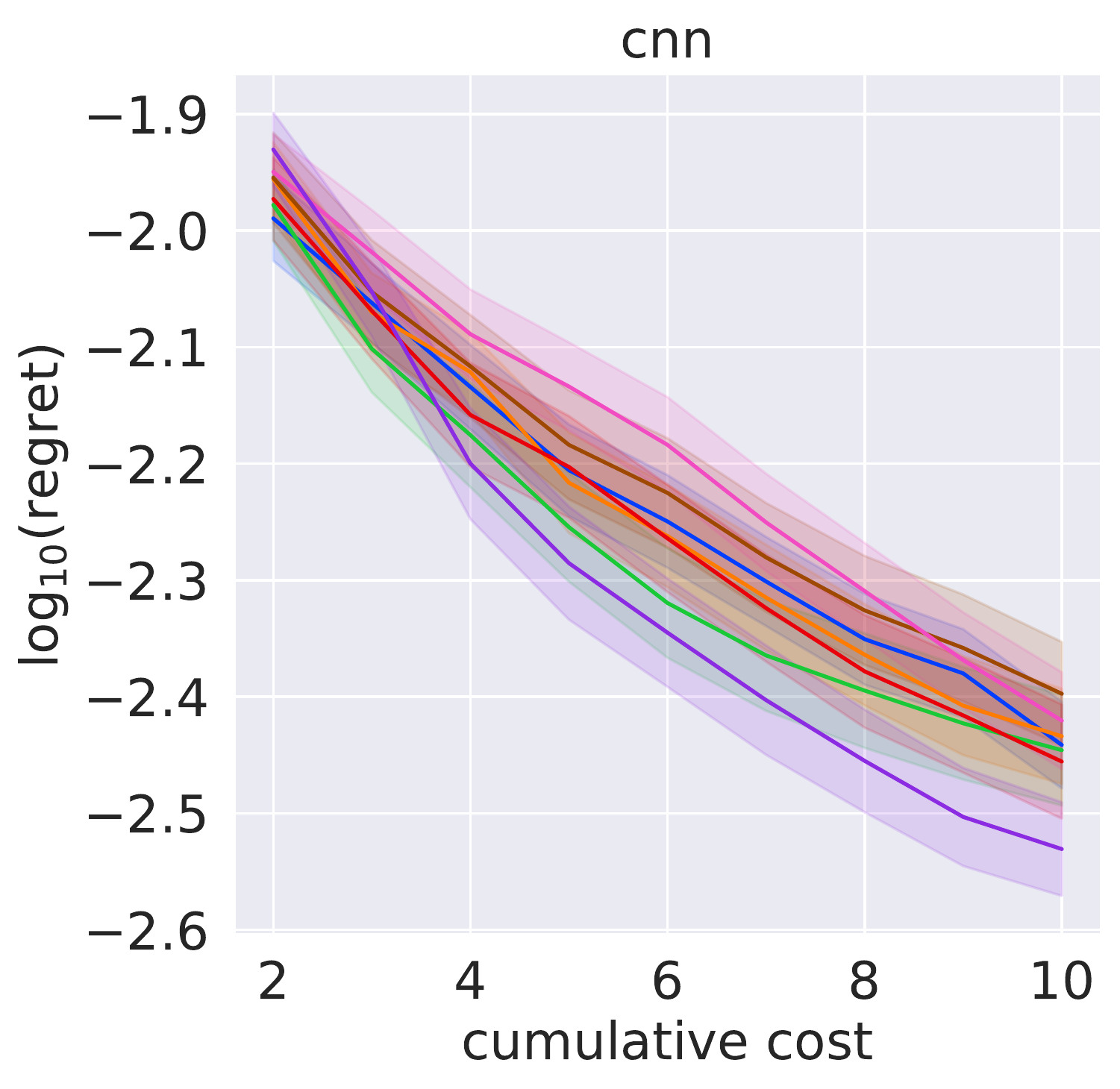}
\includegraphics[width=0.25\textwidth]{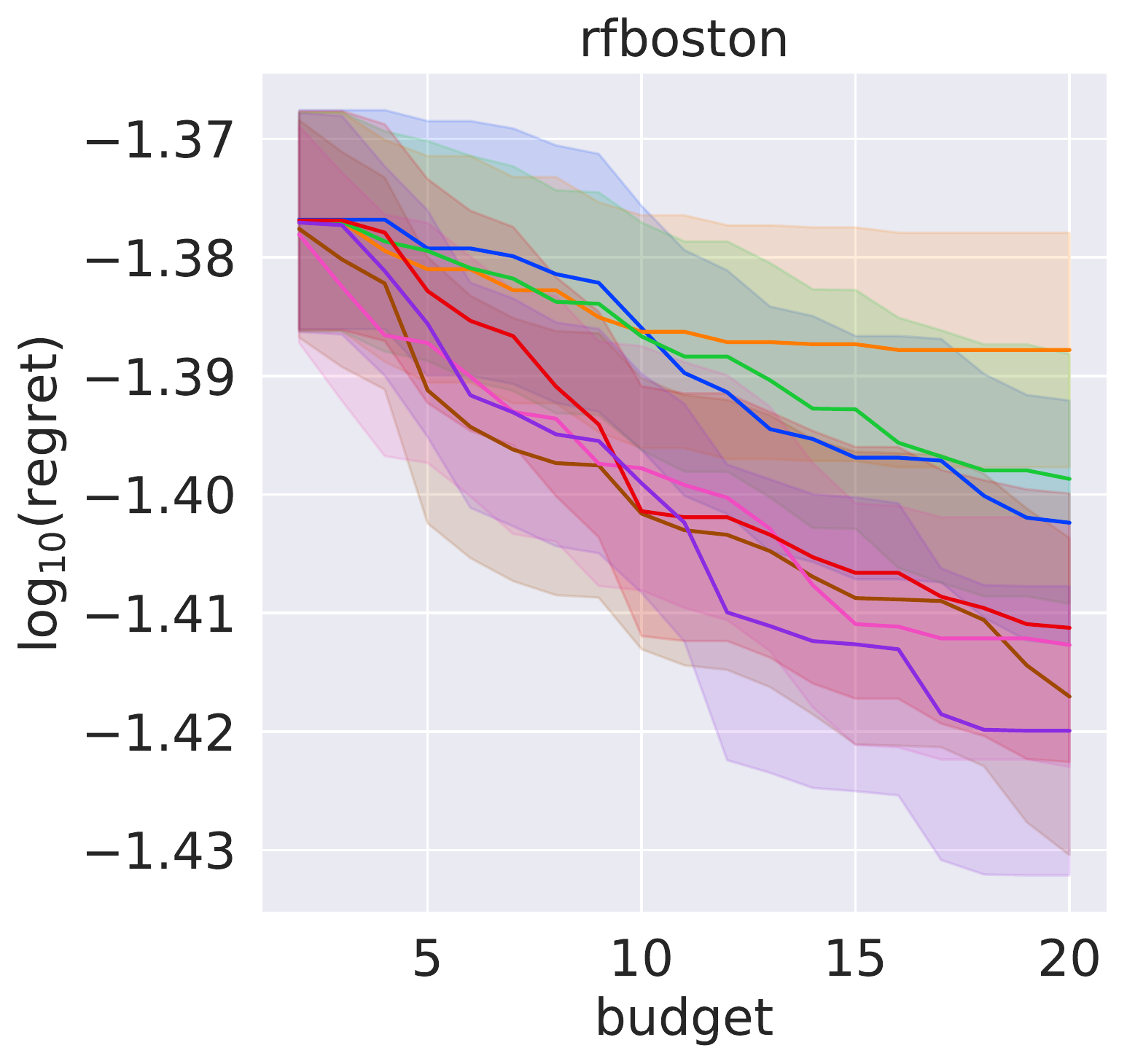}
\includegraphics[width=0.245\textwidth]{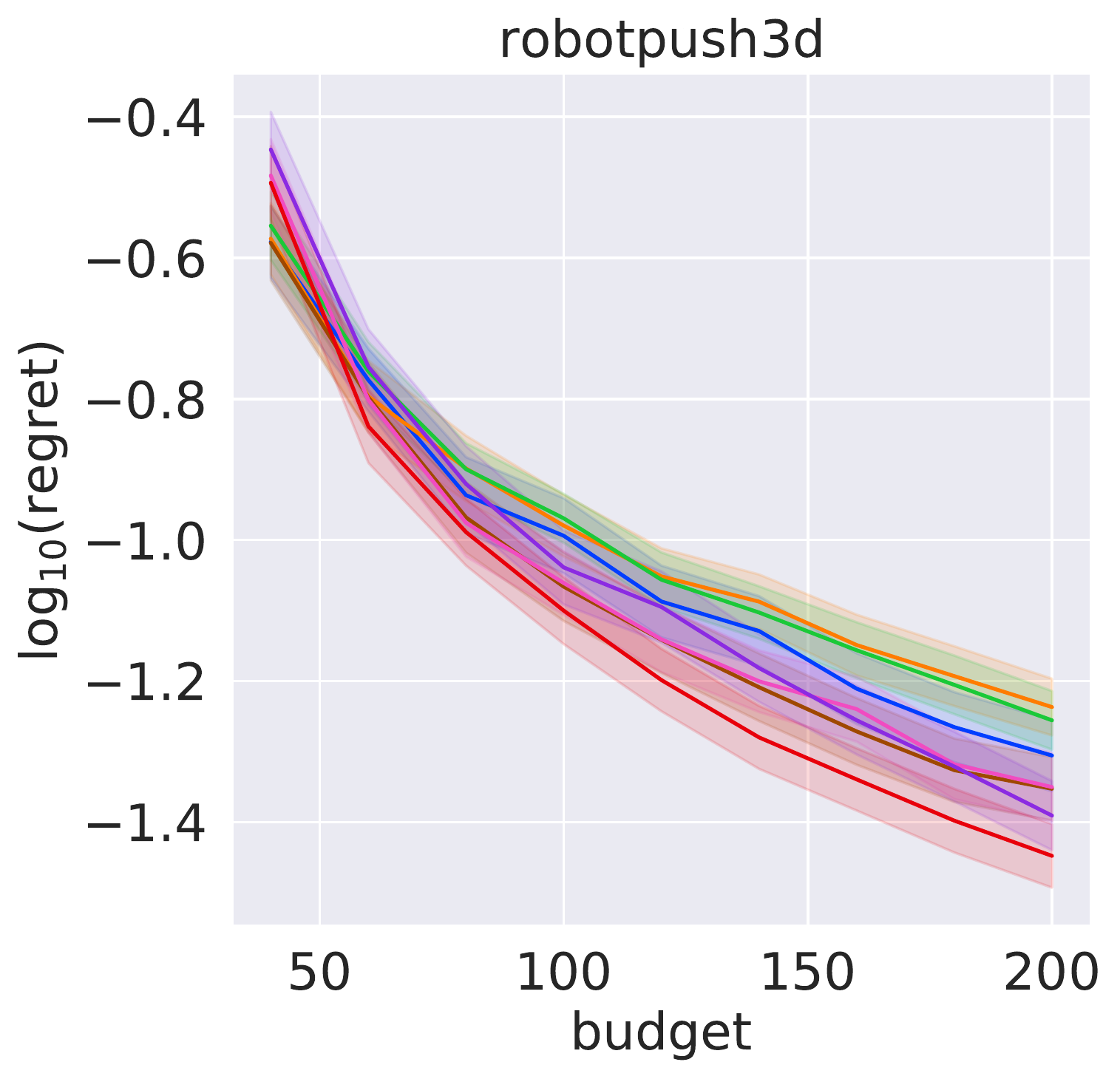}
  \caption{{Log-regret of our non-myopic budget-aware BO methods compared with baseline acquisition functions on a range of problems.} \label{fig:experiments_supp}}
\end{figure}

\begin{figure}
\centering
\subfloat[EI-PUC]{%
\begin{tabular}[b]{c}%
\includegraphics[width=0.33\textwidth]{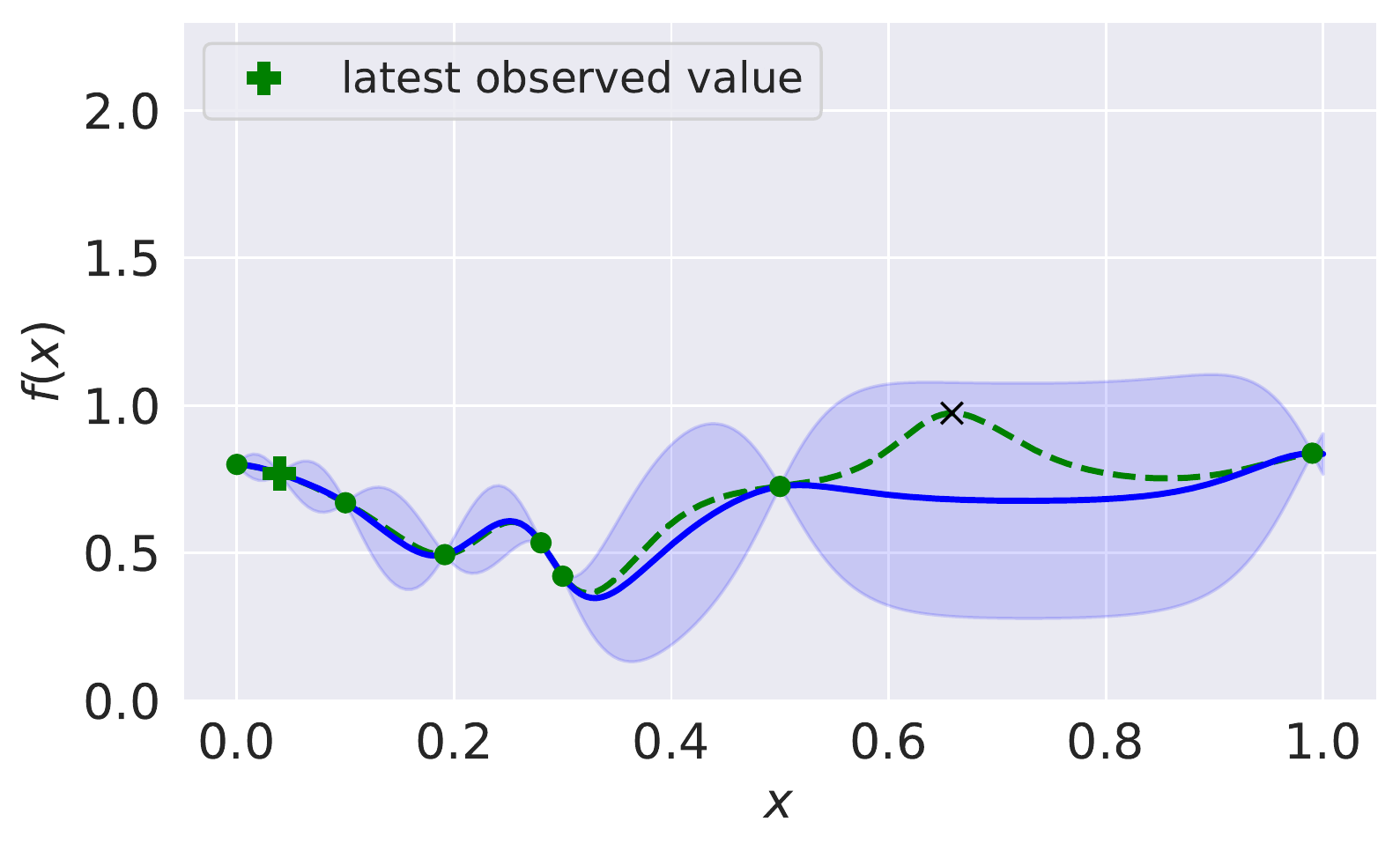}
  \includegraphics[width=0.33\textwidth]{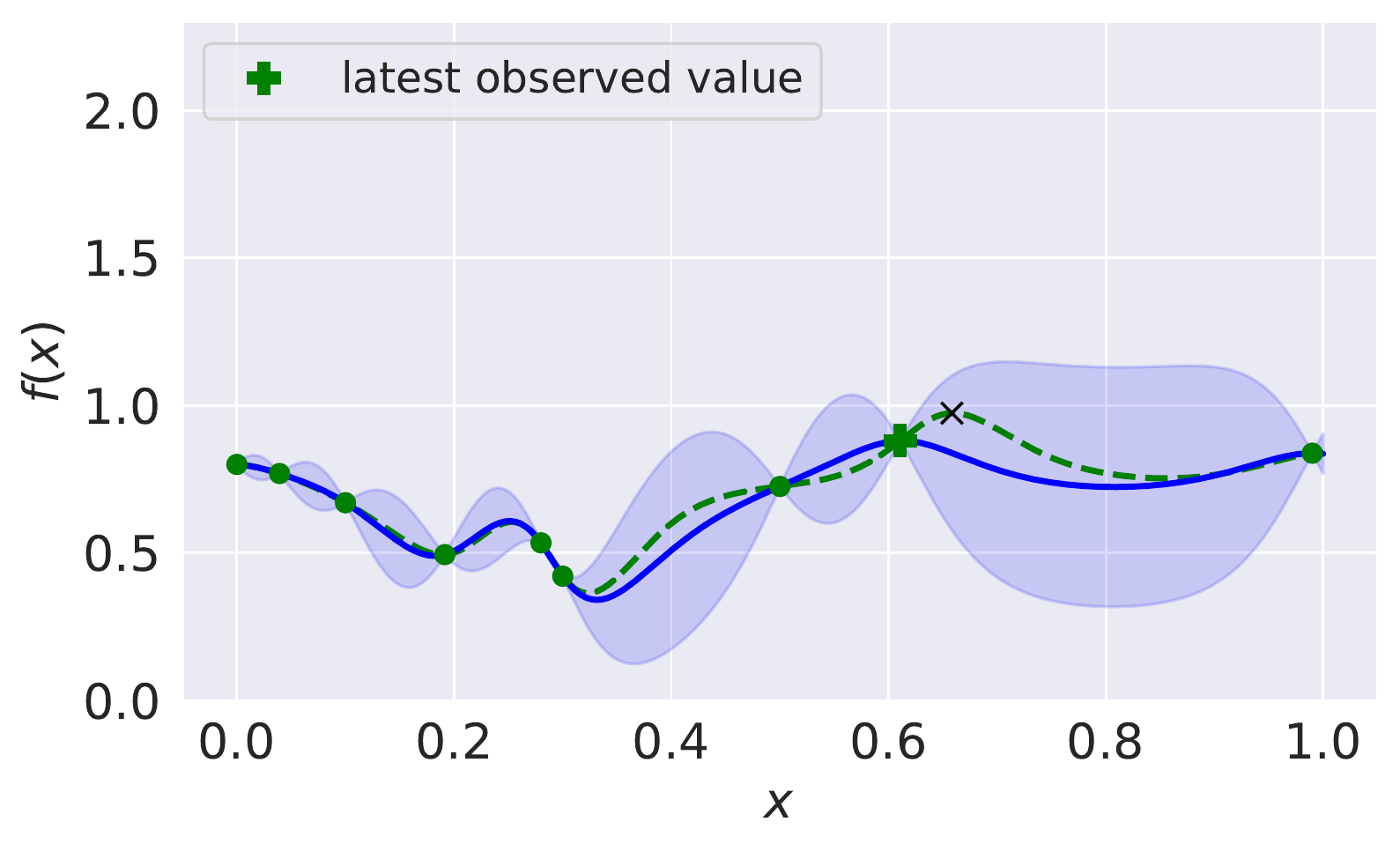}
  \phantom{\includegraphics[width=0.33\textwidth]{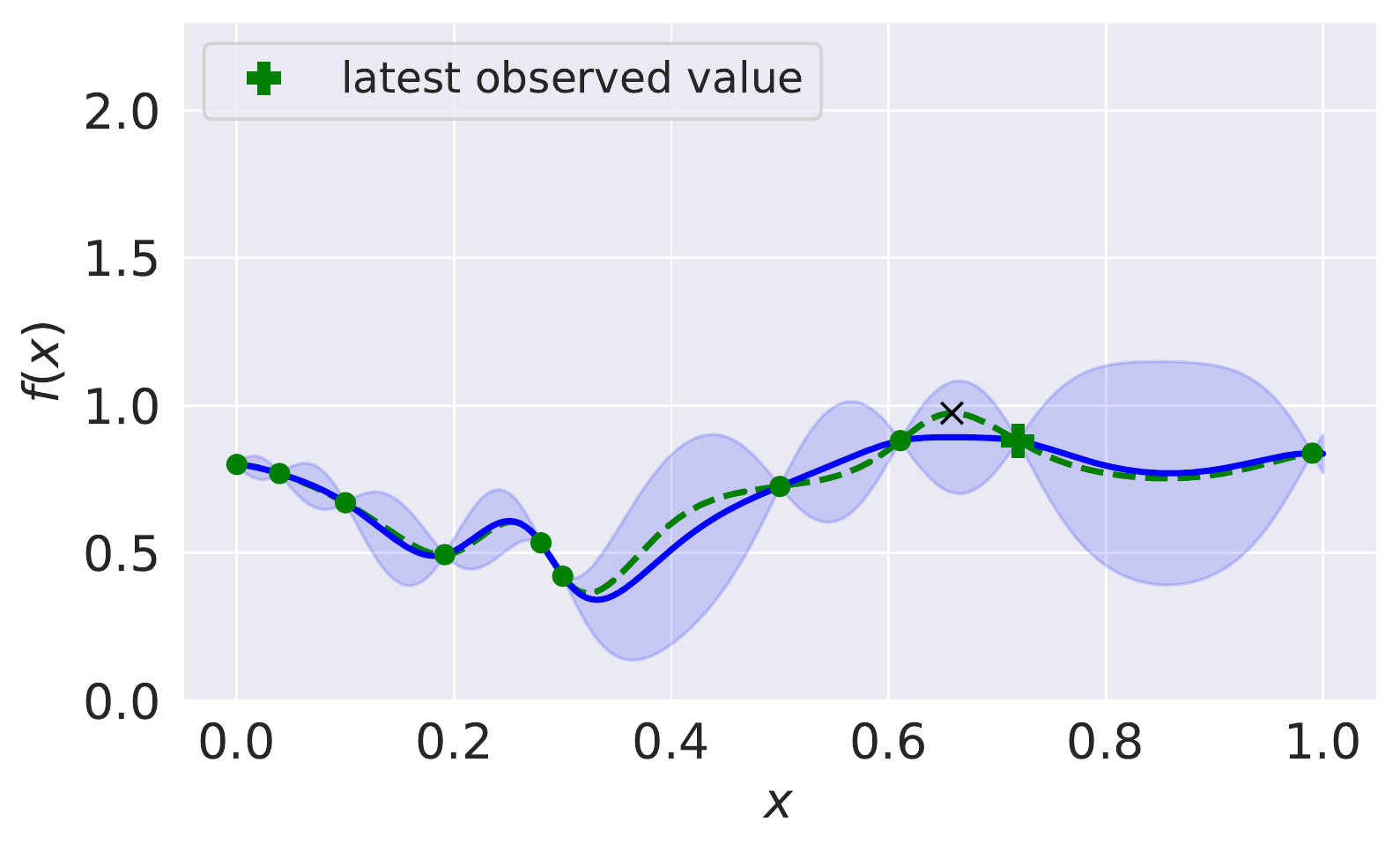}}\\
    \includegraphics[width=0.33\textwidth]{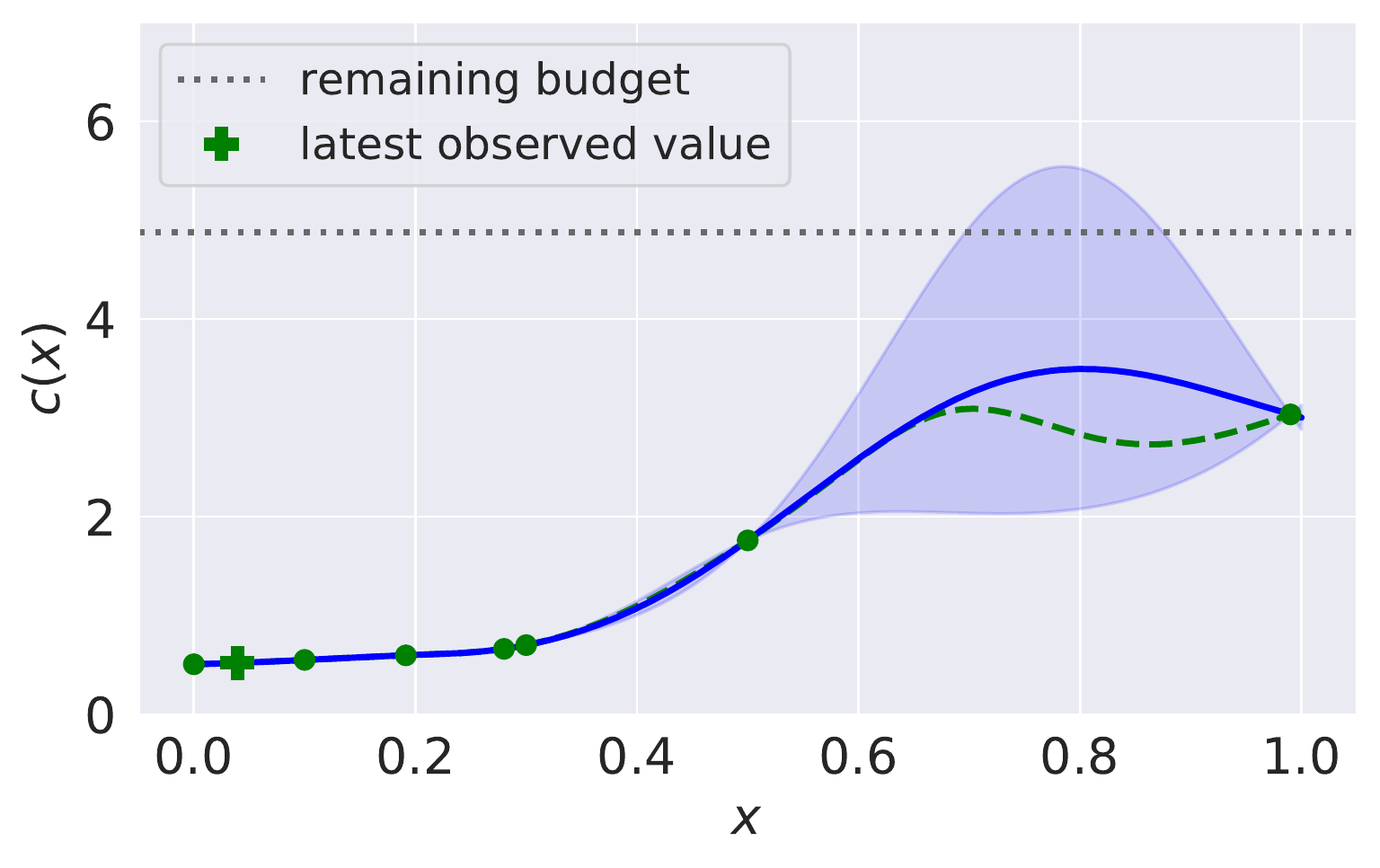}
  \includegraphics[width=0.33\textwidth]{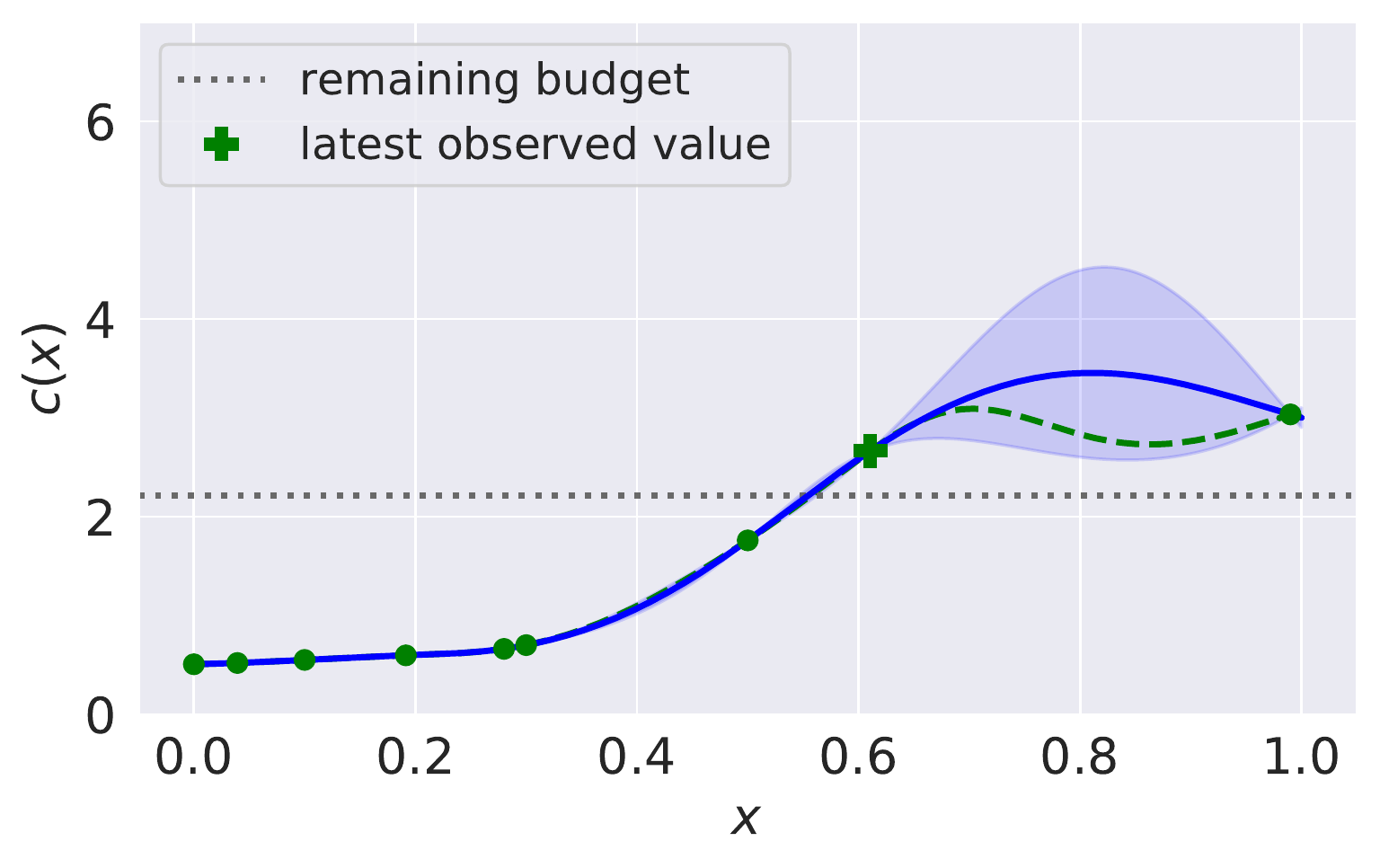}
  \phantom{\includegraphics[width=0.33\textwidth]{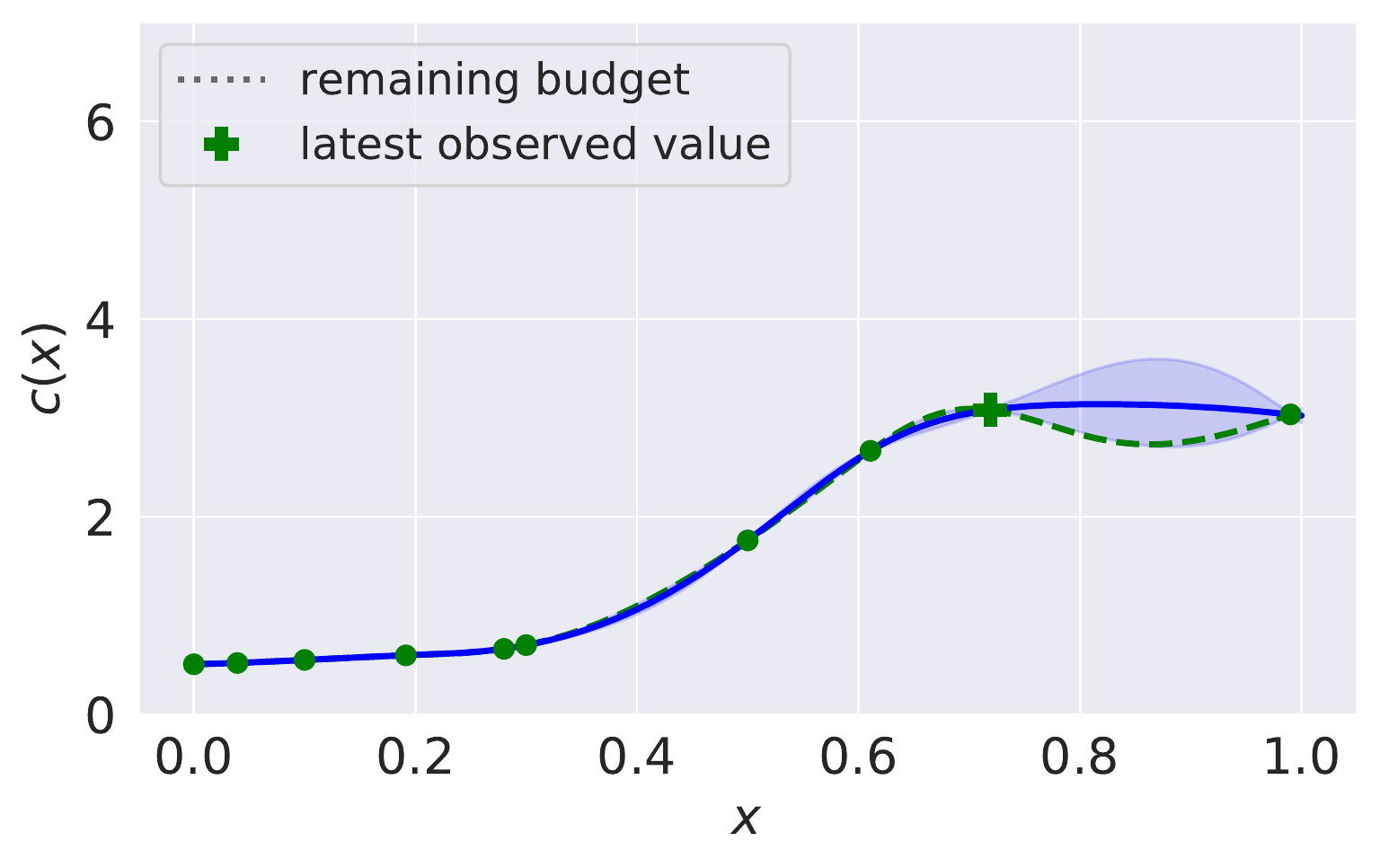}}\\
   \includegraphics[width=0.33\textwidth]{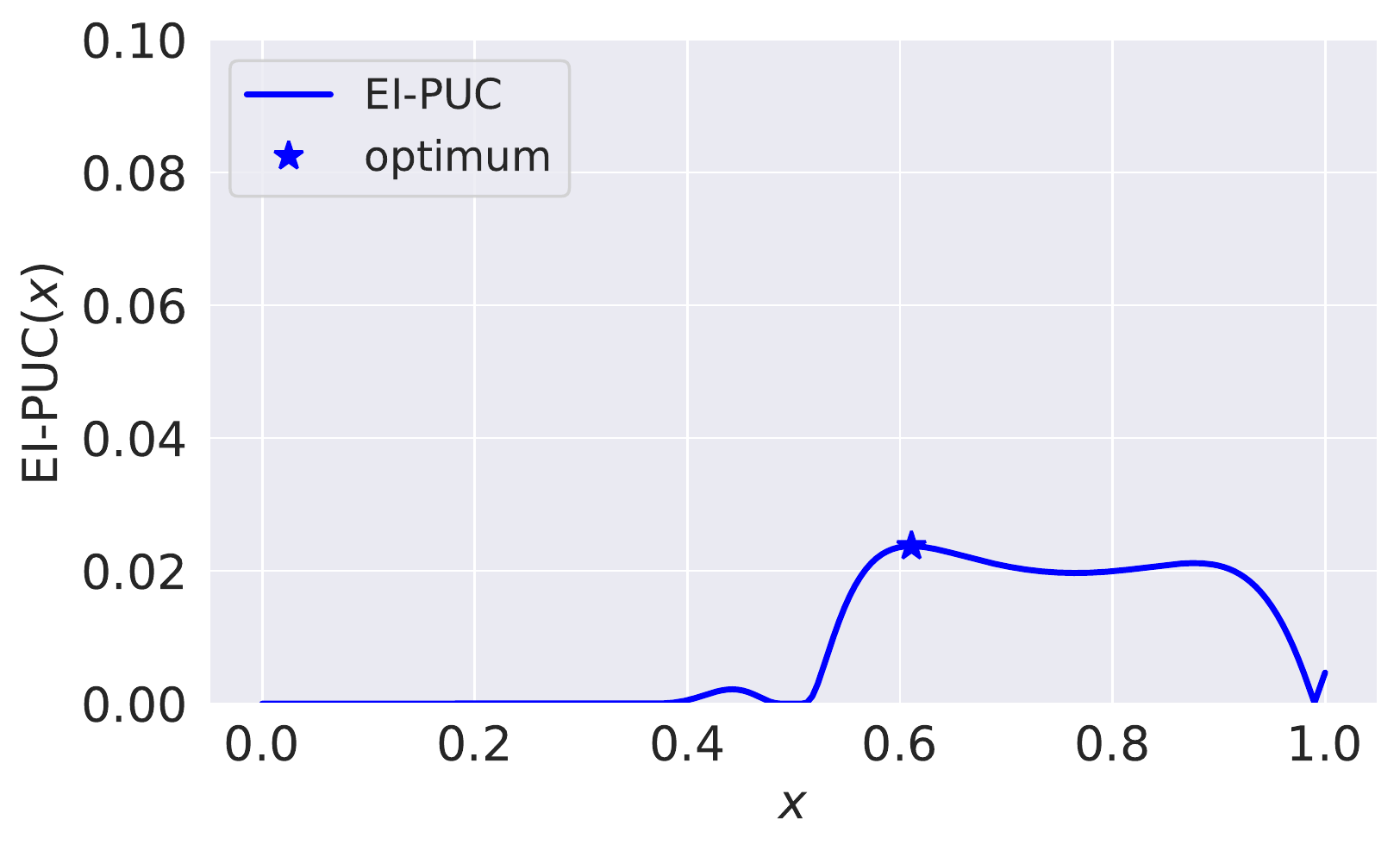}
  \includegraphics[width=0.33\textwidth]{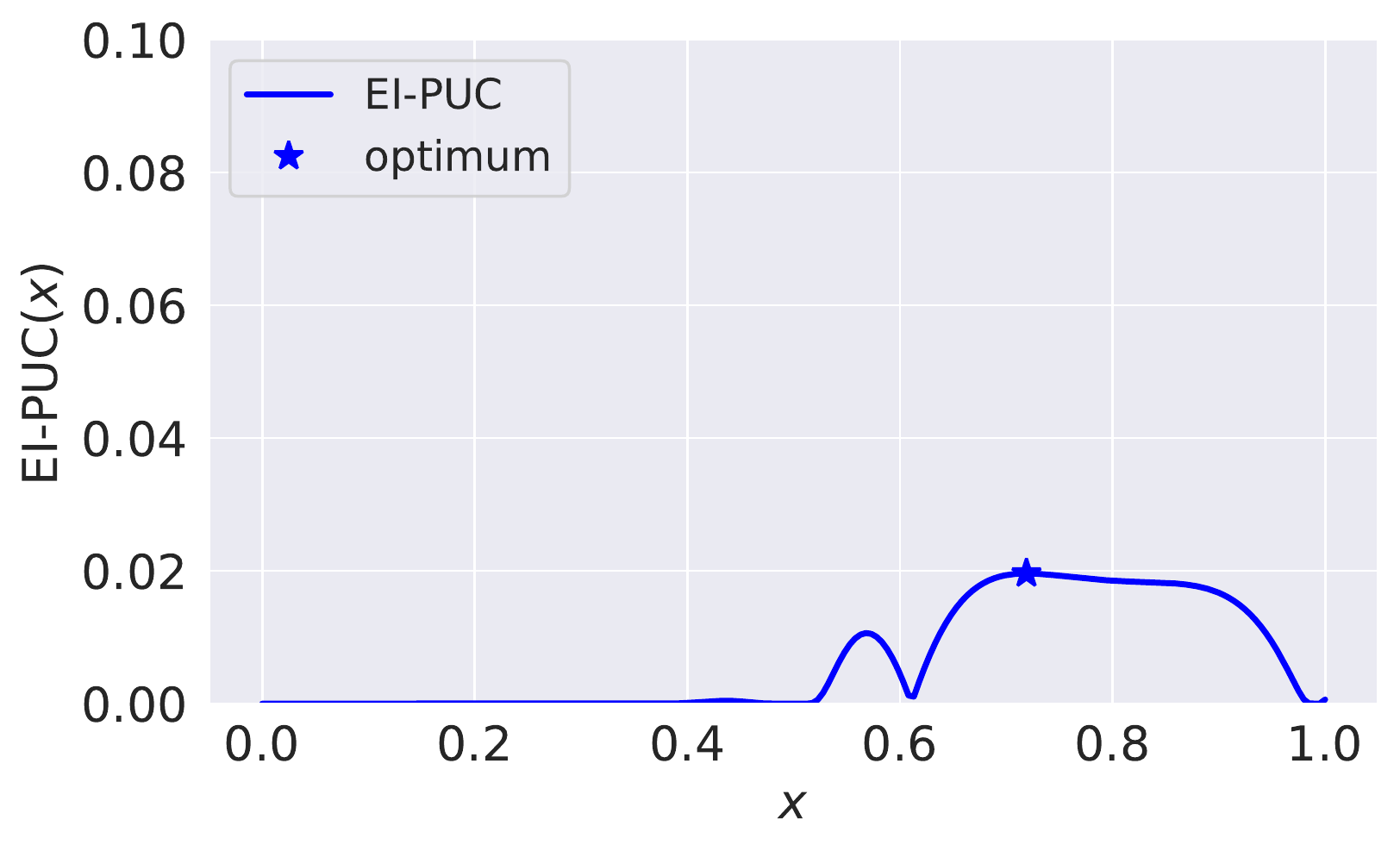}
  \phantom{\includegraphics[width=0.33\textwidth]{figures/animation/acqf5_eipuc.pdf}}
 \end{tabular}
}\\
\subfloat[B-MS-EI]{%
\begin{tabular}[b]{c}
 \includegraphics[width=0.33\textwidth]{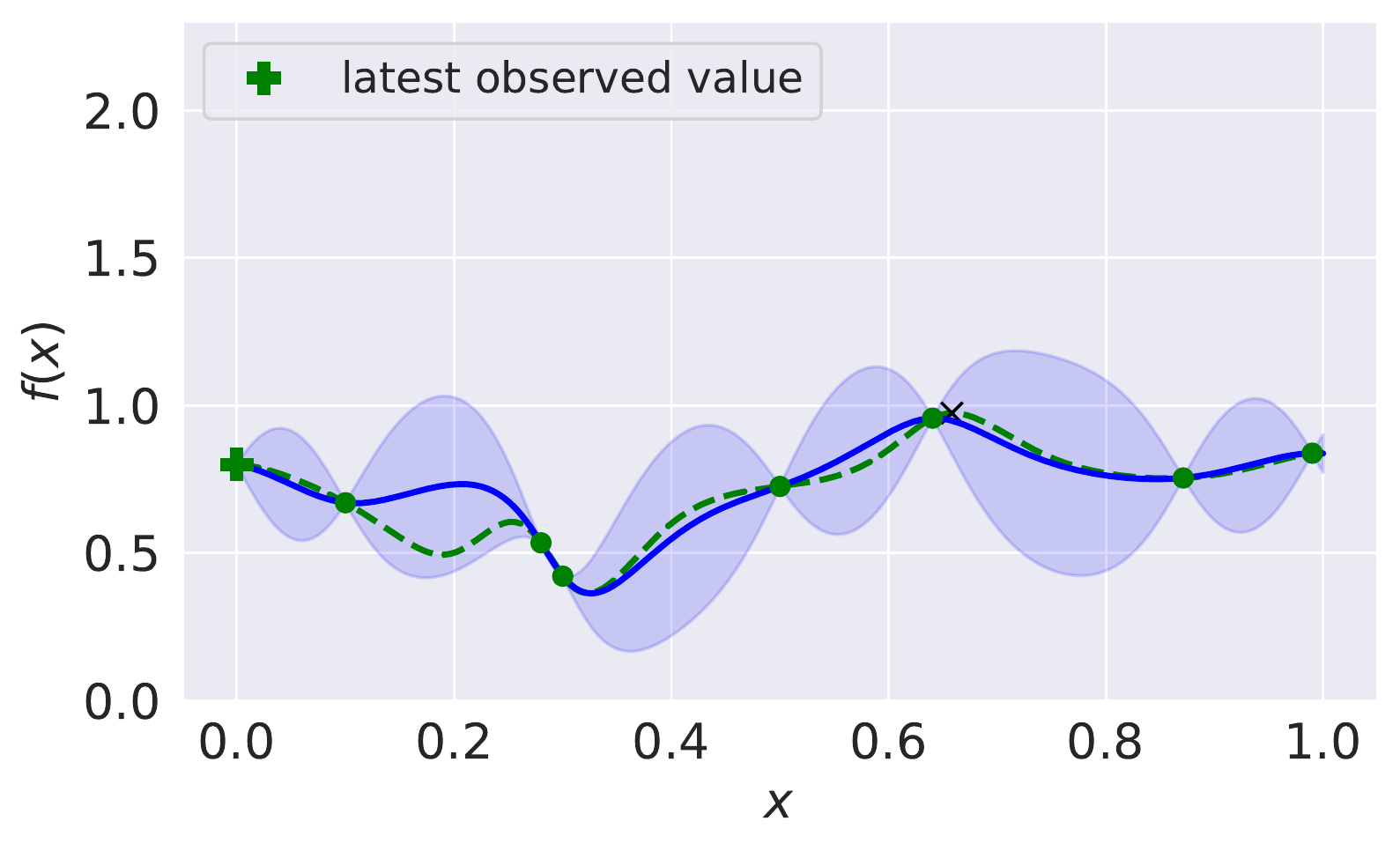}
  \phantom{\includegraphics[width=0.33\textwidth]{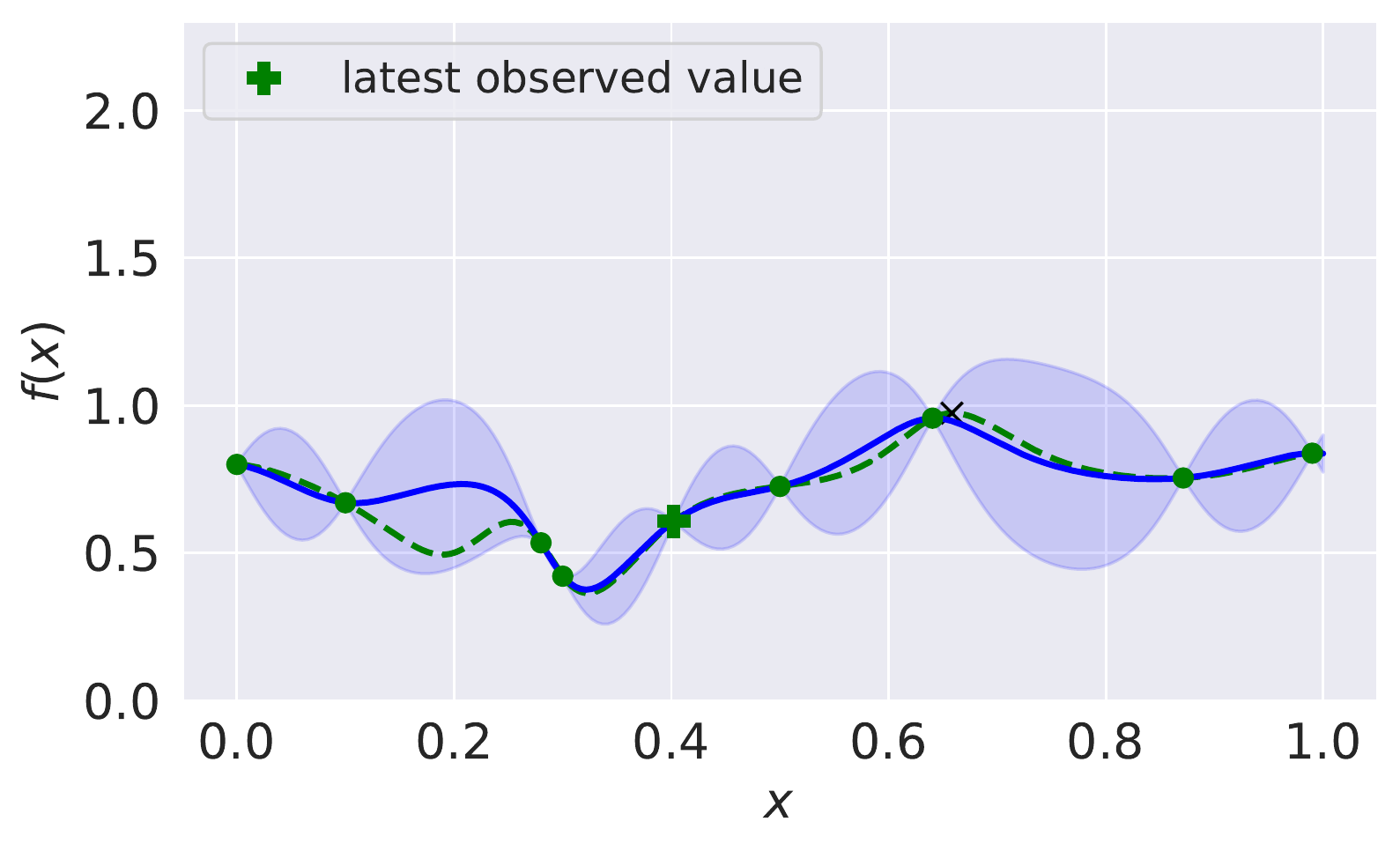}}
  \phantom{\includegraphics[width=0.33\textwidth]{figures/animation/obj4_bmsei.pdf}}\\
    \includegraphics[width=0.33\textwidth]{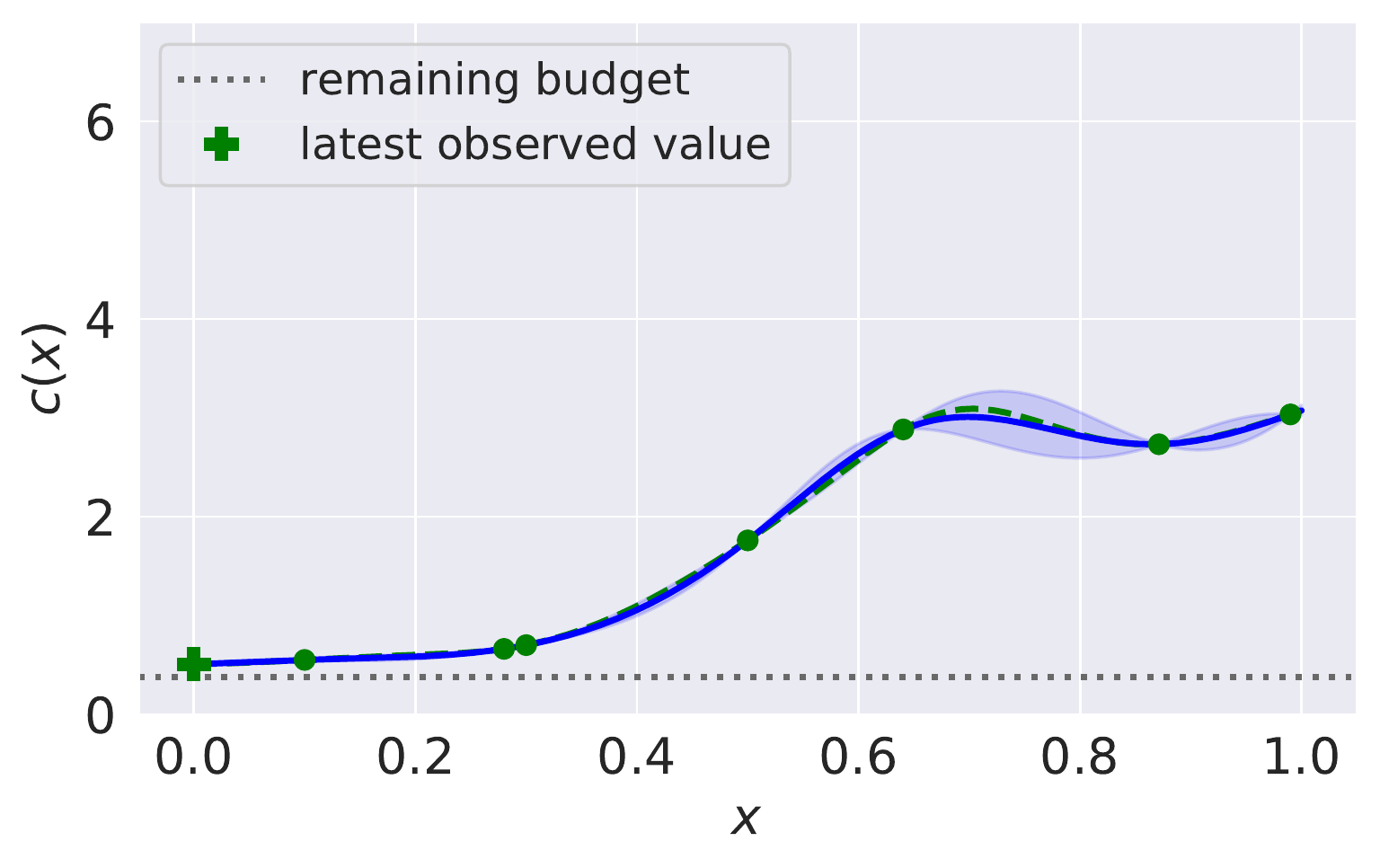}
  \phantom{\includegraphics[width=0.33\textwidth]{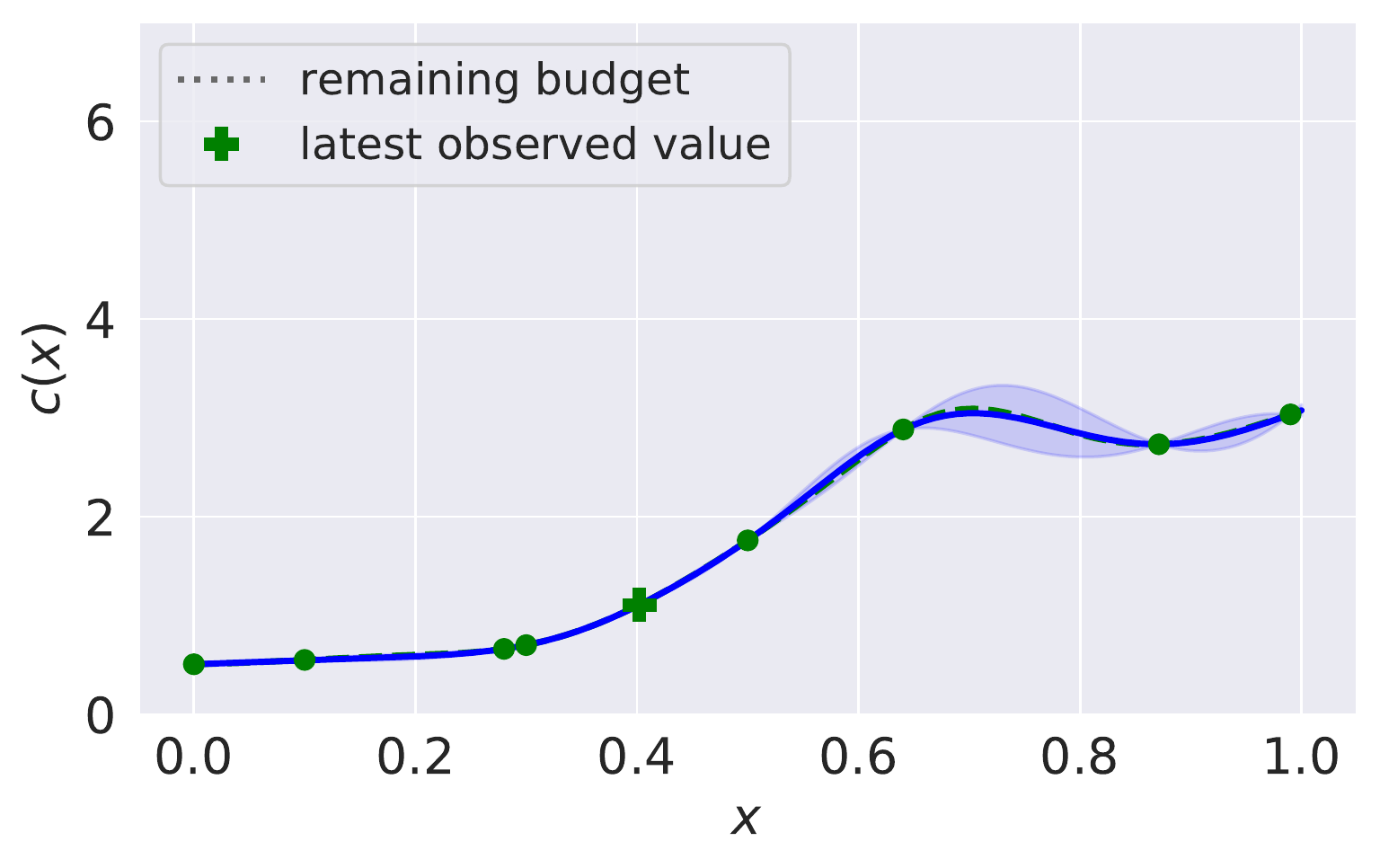}}
  \phantom{\includegraphics[width=0.33\textwidth]{figures/animation/cost4_bmsei.pdf}}\\
  \includegraphics[width=0.33\textwidth]{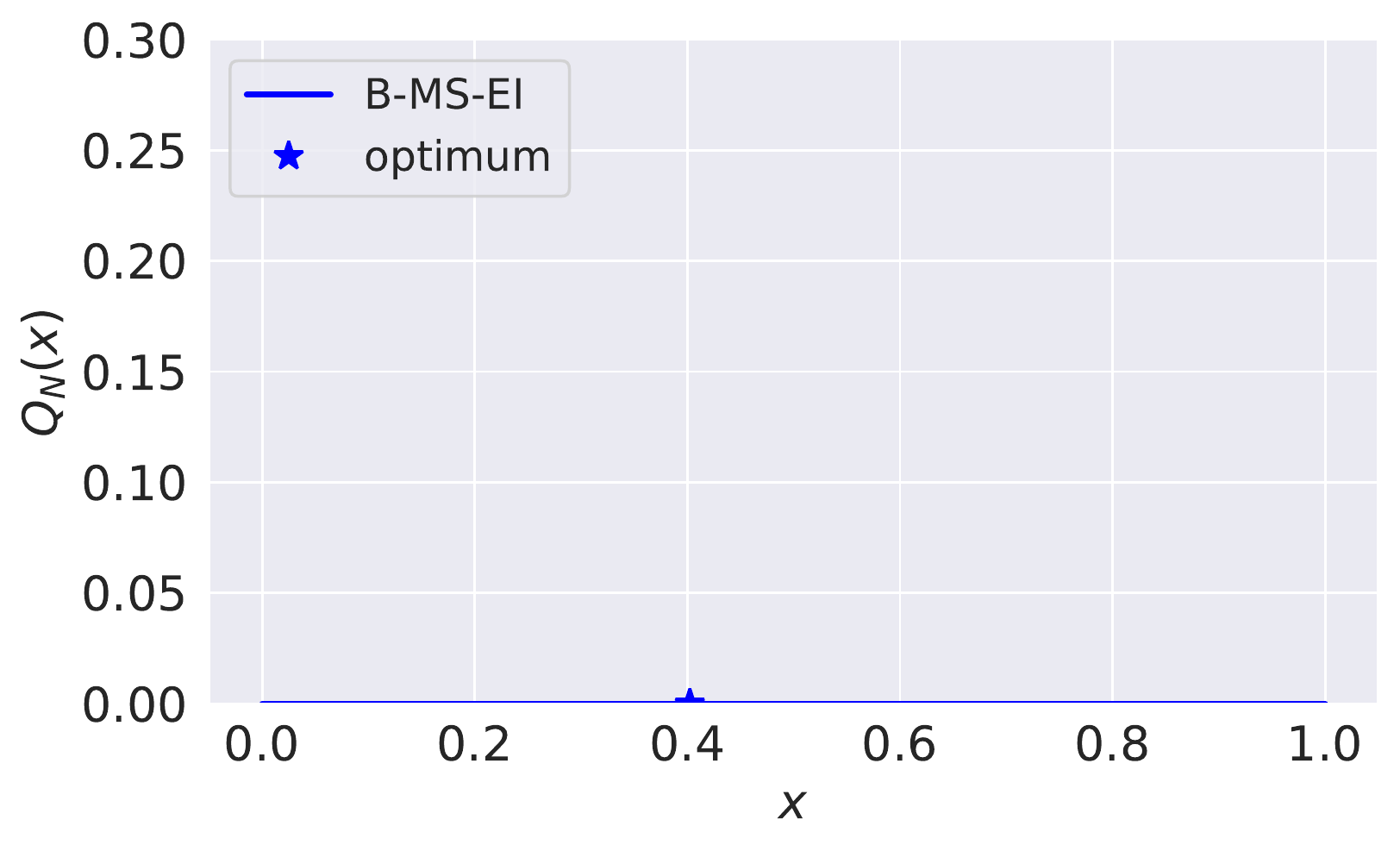}
  \phantom{\includegraphics[width=0.33\textwidth]{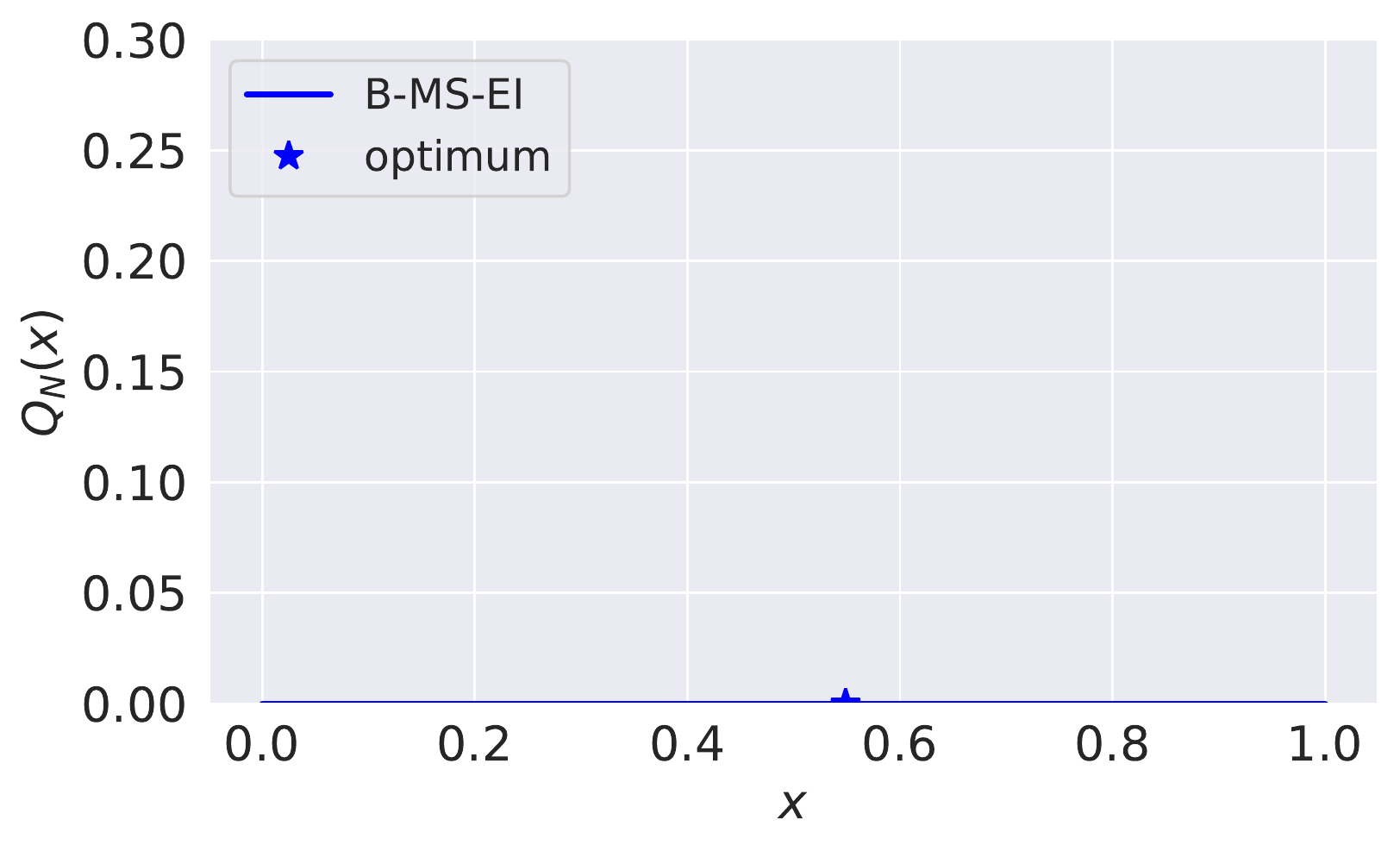}}
  \phantom{\includegraphics[width=0.33\textwidth]{figures/animation/acqf5_bmsei.pdf}}
\end{tabular}
}
\caption{Additional plots showing the evaluations performed within budget by EI-PUC and B-MS-EI. Subsequent evaluations are not plotted because the budget is exhausted after their completion and thus are not taken into account to report performance (i.e., the cost of the next point suggested exceeds the remaining budget). Note that EI-PUC performs two additional evaluations within budget, whereas B-MS-EI performs only one additional evaluation. B-MS-EI achieves a better final performance within budget than the one achieved by EI-PUC.
\label{fig:animation_supp}}
\end{figure}

The BoTorch package is publicly available under the MIT License. The datasets obtained from the HPOLib and HPOLib 1.5 libraries are publicly available under the GNU General Public License. The source code of the robot pushing problem is publicly available under the MIT License.

\section{Budgets Analysis}
\label{supp:budget}
To understand the effect of the budget on the performance of B-MS-EI, we evaluate its performance in three of our test problems (Dropwave, LDA, and Robot Pushing) using half of the original budget. We also report the performance of EI-PUC-CC, which is the only other benchmark method that is budget-aware. For comparison, we also include the performance of both algorithms under the original budget. The results of this experiment are shown in Figure~\ref{fig:experiments_budgets}. Remarkably,  B-MS-EI seems to benefit from knowing the budget constraint in advance. This does not seem to be the case for EI-PUC-CC, however. 

\begin{figure}[H]
  \centering
 \includegraphics[width=0.24\textwidth]{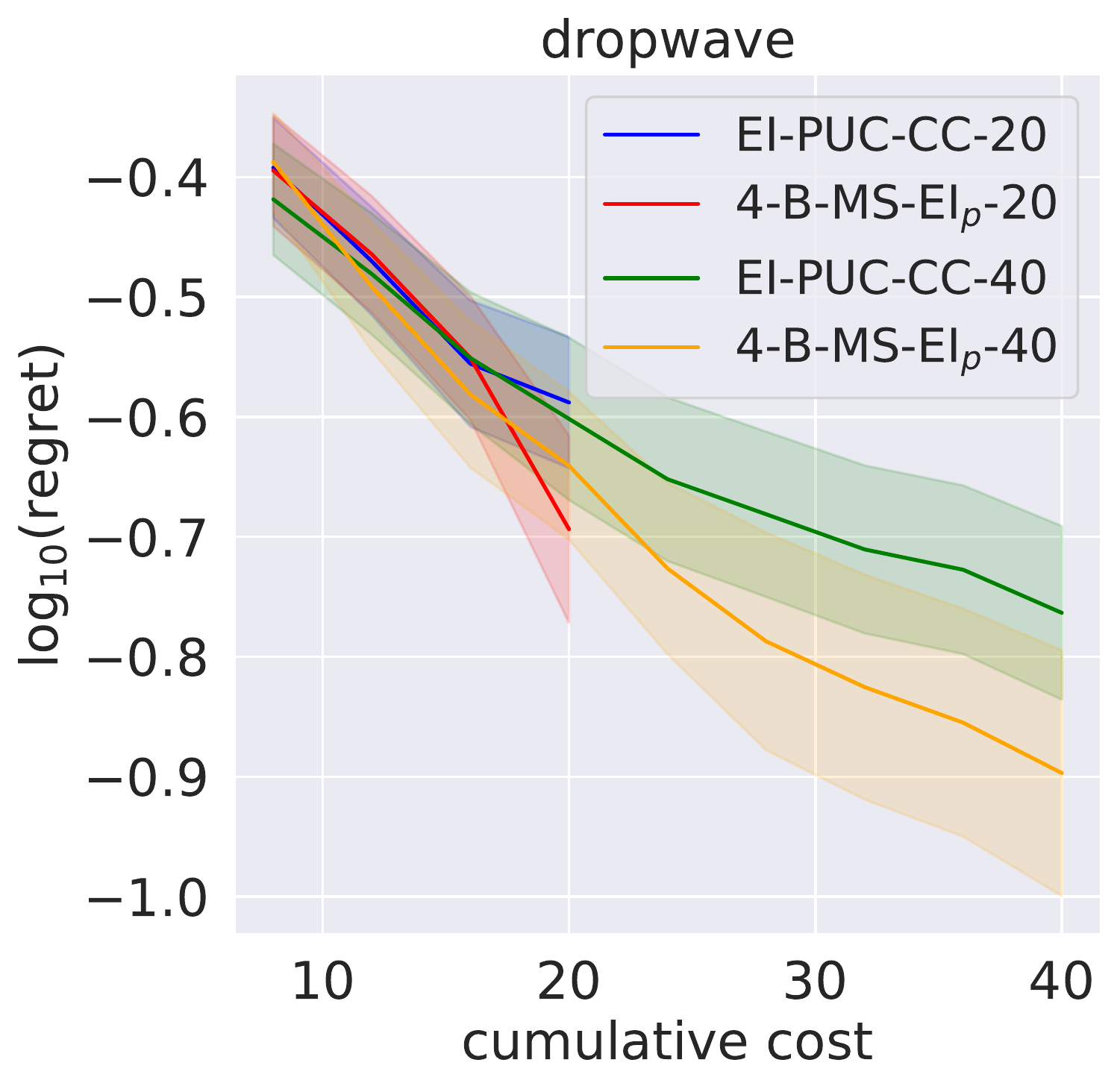}
 \includegraphics[width=0.235\textwidth]{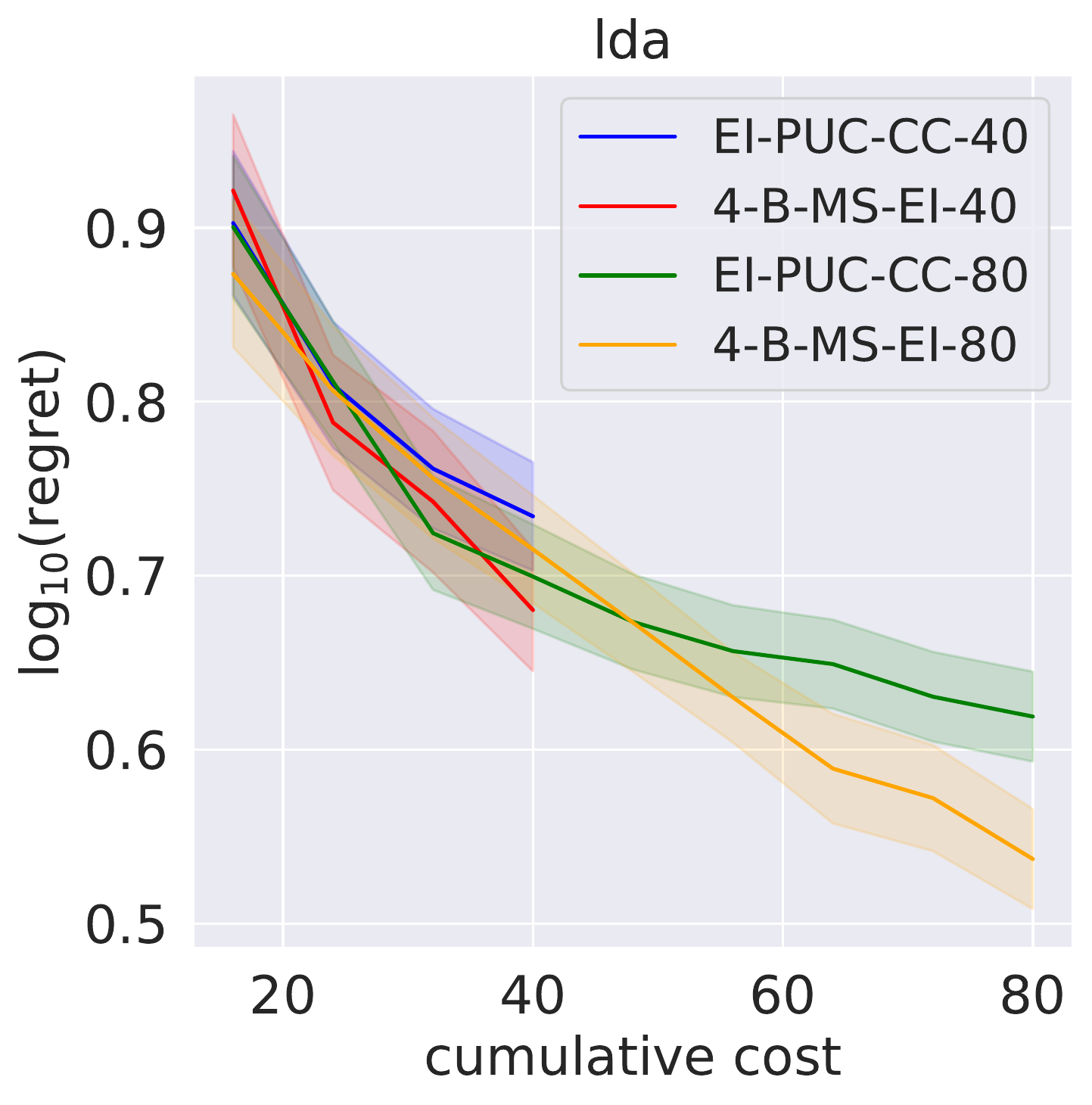}
 \includegraphics[width=0.252\textwidth]{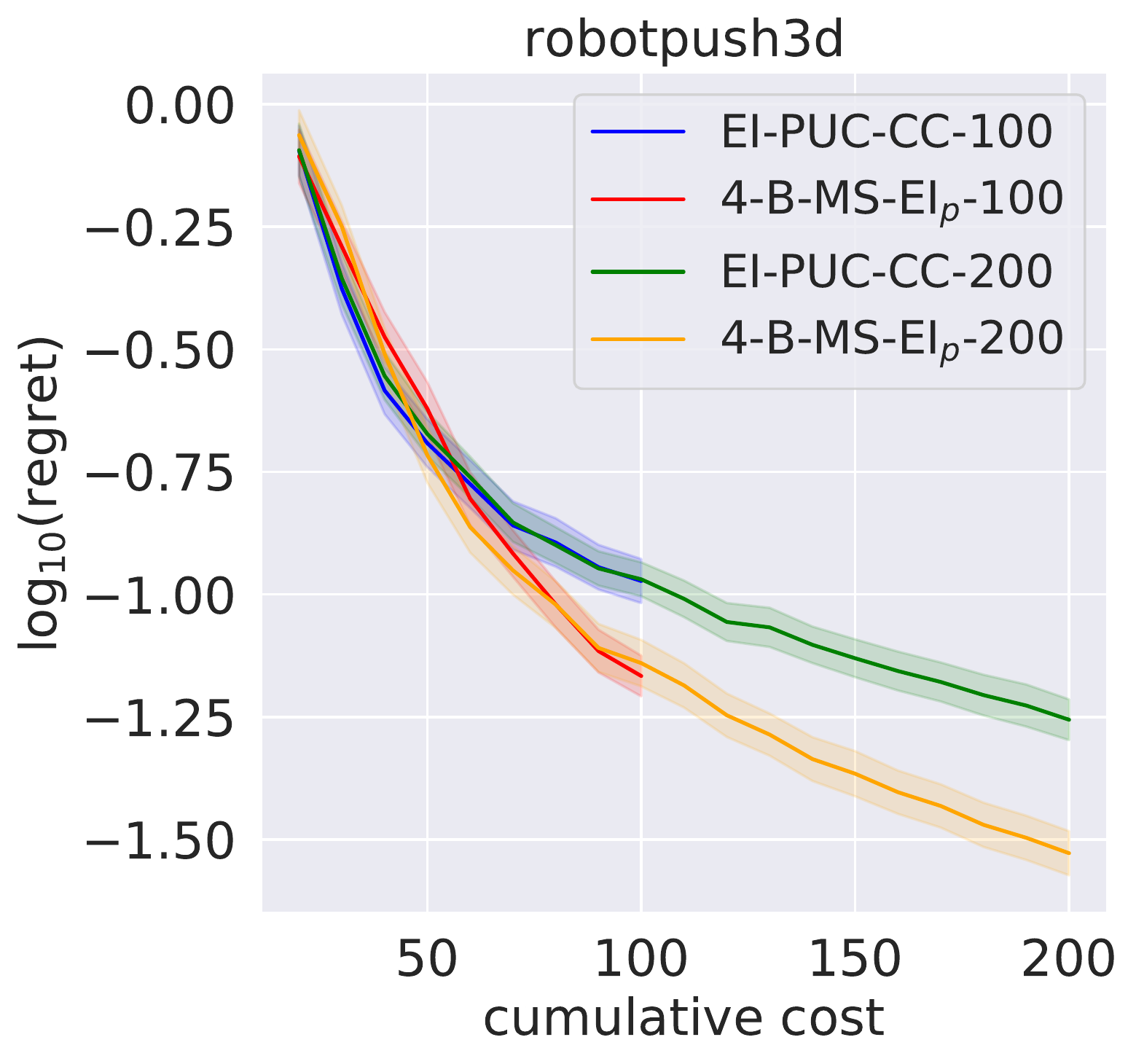}
  \caption{{Performance of B-MS-EI (B-MS-EI$_p$ for Dropwave and Robot Pushing) and EI-PUC-CC in three of our test problems under two different budgets. In contrast with EI-PUC-CC, B-MS-EI seems to benefit from knowing the budget constraint in advance.} \label{fig:experiments_budgets}}
\end{figure}

\end{document}